\theoremstyle{plain}
\newtheorem{corollary}{Corollary}
\newtheorem{lemma}{Lemma}
\newtheorem{problem}{Problem}
\newtheorem{proposition}[lemma]{Proposition}
\newtheorem{theorem}[corollary]{Theorem}
\theoremstyle{remark}
\newtheorem{definition}{Definition}
\newcommand{\note}[1]{\textcolor{red}{#1}}
\newcommand*{\charles}[1]{\textcolor{blue}{[\textbf{CA:} #1]}}
\renewcommand{\note}[1]{}
\renewcommand*{\charles}[1]{}
\begin{document}

\begin{frontmatter}

\title{Mode Estimation with Partial Feedback}
\runtitle{Mode Estimation with Partial Feedback}

\begin{aug}
\def\thefootnote{*}\footnotetext{These authors contributed equally to this work.}

\author[A]{\fnms{Charles}~\snm{Arnal\textsuperscript{*}}},
\author[B]{\fnms{Vivien}~\snm{Cabannes\textsuperscript{*}}}
\and
\author[C]{\fnms{Vianney}~\snm{Perchet}}

\address[A]{Datashape, Inria, Saclay, France}
\address[B]{Fundamental Artificial Intelligence Research (FAIR), Meta AI, New York, USA}
\address[C]{Center for Research in Economics and Statistics (CREST), ENSAE, Palaiseau, France}


\end{aug}

\begin{abstract}
    \charles{rendre les notes invisibles}
The combination of lightly supervised pre-training and online fine-tuning has played a key role in recent AI developments.
These new learning pipelines call for new theoretical frameworks.
In this paper, we formalize core aspects of weakly supervised and active learning with a simple problem: the estimation of the mode of a distribution using partial feedback.
We show\note{case} how entropy coding allows for optimal information acquisition from partial feedback, develop coarse sufficient statistics for mode identification, and adapt bandit algorithms to our new setting.
Finally, we combine those contributions into a statistically and computationally efficient solution to our problem.

\end{abstract}

\begin{keyword}[class=MSC]
\kwd[Primary ]{62L05}
\kwd{62B86}
\kwd[; secondary ]{62D10}
\kwd{62B10}
\end{keyword}


\begin{keyword}
\kwd{Active Learning}
\kwd{Partial Feedback}
\kwd{Entropy Coding}
\kwd{Coarse Search}
\kwd{Best Arm Identification}
\end{keyword}

\end{frontmatter}

\tableofcontents


\section{Introduction}

The mode of a distribution is a fundamental concept in statistics, serving as a key identifier for the most likely event to occur.
For instance, identifying modes of conditional distributions is the main task of classification algorithms --classification consists in learning the mapping $f^*(x) = \argmax_y p(y\,|\, x)$ for a joint distribution $p$ over inputs $x$ and classes $y$.
Traditionally, datasets were small enough to fully annotate samples before learning the modes of the underlying distributions.
However, with the increasing scale of machine learning problems, data collection has become a significant part of machine learning pipelines.
This is illustrated by the substantial efforts dedicated to data preprocessing to train foundational AI models \citep[see, e.g.,][]{Openai2023,Touvron2023}.
Moreover, it is foreseeable that future models will incorporate annotation feedback loops.
Indeed, fine-tuning foundational models already relies on various active learning strategies, such as reinforcement learning with human feedback \citep{Ziegler2020} and partial annotations \citep{Zhu2023}.
This makes theories of weakly-supervised, and active learning highly relevant to the machine learning community.
This paper introduces one of the simplest setups to combine active and weakly-supervised learning.
It focuses on partial annotations, and searches for the best algorithms to identify the mode of a distribution given a budget of annotations. 

\subsection{The Mode Estimation with Partial Feedback Problem}

The task at hand is the identification of the mode of a distribution $p\in\prob\cY$, where $\cY$ is a set of $m$ elements or classes, through partial feedback.
Let us denote as $\cY = \brace{y_1, \ldots, y_m}$ this set of classes, and assume that the $m$ elements $y_1, \ldots, y_m$ are indexed by decreasing probability, i.e. $p(y_i) \geq p(y_{i+1})$ for any $i < m$.
The goal is to find the most probable value $y_1$ of $\cY$, which we assume to be unique for simplicity, i.e.
\begin{equation}
    \label{eq:obj}
    y_1 = \argmax_{y\in\cY} \bP_{Y\sim p}(Y=y).
\end{equation}
To estimate the mode, we assume the existence of independent samples $(Y_j)$ distributed according to~$p$. 
However, the practitioner does not directly observe the samples; instead, they can sequentially acquire weak information on them.
Formally, at each time $t\in\bN$, the practitioner selects an index $j_t \in \bN$ and a set of potential labels $S_t \subset \cY$, and ask whether the sample $Y_{j_t}$ belongs to $S_t$ or not, leading to the observation of the binary variable
\begin{equation}
    \label{eq:question}
    \ind{Y_{j_t} \in S_t}.
\end{equation}
We call the process of checking whether $Y_{j_t} \in S_t$ a \textit{query}, leading to the following problem description:
\begin{problem}
    \label{Problem}
    Design a sequence of queries $(\ind{Y_{j_t} \in S_t})$ to efficiently estimate the mode of $p$, where each query $\ind{Y_{j_t} \in S_t}$ can depend on the previous observations of $(\ind{Y_{j_s} \in S_s})_{s<t}$.
\end{problem}
The overarching objective is to design and analyze efficient algorithms that output a good estimate of the mode using a minimal number of queries. 
Although real-world applications often come with peculiarities, Problem~\ref{Problem} is generic enough that its resolution should shed light on a large variety of online learning tasks with partial feedback.

Among possible variants of Problem~\ref{Problem}, one could forbid the experimenter from asking more than one query per sample, i.e. forcing $j_t = t$, add a contextual variable $X\in \cX$ that conditions the distribution of $\ind{Y\in S}$, assume that some queries are cheaper to make than others, or consider cases where random noise affects the observations $\ind{Y\in S}$.
Rather than the mode $y_1$ of the distribution $p$, we may want to identify instead one or all the classes $y$ such that $p(y) \geq p(y_1)-\epsilon$ for some $\epsilon >0$.
Finally, more structure could be added to the set $\cY$, rather than let it be a collection of unrelated classes without any meaningful interactions.
In particular, the queried sets $S$ could be restricted to belong to some predefined collection of sets $\cS$ --e.g., each class in $\cY$ represents an animal species, and you can ask whether the animal $Y$ is a feline, but not whether it belongs to $\{$cat, wolf, common snapping turtle$\}$.
More generally, the observation of $\ind{Y\in S}$ could be replaced by a random variable $F(Y,S)$ for some function $F$, so that $F(Y,S)$ need not be discrete or equal to $\sum_{y\in S}F(Y,S)$.

Problem~\ref{Problem} and its variations appear in many contexts.
A natural illustration would be a content-providing service, such as a social network app with a scrolling-centric interface, that tries to identify which type of content the user is most likely to like from a collection $\cY$ of possible topics. The app shows a batch of posts or videos to the user (which corresponds to the choice of a set $S$), and receives some measure of user satisfaction in return (such as their scrolling time); from this information, the app must design future batches of content and find the user's favourite topics.
Another natural example comes from advertising: a hotel can compose online ads using various combinations of pictures from a set $\cY$ of photos of a given room. Each combination corresponds to a set $S$, and the variable $\ind{Y\in S}$ is equal to $1$ when online visitor $Y$ clicks on the ad. The hotel tries to identify which pictures maximize the chances of the ad being clicked.
Similar situations can also arise in experimental sciences: a biologist tries to understand which genes $y$ in a collection $\cY$ of genes considered have the strongest effect on a certain property. To that end, they can test whether activating a set $S$ of genes results in the expression of the property.

\subsection{Related Work}

In terms of related problems, Problem \ref{Problem} bears resemblances to the active labeling framework introduced by \citet{Cabannes2022}.
This framework was motivated by dynamic pricing \citep{Cesabianchi2019}, active ranking \citep{Jamieson2011,Braverman2019}, and hierarchical classification \citep{Cesa-Bianchi2006,Gangaputra2006}. 
Additionally, a clear connection can be drawn with the well-studied task of best-arm identification in multi-armed bandit settings \citep{Bubeck2010},\footnote{%
    The connection is apparent if we force $i_t = t$ and $S_t$ to be a singleton for every $t\in\bN$.
} particularly in the context of combinatorial bandits \citep{Chen2016} and even more precisely of transductive linear bandits \citep{Fiez2019}.

In terms of techniques, our algorithms draw their inspirations from the entropic coding schemes of \citet{Huffman1952} and \citet{Vitter1987}, the elimination algorithm of \citet{Evendar2006}, together with the doubling trick of \citet{Auer1995}.
Finally, we notice that the expectation-maximum (EM) algorithm of \citet{Dempster1977} was notably motivated by the estimation of probability from partial observations, although we did not pursue  their approximate Bayesian approach.

Philosophically, our approach was motivated by the fact that Kolmogorov capacity, hence entropy coding, lies at the heart of statistical learning theory (see, e.g., \citet{Cucker2002}, as well as \citet{Grunwald2007}), suggesting that a contextual version of our setup might provide insightful perspectives on active learning.

\subsection{Performance Metrics}
\label{subsec:performance-metrics}

Various performance metrics can be applied to algorithms that output a guess $\hat y$ of the mode of $p$ after a certain number of queries.
We will focus on the probability of error, i.e., on the minimal $\delta$ such that
\begin{equation}
    \label{eq:zero-delta}
    \E[\ind{\hat y\neq y_1}] = \bP(\hat y \neq y_1) \leq \delta,
\end{equation}
as a function of the number of queries $T$. Here, the randomness is inherited from that of the samples $(Y_j)$ and of the algorithm.
Equation \eqref{eq:zero-delta} can be seen as the ``risk'' associated with the ``zero-one loss'', which is used in classification problems.
Other reasonable metrics include the expected difference $\E[p(y_1) - p(\hat y)]$, or the probability of $\epsilon$-approximate success $\bP(p(\hat y) \geq p(y_1) -\epsilon)$.
Despite those variations, the principles behind our algorithms can be readily adapted to different metrics.

Among additional nuances, in some contexts, a user might fix in advance a ``confidence'' level~$\delta$, and be interested in algorithms that minimize the (expected) number of queries $T$ to achieve it.
Reciprocally, one might fix a ``budget'' of queries $T$, and design algorithms that minimize the probability $\delta$ of false prediction using those queries.
These settings typically lead to equations of the shape
\begin{equation}
    \label{eq:alpha}
    \E[T] \leq \ln(C_1 / \delta) \alpha_1, \qquad\text{or respectively}\qquad \delta \leq C_2 \exp(-T / \alpha_2),
\end{equation}
where the constants $\alpha_i$ and $C_i$ depend on the unknown distribution $p$.
Once again, algorithm design principles are often agnostic to the settings differentiation, usually allowing the conversion of one type of bound into the other.
To offer quantitative comparisons, we focus on the joint asymptotic behavior with respect to $\delta$ and $T$, which is determined by the constants $\alpha$, removing $C$ from the picture.
This provides a simple metric: the smaller $\alpha$, the better the algorithm.

Note that our proofs also yield guarantees on the sample complexity of our algorithms; however, the expected number of queries required is a more relevant quantity, as it captures both the sample efficiency of an algorithm and the efficiency with which it extracts the necessary information from each sample, which is a key aspect of our setting. 

\subsection{Summary of Contributions}

\begin{table}[t]
    \centering
    \begin{tabular}{ccc}
        \begin{tabular}{|c|}\hline Exhaustive Search Algorithm~\ref{algo:exhaustive}        \\ \hline $\Delta_2^{-2} + \Delta_2^{-2}\log_2(m)$ \\ \hline\end{tabular}
          & & \\ 
        $\downarrow$ & & \\ 
        \begin{tabular}{|c|}\hline Adaptive Search Algorithm~\ref{algo:adaptive-gen}           \\ \hline $\Delta_2^{-2} + \Delta_2^{-2}\paren{\sum_{i\geq 1}p(y_i)\abs{\log_2 p(y_i)}}$ \\ \hline\end{tabular}
          & $\longrightarrow$ &  
        \begin{tabular}{|c|}\hline Elimination Algorithm~\ref{algo:elimination}                  \\ \hline $\Delta_2^{-2} + \sum_{i\geq 1}p(y_i)\Delta_i^{-2} \abs{\log_2 p(y_i)}$ \\ \hline\end{tabular}
          \\ 
        $\downarrow$ & & $\downarrow$ \\ 
        \begin{tabular}{|c|}\hline Truncated Search Algorithm~\ref{algo:truncated} \\ \hline $\Delta_2^{-2} + \Delta_2^{-2}\abs{\log_2 p(y_1)}$ \\ \hline\end{tabular}
          & $\longrightarrow$ &  
        \begin{tabular}{|c|}\hline Set Elimination Algorithm~\ref{algo:set-elimination}                     \\ \hline $\Delta_2^{-2} + \paren{\sum_{i\geq 1}p(y_i) \Delta_i^{-2}} \abs{\log_2 p(y_1)}$ \\ \hline\end{tabular}
          \\
    \end{tabular}
    \caption{
    Summary of the coefficient $\alpha$ as per~\eqref{eq:alpha} found for different methods, up to universal constants when $p(y_1)$ is bounded away from $1$, and with $\Delta_1 = \Delta_2$.
    The arrows indicate improvements.
    }
    \label{tab:summary}
\end{table}

Our first contribution is to {\em introduce a new problem, Problem~\ref{Problem}, which captures key aspects of most online learning tasks with partial feedback.}
It is simple enough to allow for rigorous theory to be developed, yet naturally generalizes to match practical scenarios from the real world.
We introduce several important ideas to efficiently solve Problem~\ref{Problem}, which lead to increasingly refined algorithms, whose performances can be compared in terms of the coefficient $\alpha$ introduced in \eqref{eq:alpha}.
An outline is provided in Table~\ref{tab:summary}, up to universal multiplicative constants.
Though each algorithm is discussed in detail later in the article, let us introduce them here.

The most naive one, the {\em Exhaustive Search} Algorithm~\ref{algo:exhaustive}, consists in fully identifying each sample $Y_j$ through binary search. At any time $t$, it outputs the most frequent class among the identified samples as an estimation of the true mode.
For a tolerated probability of error $\delta\in (0,1]$ and a class $y_i \in \cY$, a number of samples proportional to $\Delta_{i}^{-2}\ln(1/\delta)$ is needed to correctly identify $y_1$ as the mode among $\{y_1,y_i\}$, where
 \begin{equation}
     \label{eq:delta-*}
     \Delta_{i}^2 :=  -\ln\parend[\Big]{1 - (\sqrt{p(y_1)}-\sqrt{p(y_i)})^2}.
 \end{equation}
Unsurprisingly, $\Delta_i^{-2}$ is increasing with $p(y_i) \in [0, p(y_1))$ and grows infinitely large when $p(y_i)$ get closer to $p(y_1)$.
The probability of error of Algorithm~\ref{algo:exhaustive} is dominated by the probability of mistakenly picking the second most likely candidate $y_2$ as the mode, leading to the asymptotic performance $\alpha = \Delta_2^{-2} \ceil{\log_2(m)}$, as detailed in Section~\ref{sec:exhaustive}.
It can be improved by identifying new samples with an entropy-based dichotomic search that uses a learned empirical distribution $\hat p$ on $\cY$, which yields Algorithm~\ref{algo:adaptive}. Asymptotically, it replaces the $\log_2(m)$  queries per sample of Algorithm~\ref{algo:exhaustive} by an expected number of queries equal to the entropy $H(p):=\sum_{i\geq 1}p(y_i)\abs{\log_2(p(y_i))} \leq \log_2(m)$ of $p$, plus one query per sample due to some boundary effects, as reflected in Table~\ref{tab:summary} and as explained in Subsection~\ref{subsec:entropy_search}.
Showing how {\em online learning with partial feedback benefits from entropy coding} is our second contribution.

There are two disjoint ways to further improve Algorithm~\ref{algo:adaptive}.
The first one exploits the main characteristic of our problem: the possibility of asking for partial information on samples at a lower cost than for complete identification.
In particular, when searching for the mode by identifying samples with entropy-based techniques using Huffman trees, one can stop the identification procedure at roughly the depth of the mode in the tree, as classes that are deeper in the tree are unlikely candidates.
This idea leads to the design of the {\em Truncated Search} Algorithm~\ref{algo:truncated}, detailed in Section~\ref{sec:truncated}.
It reduces the asymptotic number of queries per sample from a constant plus the average depth of leaves in a Huffman tree $H(p)$ to a constant plus the minimal depth $\abs{\log_2(p(y_1))}$, as seen in Table~\ref{tab:summary}.
The second amelioration comes from the adaptation of bandit algorithms to our setting.
We develop in Section~\ref{sec:elim} the {\em Elimination} Algorithm~\ref{algo:elimination}, that discards mode candidates as soon as they seem unlikely to be the true mode.
The class  $y_i \in \cY$ can be eliminated after roughly $\Delta_i^{-2}\ln(1/\delta)$ samples, leading to an asymptotic performance $\alpha \simeq \Delta_2^{-2} + \sum_{i\geq 1}\Delta_i^{-2}\, p(y_i)\abs{\log_2(p(y_i))}$, with the abuse of notation $\Delta_1^{-2} := \Delta_2^{-2}$.

While Algorithms~\ref{algo:truncated} and~\ref{algo:elimination} are both improvements over the Adaptive Exhaustive Search Algorithm~\ref{algo:adaptive}, neither is strictly better than the other.
When $p(y_1)-p(y_2)$ is very small compared to the other gaps  $p(y_1)-p(y_i)$, the advantage goes to the Elimination Algorithm,\footnote{%
    Consider the distribution $p(y_1) = 2/m$, $p(y_2) = 2/m-1/m^2$ and $p(y) = (1-p(y_1)-p(y_2))/(m-2)$ for all other $y\in\cY = \{y_1,\ldots,y_m\}$. Then $\alpha_E = \sum_{i\geq 1}\Delta_{i, *}^{-2}\, p(y_i)\abs{\log_2(p(y_i))} = O(\alpha_{TS}/m)$ as $m\rightarrow\infty$.
}
while it goes to the Truncated Search Algorithm when there are many $y\in \cY$ with small mass $p(y)\ll p(y_1)$ and $p(y_1)-p(y_2)$ is not too small.\footnote{%
    Consider the distribution $p(y_1) = 1/2$ and $p(y) = 1/(2(m-1))$ for all other $y\in\cY = \{y_1,\ldots,y_m\}$. Then $\alpha_E \rightarrow \infty$ while $\alpha_{TS} = \sum_{i\geq 1}\Delta_2^{-2}\, p(y_i)\abs{\log_2(p(y_1))} = O(1)$ as $m\rightarrow\infty$.
}
We combine the ideas of each algorithm and get the best of both worlds with the {\em Set Elimination} Algorithm~\ref{algo:set-elimination}, presented in Section~\ref{sec:set-elim}, which can be understood either as grouping together low mass classes before applying an elimination procedure to the resulting partitions, or as refining the truncated search procedure by taking into account confidence intervals.
This is reflected in the corresponding coefficient $\alpha = \Delta_2^{-2} + \sum_{i \geq 1} \Delta_i^{-2}\, p(y_i)\abs{\log_2(p(y_1))}$,  where the number of samples $\Delta_i^{-2}$ is as for the Elimination Algorithm, while the expected number of queries needed for each sample $ 1 +\abs{\log_2(p(y_1))}$ comes from the Truncated Search Algorithm, resulting in the best asymptotic performance among the proposed algorithms. 
\emph{This sophisticated solution to Problem~\ref{Problem}, together with its implementation available online, is our final contribution}.

In conclusion, our main contributions are summarized as follows.
\begin{enumerate}
    \item Introducing the problem of mode estimation with partial feedback, Problem~\ref{Problem}.
    \item Unveiling links between adaptive entropy coding and active learning.
    \item Combining adaptive entropy coding, coarse search procedures and bandits-inspired principles into Algorithm~\ref{algo:set-elimination}, an intuitive yet efficient solution to Problem~\ref{Problem}.
\end{enumerate}
Last but not least, we provide a code base to help researchers advance our knowledge of weakly supervised online learning, available at \url{www.github.com/VivienCabannes/mepf}.

\section{The Empirical Mode Estimator}
\label{sec:mode}

In the first part of this article, we will design mode estimation algorithms that {\em fully identify each sample} $Y_j$ one after the other.
Given the identification of $n$ samples, those algorithms {\em estimate the mode of $p$ as the empirical mode among the $n$ samples,}
\begin{equation}
    \label{eq:mode-emp}
    \hat y_n := \argmax_{y\in\cY} \sum_{j\in[n]} \ind{Y_j = y},
\end{equation}
with ties broken arbitrarily.
A tight characterization of the performance of the estimator~\eqref{eq:mode-emp} is offered by the following theorem.

\begin{theorem}[Empirical mode performance]
    \label{thm:empirical-mode}
    Let $(Y_j)_{j\in[n]}$ be $n$ independent variables sampled according to $p\in\prob{\cY}$. Then the probability of error of~\eqref{eq:mode-emp} satisfies
    \begin{equation}
        \label{eq:Sanov}
        \exp\paren{-n\Delta_2^2 - m\ln(n+1) - c_p} \leq \bP(\hat y_n \neq y_1) \leq \exp\paren{-n\Delta_2^2}.
    \end{equation}
    where $\Delta_2$ is defined in Eq. \eqref{eq:delta-*}, and $c_p$ is some constant that depends on $p$.
\end{theorem}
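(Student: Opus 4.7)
The plan is to handle the two sides separately: the upper bound via Chernoff together with a union bound, and the lower bound via the method of types.

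For the upper bound, I start with
\[\bP(\hat y_n \neq y_1) \leq \sum_{i=2}^m \bP(N_i \geq N_1),\]
where $N_i = \sum_{j\in[n]}\ind{Y_j = y_i}$. Chernoff on the centered variables $Z_j = \ind{Y_j = y_i} - \ind{Y_j = y_1}$ gives $\bP(N_i \geq N_1) \leq \inf_{\lambda>0}\E[e^{\lambda Z_1}]^n$ with MGF $\E[e^{\lambda Z_1}] = p(y_1)e^{-\lambda}+p(y_i)e^\lambda+(1-p(y_1)-p(y_i))$. The optimal tilt $e^\lambda=\sqrt{p(y_1)/p(y_i)}$ evaluates the MGF to $1-(\sqrt{p(y_1)}-\sqrt{p(y_i)})^2 = e^{-\Delta_i^2}$. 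Since $p(y_i)$ is non-increasing in $i$, the quantity $\Delta_i^2$ is non-decreasing in $i$, so every pairwise term is bounded by $\exp(-n\Delta_2^2)$, and the announced rate follows (with a prefactor in $m$ that can be absorbed into the exponential).

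For the lower bound, I use the method of types: since $\{\hat y_n \neq y_1\}$ contains the type class of any empirical distribution $\hat q$ with $\hat q(y_2) \geq \hat q(y_1)$ (modulo tie-breaking, handled by a one-unit strict-inequality perturbation), the standard type-class estimate gives
\[\bP(\hat y_n \neq y_1) \geq (n+1)^{-m} \exp(-n \mathrm{KL}(\hat q \| p)).\]
I then solve $\min_{q:\, q(y_1)=q(y_2)} \mathrm{KL}(q \| p)$ with Lagrange multipliers: the minimizer is $q^*(y_1) = q^*(y_2) = \sqrt{p(y_1)p(y_2)}/Z$ and $q^*(y_i) = p(y_i)/Z$ for $i \geq 3$, where $Z = 1-(\sqrt{p(y_1)}-\sqrt{p(y_2)})^2$; a direct calculation then shows $\mathrm{KL}(q^* \| p) = \Delta_2^2$.

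The main technical obstacle is to replace the continuous minimizer $q^*$ by an integer-valued empirical type $\hat q$ (with $n\hat q(y_i) \in \bN$ and $\hat q(y_2) \geq \hat q(y_1)$) whose KL divergence exceeds $\Delta_2^2$ by at most $O(1/n)$. Since the first-order variation of $\mathrm{KL}(\cdot \| p)$ at $q^*$ vanishes along admissible directions by the KKT conditions, the rounding cost reduces to a second-order term of size $O(1/n) \cdot \sum_i 1/q^*(y_i)$; this $p$-dependent deficit is exactly the additive constant $c_p$ in the exponent, while the $(n+1)^{-m}$ factor produces the $m\ln(n+1)$ term. A secondary subtlety is checking that the KL minimum over the full failure region $\bigcup_{i \geq 2}\{q(y_i) \geq q(y_1)\}$ is attained on the slice $q(y_1) = q(y_2)$, which again follows from the monotonicity of $\Delta_i$ in $i$ used in the upper bound.
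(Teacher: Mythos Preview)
Your lower bound is essentially the paper's argument: both use the method of types to get $\bP(\hat y_n\neq y_1)\geq (n+1)^{-m}\exp\bigl(-n\min_{q\in\cQ_{n,-}}D(q\Vert p)\bigr)$, compute the continuous information projection to be $\Delta_2^2$ (your Lagrange-multiplier derivation of $q^*$ is more direct than the paper's perturbation-based lemma but reaches the identical minimizer and value), and absorb the $O(1/n)$ rounding cost into $c_p$. One quibble: the first-order variation of $D(\cdot\Vert p)$ at $q^*$ does not strictly vanish along the rounding direction unless you also preserve $\hat q(y_1)=\hat q(y_2)$, but the residual term is itself $O(1/n)$, so your conclusion is unaffected.

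Your upper bound, however, does not quite prove the stated inequality. Chernoff plus a union bound gives $(m-1)\exp(-n\Delta_2^2)$ --- the paper derives exactly this in its appendix as a ``sanity check'' --- but the theorem claims the prefactor-free bound $\exp(-n\Delta_2^2)$, and a multiplicative constant $m-1$ cannot be absorbed into the exponential without degrading the exponent. The paper's main proof instead runs the method of types on the upper side as well, obtaining $\bP(\hat y_n\neq y_1)\leq (n+1)^m\exp(-n\Delta_2^2)$, and then invokes a result of Dinwoodie (1992) stating that $n^{-1}\ln\bP(\hat p_n\in\cQ)$ never exceeds its Sanov limit, which removes the polynomial prefactor and yields the clean bound.
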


When accessing $n$ samples, the performance of empirical mode $\hat y_n$ cannot be bested without additional information on $p$, which leads to the following corollary.

\begin{corollary}[Minimax lower bound]
    \label{cor:minimax}
    For any distribution $p_0\in\prob\cY$, and any algorithm $\cA$ that predicts $\hat y:=\cA((Y_j)_{j\in[n]})$ based on $n$ observations $(Y_j)$, there exists a permutation $\sigma\in\fS_m$ such that, when the data are generated according to $p\in\prob\cY$ defined through the formula $p(y) = p_0(\sigma(y))$, the lower bound~\eqref{eq:Sanov} holds for this algorithm. 
\end{corollary}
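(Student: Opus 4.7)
The plan is to reduce the minimax problem to a Bayesian one by placing a uniform prior on label permutations, identify the empirical mode as the Bayes-optimal estimator, and invoke the lower bound of Theorem~\ref{thm:empirical-mode}.

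First, for any algorithm $\cA$ producing $\hat y := \cA(Y_1, \ldots, Y_n)$, the worst case over permutations dominates the uniform average,
\[
    \max_{\sigma\in\fS_m} \bP_{(p_0\circ\sigma)^{\otimes n}}\paren{\hat y \neq y_1(p_0\circ\sigma)}
    \geq \frac{1}{m!}\sum_{\sigma\in\fS_m} \bP_{(p_0\circ\sigma)^{\otimes n}}\paren{\hat y \neq y_1(p_0\circ\sigma)},
\]
where $y_1(p)$ denotes the mode of $p$. This average equals the Bayes risk of $\cA$ against the uniform prior on $\fS_m$, hence is lower bounded by the risk of the Bayes-optimal (MAP) estimator.

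Second, I would identify the MAP estimator. The histogram $\hat H_n = \paren{\sum_{j}\ind{Y_j = y}}_{y\in\cY}$ is sufficient for $\sigma$, with likelihood $\prod_y p_0(\sigma(y))^{\hat H_n(y)}$, so the posterior probability that the mode equals a given $y \in \cY$ is $q_y := \sum_{\sigma:\,\sigma(y)=y_1^*} \bP(\sigma \mid \hat H_n)$, writing $y_1^*$ for the mode of $p_0$. For any two candidates $y, y'$ with $\hat H_n(y) \geq \hat H_n(y')$, the bijection $\sigma \mapsto \sigma \circ (y\,y')$ sends $\{\sigma:\sigma(y)=y_1^*\}$ onto $\{\sigma:\sigma(y')=y_1^*\}$ and multiplies the likelihood by a factor $\paren{p_0(\sigma(y'))/p_0(y_1^*)}^{\hat H_n(y) - \hat H_n(y')} \leq 1$, since $y_1^*$ maximizes $p_0$. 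Summing this inequality term by term gives $q_y \geq q_{y'}$, so the MAP estimator under the uniform prior is the empirical mode $\hat y_n$ of~\eqref{eq:mode-emp}.

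Third, the permutation-equivariance of $\hat y_n$ (applying $\sigma$ pointwise to the samples permutes the histogram accordingly) ensures that its error probability is identical under every $p_0\circ\sigma$ and equals $\bP_{p_0^{\otimes n}}(\hat y_n \neq y_1)$. The lower bound of Theorem~\ref{thm:empirical-mode} controls this quantity by $\exp\paren{-n\Delta_2^2 - m\ln(n+1) - c_{p_0}}$, and any $\sigma$ attaining the maximum in the first display therefore witnesses the claim.

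The main obstacle is the second step: the MAP on the mode $\sigma^{-1}(y_1^*)$ aggregates $(m-1)!$ permutations and need not coincide a priori with the mode computed from the MAP permutation. The swap-bijection argument outlined above resolves this, and uniform random tie-breaking in both the algorithm and the estimator handles ties in $\hat H_n$ or in $p_0$ without affecting the exponent.
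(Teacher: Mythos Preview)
Your proof is correct and follows the same strategy as the paper: place a uniform prior over the permutation orbit $\fS_m\cdot p_0$, lower-bound the worst case by the Bayes risk, identify the Bayes-optimal estimator as the empirical mode, and invoke the lower bound of Theorem~\ref{thm:empirical-mode}. In fact you supply more than the paper does: the paper merely asserts that the empirical mode is Bayes-optimal under this prior, whereas your swap-bijection argument on the posterior masses $q_y$ actually proves it, and your equivariance observation makes explicit why the Bayes risk of $\hat y_n$ equals its error under $p_0$.
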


As one needs at least a single query per sample to gain any meaningful information, Corollary \ref{cor:minimax} states that the number of queries $T$ need to be greater than $\Delta_{2}^{-2} \ln(1/\delta)$, up to higher order terms, to reach precision $\delta$ as defined by Equation \eqref{eq:zero-delta}.
The main challenge is thus to get as close to this lax lower bound as possible.

\subsection{Proof of Theorem \ref{thm:empirical-mode}}

In this subsection, we prove Theorem \ref{thm:empirical-mode}.
We follow a proof of Sanov's theorem due to \citet{Csiszar2011}, together with an explicit computation of an information projection.
Let us partition all possible sequences of observations $(Y_j)_{j\in[n]}$ according to their empirical distribution defined as 
\[
    \hat p_{(Y_j)}(y) := n^{-1}\sum_{j\in[n]}\ind{Y_j = y}.
\]
To do so, we define for any such empirical distribution $q \in \prob\cY\cap n^{-1}\cdot\bN^\cY$ the type class
\[
    \cT(q) = \brace{(y_t) \in \cY^n\midvert \forall\,y\in\cY;\quad \hat p_{(y_t)}(y) = q(y)}.
\]
The event $\{\hat y_n \neq y_1\}$ is the union over all the values that $\hat p$ can take of the events ``$\hat p$ does not have the right mode'', which we can write as a union of disjoint events according to the type of $(Y_j)$, which leads to the bound 
\[
    \sum_{q\in \cQ_{n,-}} \Pbb_{(Y_j)\sim p}\parend[\big]{(Y_j) \in \cT(q)} \leq
    \Pbb_{(Y_j)_{j\in[n]}}(\hat y \neq y_1) \leq \sum_{q\in \cQ_{n,+}} \Pbb_{(Y_j)\sim p}\parend[\big]{(Y_j) \in \cT(q)},
\]
where  we account for the cases where $\hat p$ has several modes by differentiating
\begin{equation}
    \label{eq:Qn-}
    \cQ_{n,-} = \brace{q \in \prob\cY \cap n^{-1}\cdot\bN^\cY \midvert y_1 \notin \argmax q(y)},
\end{equation}
and
\begin{equation}
    \label{eq:Qn+}    
    \cQ_{n,+} = \brace{q \in \prob\cY \cap n^{-1}\cdot\bN^\cY \midvert \argmax q(y) \neq \argmax p(y)}.
\end{equation}

We would like to enumerate the different classes in the previous sums, as well as their probability.
A few lines of derivations lead to the equality,
\[
    \bP((Y_j) = (z_j)) = 2^{- n(H(\hat p_{(z_j)}) + D(\hat p_{(z_j)}\| p)) }.
\]
Here, $D$ is the Kullback-Leibler divergence, and $H$ the entropy
\[
    D(q\| p) = \E_{Y\sim q}[\log_2(\frac{q(Y)}{p(Y)})], \qquad
    H(q) = \E_{Y\sim q}[-\log_2(q(Y))].
\]
As a consequence, using the exchangeability of the $(Y_j)$,
\[
    \bP((Y_j) \in \cT(q)) = 
    \bP(\hat p_{(Y_j)} = q) = \card{\cT(q)} 2^{- n(H(q) + D(q\| p)) }.
\]

We are left with the computation of the cardinality of each $\cT(q)$.
This is nothing but 
\[
    \card{\cT(q)} = \binom{n}{(nq(y))_{y\in\cY}} = \frac{n!}{\prod_{y\in\cY} (nq(y))!}.
\]
This cardinality can be bounded with probabilistic arguments, as shown in Theorem 11.1.3 (which is 12.1.3 in the 2nd edition) of \citet{Cover1991},\footnote{%
    Slightly tighter bounds can be derived from the fact that, as proven by \citet{Robbins1955},
    \[
        k! = \sqrt{2\pi} k^{k+1/2}e^{-k+r_k}, 
       \quad\text{with}\quad 0 \leq \frac{1}{12k + 1} \leq r_k \leq \frac{1}{12k} \leq 1.
    \]
}
\[
     (n+1)^{-m} 2^{nH(q)} \leq \card{\cT(q)} \leq 2^{nH(q)}.
\]
Collecting the different pieces so far, we reach the conclusion,
\[
    (n+1)^{-m} \sum_{q\in \cQ_{n,-}} 2^{-nD(q \| p)} \leq \Pbb(\hat y \neq y_1) \leq \sum_{q\in \cQ_{n,+}} 2^{-nD(q \| p)}.
\]
From there, using the fact that the cardinality of $\cQ_{n,+}$ is at most $(n+1)^m$, we get the rough bound
\begin{equation}\label{eq:first_bound_Sanov}
   (n+1)^{-m}  \max_{q\in \cQ_{n,-}} 2^{-nD(q \| p)} \leq \Pbb(\hat y \neq y_1) \leq (n+1)^m \max_{q\in\cQ_{n, +}}2^{-nD(q \| p)}.
\end{equation}
It is useful to define the ``limit'' $\cQ = \brace{q \in \prob\cY \midvert \argmax q(y) \neq \argmax p(y)}$
of the sets $\cQ_{n,-}$ and $\cQ_{n,+}$.
We are left with the computation of the so-called ``information projections'' $\min_{q\in \cQ_{n,-}} D(q \| p)$ and $\min_{q\in \cQ_{n,+}} D(q \| p)$.

\begin{restatable}{lemma}{infoprojection}
    \label{lem:kl-proj-short}
    For any distribution $p\in\prob{\cY}$, there exists a constant $c_p$ such that for any $n\in\bN$ with $\cQ_{n,-}$ and $\cQ_{n,+}$ defined by Equations \eqref{eq:Qn-} and \eqref{eq:Qn+},
    \[
        \frac{\Delta_2^2}{\ln(2)} + \frac{c_p}{n\ln(2)} \geq
		\min_{q\in \cQ_{n,-}} D(q \| p) \geq
        \min_{q\in \cQ_{n, +}} D(q \| p)\geq \min_{q\in \cQ} D(q \| p) = \frac{\Delta_2^2}{\ln(2)}.
    \]
\end{restatable}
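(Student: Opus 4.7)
The plan is to break the chain of inequalities into its four pieces and handle them in order of increasing difficulty.

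First, the two middle inequalities follow from set inclusions and are essentially free. If $y_1 \notin \argmax q(y)$ then a fortiori $\argmax q(y) \neq \{y_1\} = \argmax p(y)$, giving $\cQ_{n,-} \subseteq \cQ_{n,+}$; and $\cQ_{n,+} \subseteq \cQ$ by definition since every rational empirical distribution is a bona fide distribution on $\cY$. Minimizing a fixed function over a larger set gives a smaller value, establishing the two middle inequalities.

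Second, I would compute the ``information projection'' $\min_{q \in \cQ} D(q \| p) = \Delta_2^2/\ln(2)$ by a Lagrangian argument. Decompose $\overline{\cQ} = \bigcup_{i \geq 2} \cQ^{(i)}$ where $\cQ^{(i)} = \{q : q(y_i) \geq q(y_1)\}$. On each slice, since $p(y_1) > p(y_i)$ the infimum is attained at the boundary $q(y_1) = q(y_i)$. Solving the constrained minimization with Lagrange multipliers for the normalization and equality constraint, the first-order conditions yield
\begin{equation*}
    q^*(y) = c\, p(y) \text{ for } y \notin \{y_1,y_i\}, \qquad q^*(y_1) = q^*(y_i) = c\sqrt{p(y_1)p(y_i)},
\end{equation*}
with $c = 1/\bigl(1 - (\sqrt{p(y_1)} - \sqrt{p(y_i)})^2\bigr)$ enforced by $\sum_y q^*(y) = 1$. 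A direct computation using $q^*(y)/p(y) \in \{c,\, c\sqrt{p(y_i)/p(y_1)},\, c\sqrt{p(y_1)/p(y_i)}\}$ yields the clean identity $D(q^* \| p) = \log_2 c$. Since $p(y_2) \geq p(y_i)$ for $i \geq 2$, this quantity is minimized at $i=2$, giving exactly $\Delta_2^2/\ln(2)$.

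The main obstacle is the leftmost (upper bound) inequality: producing an element of $\cQ_{n,-}$ that achieves $D(q_n \| p) \leq \Delta_2^2/\ln(2) + c_p/(n\ln 2)$. The natural strategy is to round the continuous optimizer $q^*$ (for $i=2$) to the lattice $n^{-1}\cdot \bN^\cY$. Concretely, I would set $q_n(y_1) = \lfloor n q^*(y_1)\rfloor/n$ and $q_n(y_2) = q_n(y_1) + 1/n$, distribute the remaining mass among the $y_i$ for $i \geq 3$ by rounding in such a way that $\sum_y q_n(y) = 1$, which is always possible with $\|q_n - q^*\|_\infty \leq m/n$. By construction $q_n(y_2) > q_n(y_1)$, so $q_n \in \cQ_{n,-}$. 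The interior minimizer $q^*$ lies in the relative interior of the support of $p$ (restricting attention to $\supp p$ if needed, which is where the $p$-dependent constant $c_p$ enters), so $D(\cdot \| p)$ is smooth at $q^*$, and since $q^*$ is the unconstrained minimum of $D(\cdot \| p)$ over $\cQ^{(2)}$, a second-order Taylor expansion along the rounding direction (which lies in the affine hull of $\cQ^{(2)}$ up to an $O(1/n)$ correction away from its boundary) gives $D(q_n \| p) - D(q^* \| p) = O_p(1/n)$, absorbed into $c_p/(n \ln 2)$. Some care is needed to pick the rounding so it stays inside the affine constraint $\sum q(y)=1$ and the linear constraint $q(y_2) > q(y_1)$, but the quadratic smoothness of $D$ at an interior minimizer makes the $O(1/n)$ rate inevitable rather than fragile.
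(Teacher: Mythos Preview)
Your proposal is correct and follows essentially the same architecture as the paper: set inclusions for the middle inequalities, an explicit computation of the information projection onto $\cQ$, and a rounding of the continuous optimizer $q^*$ to the lattice $n^{-1}\bN^\cY$ for the upper bound.

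Two minor remarks on differences. For the information projection, the paper derives the same optimizer $q^*$ via a mass-transfer argument (showing any $q$ not of the claimed form can be improved by shifting mass between two coordinates), rather than via a Lagrangian; your Lagrangian computation is correct and more direct, but you should note that convexity of $D(\cdot\|p)$ is what upgrades the first-order conditions to a global minimum. For the rounding error, your Taylor justification is slightly over-engineered: the phrase ``since $q^*$ is the unconstrained minimum of $D(\cdot\|p)$ over $\cQ^{(2)}$'' is not what gives $O(1/n)$ --- optimality would buy $O(1/n^2)$ in tangent directions, but here the first-order term does not vanish (the constraint is active). All you actually need is that $q^*$ lies in the relative interior of the simplex restricted to $\operatorname{supp} p$, so $D(\cdot\|p)$ has bounded gradient there, hence $|D(q_n\|p)-D(q^*\|p)|\le L\|q_n-q^*\|_\infty = O_p(1/n)$ by Lipschitz continuity alone. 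The paper instead uses the explicit decomposition $D(q_n\|p)=\sum_y q_n(y)\log_2(q^*(y)/p(y))+\sum_y q_n(y)\log_2(q_n(y)/q^*(y))$ and bounds each piece directly, which sidesteps the smoothness discussion.
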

This lemma is proved in Appendix \ref{app:mode}.
Combining it with Equation \eqref{eq:first_bound_Sanov}, we find that
\[
    \exp\left( -n \Delta_2^2 - c_p -  m\ln(n+1)\right)
    \leq \Pbb(\hat y \neq y_1) \leq \exp\left( -n \Delta_2^2 + m\ln(n+1)\right).
\]
Finally, we note that an argument of \citet{Dinwoodie1992} shows that $\ln(\bP(\hat y_n \neq y) )/ n$ is always below its limit proved by \citet{Sanov1957}, which yields the stronger bound 
\[\Pbb(\hat y \neq y_1) \leq \exp( -n \Delta_2^2)\]
which we reported in Theorem \ref{thm:empirical-mode}.

\subsection{Minimax Optimality}

This subsection proves Corollary \ref{cor:minimax}.
To turn Theorem \ref{thm:empirical-mode} into the minimax Corollary \ref{cor:minimax}, we can use a prior on $p$ such that the optimal Bayesian algorithm is the mode estimation algorithm.
To do so, let us endow the simplex $\prob\cY$ with a distribution $D$ and try to minimize
\[
    \cE(\cA) = \E_{p\sim D}\bracket{\E_{(Y_j)}\bracket{\ind{\cA((Y_j)) \neq y^*_p} \midvert p}},
\]
where $\cA((Y_j))$ is the mode predicted by an algorithm $\cA$ upon observing the independent samples $(Y_j)$ generated according to $p\in\prob\cY$, and $y^*_p$ is the mode of $p$.
To find the optimal algorithm $\cA^*$, we can invert the expectation as
\[
    \cE(\cA) = \E_{(Y_j)}\bracket{\E_{p\sim D}\bracket{\ind{\cA((Y_j)) \neq y^*_p} \midvert (Y_j)}}.
\]
This leads to the optimal algorithm
\[
    \cA^*((Y_j)) = \argmin_{y\in\cY} \E_p\bracket{\ind{y \neq y^*_p} \midvert (Y_j)}
    = \argmax_{y\in\cY} \bP_{p\sim D}\paren{y^*_p=y \midvert (Y_j)}.
\]
It follows from $\cE(\cA) \geq \cE(\cA^*)$ that there exists at least one distribution $p$ in the support of $D$ such that the error made by any algorithm $\cA$ on this distribution cannot be better that the one made for $\cA^*$, i.e.,
\[
    \E_{(Y_j)}\bracket{\ind{\cA((Y_j)) \neq y^*_p} \midvert p}
    \geq \E_{(Y_j)}\bracket{\ind{\cA^*((Y_j)) \neq y^*_p} \midvert p}.
\]
One can define a prior such that $\cA^*$ corresponds to the empirical mode, i.e., $\cA^*((Y_j)) = \hat y_n$.
In particular, Corollary~\ref{cor:minimax} follows from taking $D$ as the uniform distribution over the set of permutations of the given distribution $p_0$, i.e., $\fS_m\cdot p_0 = \brace{p\midvert \exists\,\sigma\in\fS_m, p = \sigma_\# p_0}$.
Since any algorithm has to have its expected performance over the distribution $p$ bounded by the one of $\cA^*$, for any algorithm, there exists at least one distribution $p \in\fS_m\cdot p_0 $ such that its performance is no better than the one of $\cA^*$.

\subsection*{Aside on User-Knowable Bounds}
For a user that does not know $\Delta_y$, the bound of Theorem \ref{thm:empirical-mode} is not practically helpful, and the practitioner might be interested in stronger formal guarantees.
In particular, \citet{Valiant1984} introduced the notion of $(1-\delta)$-probably $\epsilon$-approximately correct estimator, for some $\epsilon, \delta > 0$, which reads $\bP(p(\hat y) < p(y_1) - \epsilon) \leq \delta$.
Proofs based on concentration inequalities can usually be reworked to derive such $(\epsilon, \delta)$-PAC bounds by replacing some quantity $\Delta$, related to differences $p(y) - p(y_1)$, by $\epsilon$ in the bound on the error $\delta$.
This is notably the case for Theorem \ref{thm:empirical-mode}.
One can also estimate $\hat\Delta_y$ and use plug-in techniques.
The second half of this paper will provide algorithms, namely the Elimination and Set Elimination Algorithms, such that the user knows, and furthermore can choose, the probability $\delta$ with which the algorithm will fail to output the true mode --in the bandit literature, such algorithms are called $(0, \delta)$-PAC.

\section{Exhaustive Dichotomic Search Procedures}
\label{sec:exhaustive}

This section introduces baseline algorithms based on exhaustive search procedures.
To leverage the empirical mode estimator, a naive baseline consists in fully identifying each sample $Y_j$ one after the other.
Using a  binary search, this can be done with $\ceil{\log_2(m)}$ queries \eqref{eq:question} per sample.
We can thus fully identify $n$ samples with $T = n \ceil{\log_2(m)}$ queries.
Together with Theorem \ref{thm:empirical-mode}, we deduce that this naive baseline gets to precision $\delta$ as defined by Equation \eqref{eq:zero-delta} with a number of queries bounded by $T = \Delta_{2}^{-2}\ceil{\log_2(m)}\ln(1/\delta)$.
Our first proposed improvement over this baseline is to refine binary searches through entropy codes in order to {\em minimize the average number of binary questions asked to fully identify each sample} $Y_j$, reducing the number of queries per sample from $\ceil{\log_2(m)}$ to a quantity close to the entropy $H(p)$ of $p$ \citep{Shannon1948}.

\subsection{Entropy Coding}
\label{subsec:coding}

In order to identify each sample $Y_j$ with the least amount of queries, we will use binary search algorithms stemming from coding theory.
This section fixes terminology and adds self-contained details on the matter.

\begin{definition}[Binary Tree {\em etc.}]
    For a set of elements $\cY$, we define leaves as abstract objects $V_y$ for all $y\in \cY$.
    From leaves, we define nodes $V$ as abstract objects associated with a right child $V_1 = r(V)$ and a left child $V_2 = l(V)$ which are either nodes or leaves.
    A vertex is an abstract object $V$ which is either a node or a leaf.
    A vertex $V_1$ is a descendant of $V_2$, denoted by $V_1\lhd V_2$, if it can be built from the composition  $V_1 = s_1\circ s_2 \circ s_3 \circ \cdots \circ s_d(V_2)$ for $d\in\bN$ and $s_i \in \brace{r, l}$
    The descendants of a node $V$ are all the vertices that can be built from the composition $V' \lhd V$.
    A binary tree $\cT = \cT(R)$ is defined from a root node $R$ with a finite number of descendants forming a collection of vertices, such that each $V$ in this collection of vertices is defined by a unique path $V = s_1\circ s_2 \circ s_3 \circ \cdots \circ s_d(R)$, with $d = D_\cT(V)\in\bN$ being known as the depth of $V$ in $\cT$.
\end{definition}

A binary tree is associated with a prefix code.

\begin{definition}[Vertex code]
    \label{def:code}
    The code $c_\cT(V) \in \brace{0, 1}^{D_\cT(V)}$ of vertex $V$ in $\cT$ is defined as the unique path to go from the root node of $\cT$ to $V$ by reading $c_\cT(V)$ and recursively advancing to the right child if reading a $1$ and to the left child if reading a $0$.
\end{definition}

The concept of binary tree is graphically intuitive.
Let us provide an example of a tree, and annotate each node in the tree with its code as per Definition \ref{def:code}.
In the following illustration, each left (resp. right) child of a node are represented below the node on the left (resp. on the right).
We emphasize the separation between right and left with a vertical bar.
For example, if $V = r \circ l\circ l\circ r(R)$, then $c_\cT(V) = 1001$.

\begin{verbatim}
    Root Node
Leaf: 0 |                                  Node: 1
        |                     Node: 10        | Leaf: 11
        |       Node: 100         | Leaf: 101 |
        | Leaf: 1000 | Leaf: 1001 |           |
\end{verbatim}

Prefix codes have been heavily studied in information theory, where the goal is to transform a set $\cY$ in order to describe its elements $y$ with a sequence of bits $c(y)$.
In particular, the minimal number of bits transmitted when encoding sequences of tokens $(Y_j)\in\cY^n$ into the concatenation of the codes $(c(Y_j))_{j\in[n]}$ is linked with the entropy of the empirical distribution of the $y \in \cY$ in $(Y_j)$.
There is a well known fundamental limit, for any $p\in\cY$,
\[
    \min_{\cT}\E_{Y\sim p}[D_\cT(Y)] \geq H(p):= \E_{Y\sim p}[-\log_2 p(Y)],
\]
where $D_\cT(y)$ is seen as the length of the code $c(y)$, the expectation is seen as the average code length, and the minimization as the search for the best coding algorithm to minimize message length.

A simple algorithm to obtain optimal codes was found by \citet{Huffman1952} --we reproduce it in Algorithm \ref{algo:huffman}.
It takes as input a set of elements $ \cY$ such that some positive value $v(y)$ is associated to each $y\in \cY$.
It outputs a tree whose leaves are the elements of $\cY$ and whose vertices also come associated to values, and which satisfies certain conditions with respect to those values.
When the value $v(y)$ of element $y\in \cY$ is equal to $p(y)$ for some distribution $p\in \Pbb(\cY)$, which is assumed in Algorithm~\ref{algo:huffman}, then the prefix code of the resulting tree on the elements of $\cY$ is optimal with respect to the distribution $p$, i.e. it minimizes $ \min_{\cT}\E_{Y\sim p}[D_\cT(Y)]$ \citep[see][]{Huffman1952}.

To describe Algorithm \ref{algo:huffman} succinctly, we introduce two notions of vertex orderings.
The first comes from the value $v(y)$ associated by hypothesis to each $y\in \cY$, which would typically be $p(y)$, $N(y)$ or $N(y) / n$, respectively the probability of $y$ for some distribution $p$, the empirical occurrence counts of $y$ for some set of samples, or its empirical probability.
Given a tree whose leaves $V_y$ are in bijection with the elements of $\cY$, let us associate to $V_y$ the value of the corresponding $y\in \cY$, i.e. $v(V_y) = v(y)$.
The value of a node is then defined recursively as the sum of its children value $v(V) = v(r(V)) + v(l(V))$.
This leads to the following notion of ordering.
\begin{definition}[Node ordering]
    \label{def:node-ordering}
    Two vertices $V_1$ and $V_2$ satisfy the partial ordering $V_1 \vdash V_2$ if $v(V_1) < v(V_2)$ or if $v(V_1) = v(V_2)$ and $V_2$ is a node while $V_1$ is a leaf.
    The two vertices are equivalent, which we write $V_1 \sim_\vdash V_2$, if $V_1 \not\vdash V_2$ and $V_2 \not\vdash V_1$.
\end{definition}
The second ordering orders the nodes from bottom to top, and left to right.
\begin{definition}[Code ordering]
    \label{def:code-ordering}
    Two vertices $V_1$ and $V_2$ in a tree $\cT$ satisfy the total ordering $V_1 <_\cT V_2$ if $D_\cT(V_1) > D_\cT(V_2)$ or if $D_\cT(V_1) = D_\cT(V_2)$ and $c_\cT(V_1) < c_\cT(V_2)$ with respect to the lexicographical order.
\end{definition}

Once again, this is a highly visual concept. Let us number the vertices of the previous tree according to the total ordering from Definition \ref{def:code-ordering}.

\begin{verbatim}
    Root Node: 9
Leaf: 7 |                           Node: 8
        |                Node: 5      | Leaf: 6
        |       Node: 3     | Leaf: 4 |
        | Leaf: 1 | Leaf: 2 |         |
\end{verbatim}

\begin{algorithm}
\caption{Huffman Scheme}
\KwData{Set of elements $\cY$ endowed with a probability distribution $p\in\prob\cY$.}
Create vertices $V_y$ for each $y\in\cY$ with value $v(V_y) = p(y)$;\\
Sort all the node into a heap $\cS$ according to the comparison $\vdash$ (Definition \ref{def:node-ordering});\\
\While{$\cS$ has more than one element}{
    Pop $V_1$, $V_2$ the respective smallest elements in $\cS$;\\
    Merge them into a parent node $V$ with $V_1$ as the left child and $V_2$ the right one;\\
    Insert $V$ into the heap $\cS$ with its value $v(V) = v(V_1) + v(V_2)$;
}
Set the remaining node $V$ in the heap $\cS$ as the root node of $\cT = \cT(V)$;\\
\KwResult{Huffman tree $\cT$}
\label{algo:huffman}
\end{algorithm}

Let us illustrate Algorithm \ref{algo:huffman} with the count vector $N = (69, 14, 8, 6, 3)$.
At the first iteration, we set all nodes in a heap.
Let us represent this heap as a sorted list with comma-separated elements.
\begin{verbatim}
Leaf 3; Leaf 6; Leaf 8; Leaf 14; Leaf 69;
\end{verbatim}
Then we merge the two smallest ones and add the result in the heap.
Let us picture descendants below the nodes.
\begin{verbatim}
Leaf 8; Node 9; Leaf 14; Leaf 69;
   Leaf 3 | Leaf 6
\end{verbatim}
We iterate the process.
\begin{verbatim}
Leaf 14; Node 17; Leaf 69 
    Leaf 8 | Node 9
        Leaf 3 | Leaf 6
\end{verbatim}
Now the heap is only made of a node with value $31$ and a leaf with value $69$.
\begin{verbatim}
    Node 31; Leaf 69;
Leaf 14 | Node 17
     Leaf 8 | Node 9
         Leaf 3 | Leaf 6
\end{verbatim}
We end up with a binary tree.
\begin{verbatim}
                                     Node: 100       
       Node: 31                        | Leaf: 69
Leaf: 14 |       Node: 17              |           
         | Leaf: 8 |       Node: 9     |           
         |         | Leaf: 3 | Leaf: 6 |             
\end{verbatim}

In this paper, we introduce algorithms that build and adapt Huffman trees with respect to a distribution $p$ that is simultaneously being learnt online. This requires updating the trees on the fly.
We can do so with the rebalancing algorithm of \citet{Vitter1987}, which we present in Algorithm~\ref{algo:Vitter}.
This algorithm uses integer values (which corresponds to empirical counts), and updates the tree in reaction to an increase in value of $+1$ for a single leaf.
Of course, it can be repeatedly applied to account for greater changes.

\begin{algorithm}[ht]
\caption{Vitter Rebalancing}
\KwData{A Huffman tree $\cT$ with counts $v(V) \in \bN$. A node $V$.}
    Swap $V$ with the biggest $V_1$ in the sense of $<_\cT$ such that $V \sim_\vdash V_1$;\\
    Update $v(V) := v(V) + 1$;\\
    Swap $V$ with the smallest $V_2$ in the sense of $<_\cT$ such that $V \vdash V_2$;\\
    \uIf{$v(V) = v(V_2)$}{
        Update the new parent of $V_2$ by calling this algorithm on $(\cT, V_2)$;
    }
    \Else{
        Update the new parent of $V$ by calling this algorithm on $(\cT, V)$;
    }
\KwResult{Updated Huffman tree $\cT$.}
\label{algo:Vitter}
\end{algorithm}

One can prove the following properties of the Huffman tree construction and rebalancing.

\begin{proposition}[\citet{Vitter1987}]
    When $\cT$ is built with Algorithm \ref{algo:huffman} with some initial value $v(V_y) = \sum_{j\in[t_0]} \ind{Y_j=y} > 0$ (for some set of samples $(Y_j)_j \subset \cY$) and is updated incrementally at each timestep $t\in\bN$ with respect to Algorithm \ref{algo:Vitter} and the observation of new samples $Y_t \in \cY$, the total ordering $<_\cT$ (Definition \ref{def:code-ordering}) is always compatible with the partial ordering $\vdash$ (Definition \ref{def:node-ordering}).
\end{proposition}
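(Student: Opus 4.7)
The plan is to proceed by induction on the sequence of incremental updates to the tree. The base case establishes compatibility immediately after Algorithm~\ref{algo:huffman} finishes, and the inductive step verifies that each application of Algorithm~\ref{algo:Vitter} preserves compatibility. Recall that a total ordering $<_\cT$ is compatible with $\vdash$ if, for any two vertices $V_1, V_2$, whenever $V_1 \vdash V_2$ one has $V_1 <_\cT V_2$; in tree-theoretic language, this is the classical \emph{sibling property}, which can be restated as: there exists an enumeration of the vertices of $\cT$ in non-decreasing $v$-value such that each pair of consecutive vertices in the enumeration are siblings.

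For the base case, I would induct on the number of iterations of the while loop in Algorithm~\ref{algo:huffman}. At each stage the heap $\cS$ contains roots of subtrees already satisfying compatibility, and the two vertices popped are the $\vdash$-smallest, so placing them as siblings at the current deepest available level produces a subtree whose root has value $v(V) = v(V_1) + v(V_2)$ and whose descendants continue to respect $<_\cT$. The freshly merged pair is the deepest pair in the growing forest, so no previously placed node at a shallower depth can have smaller value, which would have contradicted the pop rule; the lexicographic tie-breaking in $<_\cT$ is then enforced by how one orders the two popped vertices as left versus right child.

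For the inductive step, suppose $\cT$ satisfies compatibility just before a call to Algorithm~\ref{algo:Vitter} on node $V$. The first swap moves $V$ to the position of the $<_\cT$-largest vertex equivalent to it under $\sim_\vdash$; since this is a permutation within a single $\sim_\vdash$-class, compatibility is preserved. Incrementing $v(V)$ by one can then only introduce a local violation either with the $<_\cT$-smallest vertex $V_2$ still strictly above $V$ in $v$-value, or with the new parent of $V$. If $V_2$ exists, the second swap promotes $V$ past $V_2$ and restores order at that level. The recursion on the parent, with the branching on $v(V)=v(V_2)$ deciding whether to continue from $V_2$ or $V$, propagates the $+1$ upward; by the inductive hypothesis applied one level higher, the ancestors are then restored to compatibility in turn.

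The main obstacle will be the careful case analysis showing that after the two swaps, the \emph{only} remaining possible violation sits strictly above the modified node, so that the recursive call indeed suffices. This hinges on two subtle points: (i) choosing the $<_\cT$-maximal $\sim_\vdash$-equivalent vertex \emph{before} the increment is precisely what avoids creating an intra-class violation once $v(V)$ jumps by one, and (ii) the conditional between the two recursive branches correctly selects which of $V$ or $V_2$ carries the $+1$ up so that the parent value to be rebalanced is the right one. The interaction between the depth-first and lexicographic components of $<_\cT$ with the sibling structure is delicate, and a clean invariant --- for instance, tracking that at every moment during the algorithm exactly one pending $+1$ travels strictly upward in the tree --- will be needed to close the induction without lengthy enumeration of configurations.
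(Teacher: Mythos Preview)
The paper does not give its own proof of this proposition: it is stated with attribution to \citet{Vitter1987} and left without argument. Your inductive strategy---establishing the sibling property after Huffman construction and then showing that each call to Algorithm~\ref{algo:Vitter} preserves it---is exactly the approach of Vitter's original paper, so you have reconstructed the intended argument rather than diverged from it.

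That said, a couple of points in your sketch would need sharpening before the proof closes. In the base case, Huffman's scheme does not assign depths as it goes; a cleaner formulation is that the $k$-th pair merged becomes the $(2k{-}1)$-th and $(2k)$-th vertices in the final $<_\cT$ enumeration, and the heap discipline of Algorithm~\ref{algo:huffman} directly forces this enumeration to be $\vdash$-monotone. In the inductive step, the invariant you propose (a single pending $+1$ traveling upward) is the right picture, but one also has to verify that neither swap in Algorithm~\ref{algo:Vitter} can exchange $V$ with one of its own ancestors, since that would corrupt the tree rather than merely reorder it. This is where the positivity hypothesis $v(V_y)>0$ is used: any strict ancestor of $V$ has strictly larger value, so it cannot lie in the $\sim_\vdash$-class of $V$, and after the first swap and increment the only candidates for $V_2$ that precede $V$'s parent in $<_\cT$ are non-ancestors. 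With these two points made explicit, your outline becomes a complete proof along Vitter's lines.
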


Note that some subtleties are to be taken into account at initialization to deal with leaves that have not yet been observed.
In coherence with the construction of \citet{Vitter1987}, we consider a special ``not yet observed'' node which defines a balanced subtree containing all the unobserved leaves as its descendants.
When observing a new element $y$, we remove it from this subtree, re-balance the subtree, and create a new node whose left child is the ``not yet observed'' node and whose right one is the leaf corresponding to this $y$. This new node is set at the former place of the ``not yet observed'' node.

Finally, the following code property will be useful to derive statistical guarantees for our algorithms.

\begin{definition}
  	\label{def:balanced}
    A coding scheme $\cA$ that associates a binary tree $\cT_\cA(p)$ to a probability vector $p\in\prob\cY$ is said to be $C$-balanced if $|c_{\cT_\cA(p)}(y)| \leq C\ceil{\log_2(p(y))}$ for all $y\in \cY$.
\end{definition}

Shannon codes are built explicitly to be $1$-balanced \citep{Shannon1948}.
However, in contrast with Huffman coding, Shannon coding does not provide the lowest expected.
Huffman codes are not always $1$-balanced, but the following lemma, which we prove in Appendix~\ref{app:exhaustive}, states that they are at least $2$-balanced --in fact, one can prove a slightly better constant than 2.

\begin{restatable}{lemma}{huffmandepth}
    \label{lem:depth_in_Huffman_tree}
    Let $\cT$ be a Huffman tree with respect to a value function $v$ on its vertices such that $v(R) =1$, where $R$ is the root of $\cT$. Then for any vertex $V$ of $\cT$, we have
    \[
        D_\cT(V) \leq 2 \ceil{\log_2(1/v(V))}.
    \]
    In other words, Huffman codes are $2$-balanced.
\end{restatable}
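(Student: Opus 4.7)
The plan is to trace the path from $V$ up to the root $R$ of $\cT$ and show that the value of ancestors at least doubles every two steps, from which the bound on $D_\cT(V)$ will follow by a short geometric argument. Writing $V = V_0, V_1, \ldots, V_d = R$ for the chain of ancestors with $d = D_\cT(V)$, and $S_i$ for the sibling of $V_i$, we have $v(V_{i+1}) = v(V_i) + v(S_i)$ by the additivity of $v$ along $\cT$.

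The heart of the argument is the key inequality $v(S_{i+1}) \geq v(V_i)$ for all $0 \leq i \leq d-2$. I would exploit the greedy structure of Algorithm~\ref{algo:huffman}: at the moment $V_{i+1}$ was produced by merging the two smallest heap elements $V_i$ and $S_i$, every other vertex in the heap had value at least $\max(v(V_i), v(S_i)) \geq v(V_i)$. The sibling $S_{i+1}$ of $V_{i+1}$ is then either one of those surviving heap elements, or was constructed afterwards by merging such elements; in both scenarios its value inherits the lower bound $v(V_i)$. Combined with the trivial $v(V_{i+1}) \geq v(V_i)$, this gives the doubling inequality $v(V_{i+2}) \geq 2 v(V_i)$, and by iteration $v(V_{2k}) \geq 2^k v(V)$ whenever $V_{2k}$ is part of the chain.

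To finish, I would set $k = \ceil{\log_2(1/v(V))}$, so that $2^k v(V) \geq 1$. If $V_{2k}$ existed, then $v(V_{2k}) \geq 1$; since every value in $\cT$ is at most $v(R) = 1$, the vertex $V_{2k}$ would have to coincide with $R$, giving $d = 2k$. Otherwise $d < 2k$. A small refinement, for instance invoking the stronger $v(V_{2k+1}) \geq v(V_{2k}) + v(V_{2k-1}) \geq (3/2) \cdot 2^k v(V)$, disposes of the boundary case where $d$ is odd and $\log_2(1/v(V))$ is an integer. Altogether, $D_\cT(V) \leq 2 \ceil{\log_2(1/v(V))}$.

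The main obstacle will be the rigorous justification of the key inequality $v(S_{i+1}) \geq v(V_i)$. One has to carefully track how the Huffman heap evolves between the creation of $V_{i+1}$ and that of $V_{i+2}$, distinguishing whether $S_{i+1}$ was already present in the heap at the moment $V_{i+1}$ was formed or was itself produced by subsequent merges of elements that were then in the heap, and observing that in both scenarios its value inherits the desired lower bound from the minimality of $\{V_i, S_i\}$ at the earlier moment.
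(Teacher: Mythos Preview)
Your proposal is correct and follows essentially the same approach as the paper's proof: both exploit the greedy property of Huffman's construction to show that values at least double every two steps along the path to the root, and then convert this into the depth bound. The paper phrases this as an induction on $D_\cT(V)$ with a case split on whether the sibling $V'$ of $V$ satisfies $v(V')\geq v(V)$ or $v(V')<v(V)$, whereas you unwind the same idea iteratively via the uniform key inequality $v(S_{i+1})\geq v(V_i)$; the underlying content is identical.

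One minor remark: the ``small refinement'' you mention for the odd-$d$/integer-$\log$ boundary is unnecessary. Once you have $v(V_{2k})\geq 2^k v(V)\geq 1$ and $v(V_{2k})\leq v(R)=1$, your own key inequality already rules out $d>2k$: if $V_{2k}$ were not the root, then $v(S_{2k})\geq v(V_{2k-1})\geq v(V)>0$ would force $v(V_{2k+1})>1$, a contradiction. So the basic doubling argument suffices without any extra case analysis.
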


\subsection{Exhaustive Search with Fixed Coding}
As mentioned at the start of the section, our first, and rather naive, proposed method of solving Problem~\ref{Problem} is to fully identify each sample  $Y_i$ using a fixed search procedure, 
i.e., a set of $q$ predefined questions $(\ind{y\in S_i})_{i\in[q]}$ such that any element $y\in\cY$ is fully identified by the values of the functions $\ind{y\in S_i}$.
Such a search procedure an be mapped to a prefix code $c:\cY\to\brace{0,1}^q$ that associates any $y$ to the binary code $c(y) = (\ind{y\in S_i})_{i\in[q]}$. It can also be mapped to a binary tree, the code describing branching properties to reach the leaf $y$ from the root node.
Each question can be seen as eliciting one bit in the code of $Y_j$, or equivalently going down one node in the corresponding tree.
Reciprocally, a code $(c(y)_j)_{j\in[q]}$ is associated with the search procedure considering $S_j = \brace{y\in\cY\midvert c(y)_j = 1}$.
In simple terms, $S_j$ enumerates all elements whose code $c$ contains a $1$ in the $j$-th position.
This strategy for solving Problem~\ref{Problem} is formalized in Algorithm~\ref{algo:exhaustive}.
The average number of questions asked by such a search procedure reads $\sum_{y\in\cY} p(y)|c(y)|$ where $|c(y)|$ is the length of the code of $y$, or equivalently the depth of $y$ in the associated binary tree.
Consequently, if the predefined search procedure is a simple binary search, the length of the code of any element is at most $\ceil{\log_2(m)}$, and Theorem~\ref{thm:empirical-mode} guarantees that we need at most $T = \ln(1/\delta) \alpha$ queries with $\alpha  =\Delta_2^{-2}  \ceil{\log_2(m)} $ to bound the probability of error by $\delta$.

\begin{algorithm}
\caption{Fixed Coding Exhaustive Search}
\KwData{Set of classes $\cY=\brace{y_1, \ldots, y_m}$ endowed with $p\in\prob\cY$. A code $c:\cY\to\brace{0, 1}^d$ (by default, we let $d = \ceil{\log_2(m)}$ and $c(y_i)$ be the binary representation of the number $i$ with zeros in front)}
    \For{$j\in[n]$}{
        Get new sample $Y_j\sim p$ and query $c_k(Y_j)$ for $k\in\{1,\ldots, \textrm{length}(c(Y_j))\}$ until $Y_j$ is identified;
}
Set $\hat y = \argmax_{y\in\cY} N(y)$, where $N(y) = \sum_{j\in[n]} \ind{Y_j=y}$;\\
\KwResult{Estimated mode $\hat y = \argmax \sum_{j\in[n]} \ind{Y_j=y}$ of $p$.}
\label{algo:exhaustive}
\end{algorithm}

\subsection{Exhaustive Search with Adaptive Coding}
\label{subsec:entropy_search}

Ideally, we would like to apply Algorithm~\ref{algo:exhaustive} with a code $c$ that is optimal with respect to the distribution $p$.
As $p$ is not known {\em a priori}, we need to learn the code on the fly.
This leads to Algorithm \ref{algo:adaptive-gen}, where a code is adapted iteratively to best reflect the current estimate $\hat p$ of $p$ based on past observations.
As shown by the following theorem, the need to learn a code online does not impact too much the complexity of Algorithm \ref{algo:adaptive} in comparison to that of Algorithm \ref{algo:exhaustive} with an optimal code.

\begin{algorithm}
\caption{Adaptive Coding Exhaustive Search}
\KwData{Set of classes $\cY=\brace{y_1, \ldots, y_m}$ endowed with $p\in\prob\cY$.}
    Initialize $N(y) = 0$ for all $y\in \cY$; A coding scheme $\cA$, an initial tree $\cT$;\\
    \For{$j\in[n]$}{
        Get new sample $Y_j\sim p$ and identify it by querying the entries of $c_\cT$;\\
        Update $N(Y_j) = N(Y_j) + 1$, and $\cT = \cT_\cA(\hat p)$ where $\hat p(y) = N(y) / j$;
}
Set $\hat y = \argmax_{y\in\cY} N(y)$;\\
\KwResult{Estimated mode $\hat y = \argmax \sum_{j\in[n]} \ind{Y_j=y}$ of $p$.}
\label{algo:adaptive-gen}
\end{algorithm}

\begin{algorithm}
\caption{Adaptive Coding Exhaustive Search with Huffman Coding}
\KwData{Set of classes $\cY=\brace{y_1, \ldots, y_m}$ endowed with $p\in\prob\cY$.}
    Initialize $N(y) = 0$ for all $y\in \cY$; Set $\cT$ a Huffman tree with $V_y = N(y)$;\\
    \For{$j\in[n]$}{
        Get new sample $Y_j\sim p$ and identify it by querying the entries of $c_\cT(Y_j)$;\\
        Update $N(Y_j) = N(Y_j) + 1$, and the Huffman tree $\cT$ with Algorithm \ref{algo:Vitter};
}
Set $\hat y = \argmax_{y\in\cY} N(y)$;\\
\KwResult{Estimated mode $\hat y = \argmax \sum_{j\in[n]} \ind{Y_j=y}$ of $p$.}
\label{algo:adaptive}
\end{algorithm}

\begin{theorem}[Adaptive entropic search performance]
    \label{thm:adaptive}
	Given a $C$-balanced coding scheme (Definition \ref{def:balanced}), in order to fully identify $n$ samples $(Y_j)$ following the adaptive coding strategy of Algorithm~\ref{algo:adaptive-gen}, one needs on average $\E[T]$ queries, where 
    \begin{equation}
        \label{eq:huffman-queries}
        nH(p) \leq \E[T] \leq  Cn (H(p) + 1) +  28 Cm\log_2(n) + 6 Cm + m^2,
    \end{equation}
    and $H(p):=-\sum_{y\in \cY} p(y)\log_2(p(y))$ is the entropy of the distribution $p\in\prob{\cY}$.
    In particular, in the case of Huffman coding, Algorithm \ref{algo:adaptive} yields
    \[
        nH(p) \leq \E[T] \leq  2n (H(p) + 1) + o(n).
    \]
\end{theorem}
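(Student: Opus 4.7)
The plan is to prove the two bounds separately. The lower bound follows directly from Shannon's source coding theorem: at each step $j$ the tree $\cT_j$ is a function only of the past observations $Y_1,\ldots,Y_{j-1}$, hence independent of the fresh sample $Y_j\sim p$. Conditioning on $\cT_j$ and applying the source coding theorem to the prefix code $c_{\cT_j}$ yields $\E[D_{\cT_j}(Y_j)\mid \cT_j]\geq H(p)$, and summing over $j\in[n]$ with the tower property gives $\E[T]\geq nH(p)$.

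For the upper bound, the starting point is the $C$-balance inequality $D_{\cT_j}(y)\leq C\ceil{\log_2(1/\hat p_{j-1}(y))}\leq C+C\log_2(1/\hat p_{j-1}(y))$, valid whenever $\hat p_{j-1}(y)>0$. Integrating $Y_j\sim p$ against the history-measurable tree and using the identity $-\sum_y p(y)\log_2 \hat p_{j-1}(y)=H(p)+D_{KL}(p\,\|\,\hat p_{j-1})$, I would obtain, on the event that every $y$ with $p(y)>0$ has already been observed,
\[
    \E[D_{\cT_j}(Y_j)\mid Y_1,\ldots,Y_{j-1}]\leq C(H(p)+1)+C\,D_{KL}(p\,\|\,\hat p_{j-1}).
\]
Summing over $j$ produces the main $Cn(H(p)+1)$ term, leaving a remainder $C\sum_j\E[D_{KL}(p\,\|\,\hat p_{j-1})]$ and a boundary contribution from steps at which $\hat p_{j-1}$ is not supported on the whole support of $p$.

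The hard part will be controlling these remainders with the explicit constants promised in the statement. For the KL remainder, I would apply a multiplicative Chernoff bound: for each $y$, with probability at least $1-\exp(-(j-1)p(y)/8)$ one has $\hat p_{j-1}(y)\geq p(y)/2$, so $\log_2(1/\hat p_{j-1}(y))-\log_2(1/p(y))\leq 1$ on this event; off the event, $\log_2(1/\hat p_{j-1}(y))\leq \log_2(j-1)$ is weighted by the exponentially small failure probability. Summing the geometric series over $y$ and $j$ yields a contribution $O(Cm\log_2 n)$, which can be tracked explicitly to the constant $28$. For the boundary contribution, observe that $\sum_{j\leq n}\bP(Y_j=y,\,\hat p_{j-1}(y)=0)=p(y)\sum_{j\leq n}(1-p(y))^{j-1}\leq 1$, so the expected number of ``not-yet-observed'' hits is at most $m$; each such query costs at most the depth of the NYO node plus $\ceil{\log_2 m}$, which is in turn bounded by $C\ceil{\log_2 j}+\ceil{\log_2 m}$ by $C$-balance applied to the NYO subtree, giving a further $O(Cm\log_2 n + Cm)$. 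The $m^2$ term absorbs the crude worst-case depth $O(m)$ over the initial $O(m)$ steps, before any Chernoff concentration is available.

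For the Huffman corollary, Lemma~\ref{lem:depth_in_Huffman_tree} supplies $C=2$; since $m$ is fixed while $n\to\infty$, the correction terms $56m\log_2 n+12m+m^2$ are all $o(n)$, yielding $\E[T]\leq 2n(H(p)+1)+o(n)$.
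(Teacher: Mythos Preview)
Your overall architecture matches the paper's: Shannon for the lower bound, $C$-balance to get $D_{\cT_j}(y)\leq C+C\log_2(1/\hat p_{j-1}(y))$, the identity $-\sum_y p(y)\log_2\hat p_{j-1}(y)=H(p)+D_{KL}(p\|\hat p_{j-1})$ (the paper writes the same decomposition as $\log_2(p(y))+\log_2(N_{y,i}/(ip(y)))$), a Chernoff tail for the event $\hat p_{j-1}(y)<p(y)/2$, and a separate accounting for ``not-yet-observed'' symbols (which the paper handles by the deterministic observation that each $y$ is new at most once, costing $\leq m$ queries, hence the $m^2$).

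There is, however, a real gap in your control of the KL remainder. On the good event $\{\hat p_{j-1}(y)\geq p(y)/2\}$ you bound $\log_2(p(y)/\hat p_{j-1}(y))\leq 1$. Weighted by $p(y)$ and summed over $y$ this gives $\leq 1$ per step, and summing over $j$ gives a contribution of order $n$, not $O(m\log_2 n)$. With this argument you end up at $Cn(H(p)+2)$ rather than $Cn(H(p)+1)$; the ``geometric series'' you mention only handles the bad-event tail, not the bulk. What is missing is the cancellation coming from the fact that $\hat p_{j-1}(y)$ overshoots $p(y)$ about as often as it undershoots. The paper exploits this via a second-order Taylor expansion on the good event: writing $\log_2(N_{y,i}/(ip(y)))=\log_2(1+x)$ with $x=(N_{y,i}-ip(y))/(ip(y))$, the first-order term $x$ has mean zero and the second-order term contributes $O(1/(ip(y)))$ in expectation, so that $\sum_i\sum_y p(y)\cdot O(1/(ip(y)))=O(m\ln n)$. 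Equivalently, you could bound $D_{KL}(p\|\hat p_{j-1})$ by the $\chi^2$-divergence and use $\E[(\hat p_{j-1}(y)-p(y))^2]=p(y)(1-p(y))/(j-1)$; either way, a variance argument on the good event is needed, and your fixed-threshold Chernoff split alone does not supply it.
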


As shown by Theorem \ref{thm:adaptive}, to reach a probability of error smaller than $\delta$, Algorithm~\ref{algo:adaptive-gen} needs $T = \ln(1/\delta) \alpha$ queries with $\alpha  =C\Delta_2^{-2}(H(p)+1)$ up to higher-order terms, with $C=1$ for Shannon coding and $C=2$ for Huffman coding.
This is an improvement in most cases over the coefficient $\alpha =\Delta_2^{-2}  \ceil{\log_2(m)}$ of Algorithm~\ref{algo:exhaustive}.
Nonetheless, both {\em Algorithms~\ref{algo:exhaustive} and~\ref{algo:adaptive-gen} precisely estimate $p(y)$ for each class $y$, which is more information than needed if one only wants the mode of $p$,} leaving room for improvements.

\subsection{Proofs}
\label{proof:huffman}

This subsection is devoted to the proof of Theorem~\ref{thm:adaptive}.
It is well known from information theory that one has to ask at least $H(p)$ questions on average to be able to identify $Y\sim p$ \citep{Shannon1948}, which explains the lower bound.
We need to prove the upper bound.

Recall that Algorithm~\ref{algo:adaptive-gen} identifies each $Y_i$ by following a $C$-balanced tree with respect to the empirical distribution $\hat p_i$ of the previously observed samples $(Y_j)_{j\leq i}\subset \cY$ as values.
Let us denote by $T_n$ the number of queries made to identify all the $(Y_i)$ for $i\leq n$.
At round $i$, our search procedure associated with the probability distribution $\hat p_i$ identifies each element $y$ in $\cY$ such that $\hat p_i(y)\neq 0$ with at most $C\ceil{-\log_2(\hat p_i(y))}$ queries as per Definition \ref{def:balanced}.
When $y$ is such that $\hat p_i(y)=0$, we identify it with at most $m$ queries.
Hence, at round $i+1\in[n]$, $Y_{i+1}\sim p$ is identified with at most the following number of questions on average
\[
   C + \E_{y\sim p}\left[  -\ind{\hat p_i(y) \neq 0}\cdot C\log_2(\hat p_i(y)) + \ind{\hat p_i(y) = 0}\cdot m\right],
\]
where $\hat p_i$ is our estimate of $p$ after the complete identification of $Y_i$, hence the one used for the queries on $Y_{i+1}$.
Note that in this setting, for each $y\in\cY$, there is at most a single round where we need to identify $Y_i = y$ while $\hat p_{i-1}(y) = 0$.
Recursively, this leads to
\[
      \E_{(Y_i)_{i\leq n}}\bracket{T_n} \leq Cn - C\sum_{i=1}^{n-1} \E_{(Y_j)}\left[ \sum_{y\in\cY}p(y) \ind{\hat p_i(y) \neq 0}\cdot\log_2(\hat p_i(y))\right] + m^2.
\]
Let us assume without loss of generality that $p(y)$ is never $0$, as the terms for which $p(y)=0$ do not contribute to the sum.
Now we split this equation into two parts: a first part where one does not have a tight control of the empirical distribution, but that is really unlikely and will not contribute much to the full picture; and a part where the empirical distribution concentrates towards its real mean,
\begin{align*}
      &- \sum_{i=1}^{n-1} \E_{(Y_j)}\left[ \sum_{y\in\cY}p(y)\ind{\hat p_i(y)\neq 0}\log_2(\hat p_i(y))\right] = 
      - \sum_{i=1}^{n-1}  \sum_{y\in\cY}p(y) \E_{(Y_j)}\left[\ind{N_{y,i}\neq 0}\log_2\left(\frac{N_{y,i}}{i}\right)\right] 
      \\&= \sum_{i, y} p(y) \E_{(Y_j)}\left[
      1_{\{|\frac{N_{y,i}-ip(y)}{ip(y)}|\geq \frac{1}{2}\}}\ind{N_{y,i}\neq 0}\log_2\left(\frac{i}{N_{y,i}}\right) 
      +
      1_{\{|\frac{N_{y,i}-ip(y)}{ip(y)}|<\frac{1}{2}\}}\log_2\left(\frac{i}{N_{y,i}}\right)
      \right],
\end{align*}
where $N_{y,i} = \sum_{j\leq i} \ind{Yj = y}$ is the number of times we have seen $y$ in the first $i$ samples.
The first term corresponds to a highly unlikely event, which we prove in Appendix \ref{app:exhaustive}.

\begin{restatable}{lemma}{technicalqueryone}
    With $N_{i,y}$ denoting the empirical count $\sum_{j\in[i]} \ind{Y_j = y}$,
    \[
          -  \sum_{i=1}^{n-1}  \sum_{y\in\cY}p(y)\E_{(Y_j)}\left[1_{\{|\frac{N_{y,i}-ip(y)}{ip(y)}|\geq \frac{1}{2}\}}\ind{N_{y,i} \geq 1}\log_2\left(\frac{N_{y,i}}{i}\right)\right] 
          \leq 22 m \log_2(n),
    \]
\end{restatable}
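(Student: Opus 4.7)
The plan is to bound the logarithm factor crudely using the constraint $N_{y,i}\geq 1$, control the remaining tail probability via a multiplicative Chernoff bound, and then keep the class-wise sum uniformly bounded in $p(y)$ via an integral comparison.

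First, I would observe that on the event $\{N_{y,i}\geq 1\}$, since $N_{y,i}$ is a positive integer bounded by $i\leq n-1$, we have $N_{y,i}/i \geq 1/i \geq 1/n$, hence $-\log_2(N_{y,i}/i) = \log_2(i/N_{y,i}) \leq \log_2(i) \leq \log_2(n)$. Pulling this deterministic factor out of the expectation reduces the claim to showing that
\[
    \sum_{i=1}^{n-1} \sum_{y\in\cY} p(y)\, \bP\!\left(\left|\frac{N_{y,i}-ip(y)}{ip(y)}\right| \geq \frac{1}{2}\right) \leq 22\, m,
\]
i.e., that each class $y$ contributes an $O(1)$ amount to the double sum.

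Since $N_{y,i}$ is a sum of $i$ i.i.d.\ $\text{Bernoulli}(p(y))$ variables with mean $\mu := ip(y)$, the standard multiplicative Chernoff bound yields $\bP(|N_{y,i}-\mu|\geq \mu/2) \leq 2 e^{-\mu/12}$. For each fixed $y$, the resulting sum over $i$ is dominated by a convergent integral:
\[
    \sum_{i=1}^{n-1} p(y)\cdot 2 e^{-ip(y)/12} \;\leq\; 2\int_0^\infty p(y)\, e^{-tp(y)/12}\, dt \;=\; 24,
\]
uniformly in $p(y)\in(0,1]$. Summing over the $m$ classes and multiplying by the $\log_2(n)$ factor from the first step yields the claim up to the constant; the specific value $22$ can be obtained by using the sharper one-sided bounds $e^{-\mu/10}$ and $e^{-\mu/8}$ separately, which integrate to $10+8=18$ per class.

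The main subtlety lies in the small-$p(y)$ regime: when $p(y)$ is tiny, each individual Chernoff factor $e^{-ip(y)/12}$ is close to $1$ for many values of $i$, so a large number of terms in the sum are non-negligible. What saves the argument is the $p(y)$ prefactor, which provides exactly the cancellation needed to keep the geometric-type sum uniformly bounded in $p(y)$; this is precisely what the integral comparison above captures. Observe that a purely additive Hoeffding bound of the form $e^{-ip(y)^2/2}$ would fail here, since $\sum_i p(y) e^{-ip(y)^2/2}$ is of order $1/p(y)$ and diverges as $p(y)\to 0$, which is why the multiplicative form of Chernoff is essential.
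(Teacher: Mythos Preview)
Your proposal is correct and follows essentially the same approach as the paper: bound $-\log_2(N_{y,i}/i)\le\log_2(i)\le\log_2(n)$, apply a multiplicative Chernoff bound to the deviation event, and then sum. The only cosmetic difference is that the paper sums the geometric series explicitly, using $\sum_{i\ge 1} e^{-ip(y)/10}\le (1-e^{-p(y)/10})^{-1}$ together with the uniform bound $p/(1-e^{-p/10})\le 11$ on $[0,1]$, whereas you use the equivalent integral comparison; both yield the same constant up to the slack already present in the statement.
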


We now consider the second term.
Let us extract the scaling in $H(p)$ inherent to entropy coding.
To that end, we rewrite it as
\[
      -  \sum_{i=1}^{n-1}  \sum_{y\in\cY}p(y)  \E_{(Y_j)}\bigg[1_{\{|\frac{N_{y,i}-ip(y)}{ip(y)}|<\frac{1}{2}\}}\bigg(\log_2\left(p(y)\right) + \log_2\left(\frac{N_{y,i}}{ip(y)}\right) \bigg)\bigg].
\]
The first term presents the desired scaling
\[
      -  \sum_{i=1}^{n-1}  \sum_{y\in\cY}p(y)  \E_{(Y_j)}\left[1_{\{|\frac{N_{y,i}-ip(y)}{ip(y)}|<\frac{1}{2}\}} \log_2\left(p(y)\right) \right]
      \leq  -  \sum_{i=1}^{n-1}  \sum_{y\in\cY}p(y) \log_2\left(p(y)\right) \leq nH(p).
\]
Finally, we deal with the rightmost logarithm, using the Taylor series of the logarithm to show concentration for the empirical mean of the logarithm.
The details are provided in Appendix~\ref{app:exhaustive}.

\begin{restatable}{lemma}{technicalquerytwo}
    With $N_{i,y}$ denoting the empirical count $\sum_{j\in[i]} \ind{Y_j = y}$,
    \[
     - \ln(2)\E_{(Y_j)}\left[1_{\{|\frac{N_{y,i}-ip(y)}{ip(y)}|<\frac{1}{2}\}}\log_2\left(\frac{N_{y,i}}{ip(y)}\right)  \right] 
     \leq 4 m (\ln(n)+1).
    \]
\end{restatable}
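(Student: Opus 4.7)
The plan is to control the truncated log moment via a local Taylor expansion of $\ln$ on the good event $A_{i,y} := \{|N_{y,i}/(ip(y)) - 1| < 1/2\}$, which is exactly the indicator appearing in the statement. Set $X_{i,y} := N_{y,i}/(ip(y)) - 1$, so that $|X_{i,y}| < 1/2$ on $A_{i,y}$. An elementary check of the Taylor series $-\ln(1+x) = -x + \sum_{k \geq 2}(-1)^{k+1}x^k/k$ on each sign of $x$ yields the inequality $-\ln(1+x) \leq -x + x^2$ whenever $|x| \leq 1/2$. Applying this with $x = X_{i,y}$ and bundling the $\ln(2)$ factor with $\log_2$ to convert it to $\ln$, the claim --- read as a bound on the summed quantity $-\ln(2)\sum_{i,y} p(y)\,\E[\ind{A_{i,y}} \log_2(N_{y,i}/(ip(y)))]$ that actually appears in the enclosing calculation --- reduces to bounding $\sum_{i,y} p(y)\,\E[\ind{A_{i,y}}(-X_{i,y} + X_{i,y}^2)]$.

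The quadratic term is straightforward: drop the indicator and use that $N_{y,i}$ is $\text{Binomial}(i, p(y))$, so $\E[X_{i,y}^2] = (1-p(y))/(ip(y))$. Summing, $\sum_{i \leq n-1}\sum_y p(y)\cdot(1-p(y))/(ip(y)) \leq (m-1)\sum_{i=1}^{n-1} 1/i \leq m \ln(n)$, which already matches the leading order of the right-hand side.

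The linear term is where the argument becomes delicate. Since $\E[X_{i,y}] = 0$, I would rewrite $\E[\ind{A_{i,y}} X_{i,y}] = -\E[\ind{A_{i,y}^c} X_{i,y}]$, then apply Cauchy--Schwarz to get $|\E[\ind{A_{i,y}^c} X_{i,y}]| \leq \sqrt{\bP(A_{i,y}^c)\,\E[X_{i,y}^2]}$, and plug in a standard multiplicative Chernoff bound $\bP(A_{i,y}^c) \leq 2\exp(-ip(y)/12)$. The resulting contribution $\sum_y \sqrt{p(y)}\sum_i e^{-ip(y)/24}/\sqrt{i}$ (up to an absolute constant) is bounded by $\sum_y \sqrt{p(y)}\cdot O(1/\sqrt{p(y)}) = O(m)$ by comparing each inner sum to the integral $\int_0^\infty e^{-p(y)t/24}/\sqrt{t}\,dt$. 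This additive $O(m)$ is absorbed into the $4m$ constant, together with the $m\ln(n)$ contribution from the quadratic term.

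The main obstacle is precisely this linear piece: $\log$ is unbounded from below near $0$, so simply upper-bounding the integrand on $A_{i,y}^c$ gives nothing useful. The truncation $\ind{A_{i,y}}$ alone is insufficient; what saves the day is the centering identity $\E[X_{i,y}] = 0$, which converts the question into one about the complement event and lets a Chernoff tail bound take over. Summing in $i$ via the Gaussian-type integral and then in $y$ is essentially routine.
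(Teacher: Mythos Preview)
Your argument is correct and follows the same opening move as the paper: on the good event $A_{i,y}$ you Taylor-expand $-\ln(1+X_{i,y})$ and split into a linear and a quadratic piece, and you correctly read the lemma as a bound on the double sum $\sum_{i\leq n-1}\sum_y p(y)\,(\cdots)$ that appears in the surrounding computation. Your treatment of the quadratic term (drop the indicator, use $\E[X_{i,y}^2]=(1-p(y))/(ip(y))$, sum to $m(\ln n+1)$) is exactly the paper's.

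Where you diverge is the linear term. You use the centering identity $\E[\ind{A}X]=-\E[\ind{A^c}X]$, then Cauchy--Schwarz plus a multiplicative Chernoff tail, and finally a Gaussian-type integral to sum $\sum_i e^{-cip(y)}/\sqrt{i}$ to $O(1/\sqrt{p(y)})$, yielding an additive $O(m)$. The paper instead observes that on $A_{i,y}^c$ one has $|X_{i,y}|\geq 1/2$, hence the pointwise bound $\ind{A_{i,y}^c}\,|X_{i,y}|\leq 2\,\ind{A_{i,y}^c}\,X_{i,y}^2\leq 2X_{i,y}^2$. This folds the linear ``leftover'' back into the variance term with no appeal to Cauchy--Schwarz or Chernoff, and gives the stated constant $4$ directly after summing $\sum_{i,y}p(y)\cdot 4(1-p(y))/(ip(y))$. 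Your route is perfectly valid and arguably more robust (it would still work if the threshold $1/2$ were replaced by something decaying in $i$), but the paper's pointwise trick is shorter and makes the constant $4$ transparent, whereas in your version the absolute constant coming from the Gaussian integral would need to be tracked to actually land on $4m(\ln n+1)$ rather than $Cm(\ln n+1)$ for some $C$.
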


Collecting the different pieces together, we find the upper bound,
\begin{align*}
      &  \E_{(Y_i)_{i\leq n}}\bracket{ T_n} \leq  Cn - C\sum_{i=1}^{n-1} \E_{(Y_j)}\left[ \sum_{y\in\cY}p(y) \ind{\hat p_i(y) \neq 0}\cdot\log_2(\hat p_i(y))\right] + m^2 \\&
     \leq Cn (1+H(p))+ m^2 + 
       22Cm\log_2(n) +\frac{4Cm}{\ln(2)}(\ln(n) +1) + m^2\\&
       \leq 
       Cn (1+H(p))+
      28 C m\log_2(n) + 6Cm + m^2.
\end{align*}

\note{See if we can have a similar bound in high-probability.}

\section{Truncated Search}
\label{sec:truncated}

In this section, we improve upon the previous search procedures using the following key observation.
When estimating the empirical mode of a batch by identifying samples following a Huffman tree, one can stop the search procedure roughly when reaching the depth $D=\abs{\log_2(p(y_1))}$ of the mode, resulting in about $\abs{\log_2 p(y_1)}$ queries per sample on average, rather than $H(p)$ queries when trying to fully identify each sample.

\subsection{Coarse Sufficient Statistics}
We introduce the concept of admissible partitions, which provide {\em sufficient statistics for mode estimation that are weaker than the full empirical distribution $\hat p$,} as well as the concept of $\eta$-admissible partition, which will be useful to build statistically and computationally efficient algorithms.
Recall that a partition $\cP$ of $\cY$ is a subset of $2^\cY$, such that for any $\brace{S_1, S_2} \subset \cP$, $S_1 \cap S_2 = \emptyset$, and $\cup_{S\in\cP} S = \cY$.
Here and throughout the text, for $p\in\prob{\cY}$ and $S\subset \cY$, we define $p(S):= \sum_{y\in S} p(y)$.
  
\begin{definition}[Admissible partitions]
    \label{def:admissible}
  An {\em admissible partition} of $\cY$ with respect to $p\in\prob\cY$ refers to a partition $\cP\subset 2^\cY$ such that
  \begin{equation}
	  \label{eq:admissible}
        \argmax_{S\in\cP} p(S) = \brace{y^*}, \qquad\text{where}\qquad y^* = \argmax_{y\in\cY} p(y).
  \end{equation}
  For $\eta > 0$, an {\em $\eta$-admissible partition} of $\cY$ with respect to $p$ is similarly defined as an admissible partition $\cP\subset 2^\cY$ for which all sets $S \in \cP$ such that $p(S) \geq \eta$ are singletons, and at most a single set $S$ verifies $p(S) < \eta/2$.
  Formally,
  \begin{equation}
	  \label{eq:eta-admissible}
    \card{\brace{S\in\cP\midvert p(S) < \eta / 2}} \leq 1 \qquad\text{and}\qquad
	  \forall\,S\in\cP,\quad{p(S) \geq \eta \quad\Rightarrow\quad |S| = 1}.
  \end{equation}
\end{definition}

The interest of admissible partitions comes from the fact that if $\cP$ is an admissible partition with respect to both $p$ and $p'\in\prob\cY$, then $p$ and $p'$ have the same modes. 
In particular, if $\cP$ is an admissible partition for the empirical distribution $\hat p$ of some batch of samples, the set $S\in \cP$ with the largest mass $\hat p(S)$ is a singleton containing the empirical mode of the batch.
On the computational side, when $p$ is known,   $\eta$-admissible partitions are easy to build, which contrasts with the NP-hardness of finding an admissible partition of minimal cardinality, or of finding a set $S$ that maximizes $p(S)$ under the constraint $p(S) \leq p(y^*)$.\footnote{
    The latter is equivalent to the knapsack problem \citep{Mathews1896}, while the partition problem \citep{Korf1998} can be reduced to the former with the following construction.
    Consider a list $p_0$ of positive integers and include an element $p_*$ equal to half the sum of $p_0$ plus an infinitesimal quantity. 
    Next, normalize all elements to transform the new list into a probability vector $p$ whose elements add up to one.
    Checking if $p_0$ can be partitioned into two lists $S_1$ and $S_2$ that sum to the same value is equivalent to determining whether there exists an admissible partition of $p$ of cardinality three.
}

\subsection{Adaptive Truncated Search}
Building upon Definition \ref{def:admissible}, Algorithm~\ref{algo:batch-search} efficiently constructs an $\eta$-admissible partition $\cP$ of $\cY$ with respect to the empirical frequencies $\hat p$ of a batch of samples $(Y_j)_{j\in[n]}$.
It uses a predefined binary tree $\cT$, and  takes two parameters $\gamma, \epsilon\in [0,1]$ that define $\eta = \gamma\hat p(\hat y) - \epsilon$, where $\hat y$ is the mode of $\hat p$. 
The algorithm starts with the trivial partition $\cP = \brace{\cY}$, and recursively refines it by splitting the set $S_*$ with the greatest empirical mass until $S_*$ is a singleton, which must then be equal to $\{\hat y\}$.
It then keeps splitting non-singleton sets of mass strictly greater than $\gamma\hat p(\hat y) - \epsilon$ until there are no such sets left.
The splitting is done using the tree $\cT$ as follows: we identify each node $V$ in the tree with the set of all the elements that map to its descendent leaves $S(V) = \brace{y\in\cY\midvert V_y \lhd V}$. 
At each time step, the sets $S$ of the current partition correspond to nodes $(V_S)$ of $\cT$, and the set $S_* = S(V_*)$ that has to be split is replaced in the partition by its two children $S(l(V_*)), S(r(V_2))$ in $\cT$. 
This consumes $N(S_*) = \sum_{j\in[n]} \ind{Y_j \in S_*}$ queries to identify which sample belongs to $S_1$ and which to $S_2$.
At the end, a Huffman scheme is applied to merge sets into an $\eta$-admissible partition, and re-balance the tree $\cT$ so that all sets in the partition are at a similar depth, roughly equal to $\log_2(2/\eta)$, in the new tree.
This re-balancing keeps the structure of the ``sub-trees'' below the nodes corresponding to the sets of the partition intact.
  \note{maybe clarify what we mean by this (we shuffle at the level of the partition, but we keep the tree below the partition), and explain a little more why it does what we say it does in the proofs}
In addition, the algorithm identifies which sample belongs to which set of $\cP$, as well as the empirical mode $\hat y$.

\begin{algorithm}[ht]
\caption{Batch Tree Rebalancing}
\KwData{Set $\cY=\brace{y_1, \ldots, y_m}$, binary tree $\cT$, $n$ samples $(Y_j)$, parameters $\gamma, \epsilon \in \bR$.}
    Set $V_*$ the root of the tree $\cT$, $S_* = \cY$, $N(S_*) = n$;\\
    Set $\cS = \brace{(V_*: n)}$ built as a heap, $\cL = \{\}$ an empty list, and $C = -\infty$;\\
    For all $S$ and $V$, we denote $N(S) = \sum_{j\in[n]}\ind{Y_j\in S} \in \bN$ and $S(V) = \brace{y\in\cY\midvert V_y\lhd V} \subset \cY$.\\
    \While{the heap $\cS$ is non-empty and $N(S_*) \geq C$}{
        Set $V_* = \argmax_{V\in\cS} N(S(V))$ by popping it out of the heap $\cS$; set $S_* = S(V_*)$;\\
        \uIf{$V_*$ is a leaf}{
            If it was the first encountered leaf, set $\{\hat y\} := S_*$, and refine $C := \gamma N(\{\hat y\}) - \epsilon n$;\\
            Add $V_*$ to the list $\cL$;
        }
        \Else{
            Make $N(S_*)$ queries to get all the information on $(\ind{Y_j \in S(V)})_{j\in[n]}$ for $V\in\brace{l(V_*), r(V_*)}$;\\
            Insert each child $V \in \brace{l(V_*), r(V_*)}$ into the heap $\cS$ with value $N(S(V))$;
        }
    }
    Add all the remaining elements of the heap to the list $\cL$;\\
    Apply Huffman's scheme, Algorithm \ref{algo:huffman}, to the list $\cL$ to rebalance the top of the tree $\cT$;\\
\KwResult{A tree $\cT$ containing an $(\gamma \hat p(\hat y) - \epsilon)$-admissible partition for $\hat p$, and the empirical mode $\hat y$.}
\label{algo:batch-search}
\end{algorithm}

This suggests a new way to tackle Problem \ref{Problem}: given a batch of samples, Algorithm~\ref{algo:batch-search} yields an admissible partition with respect to the empirical distribution $\hat p$, and in particular identifies its empirical mode $\hat y$, which is the best possible mode estimate. It does not need to fully identify each sample to do so, unlike Algorithms~\ref{algo:exhaustive} and~\ref{algo:adaptive}.
The number of queries consumed by Algorithm~\ref{algo:batch-search} depends on the tree $\cT$ that it takes as input: upon reaching the final partition $\cP$, the algorithm will have required $D$ queries for each sample belonging to a given set $S\in\cP$, where $D$ is the depth of the node associated with $S$ in the tree $\cT$.
To minimize the expected number of queries needed, $\cT$ should be a partial Huffman tree with respect to the distribution $p$; this would result in $D \leq C\ceil{-\log_2(2/p(y_1))}$ for each $S\in\cP$, with $C = 2$ as per Lemma \ref{lem:depth_in_Huffman_tree}.
We do not have a priori access to the distribution $p$, but we can learn the structure of such a tree over several rounds, with a slack parameter $\epsilon_r$ to account for the gradually increasing precision of our estimates. 
This is Algorithm~\ref{algo:truncated}, whose guarantees are provided by Theorem~\ref{thm:truncated}.

\begin{algorithm}
\caption{Truncated Search} 
\KwData{Set $\cY=\brace{y_1, \ldots, y_m}$, samples $(Y_s)$, scheduling $(n_r)\in\bN^\bN$ and $(\epsilon_r)\in\bR^\bN$.}
    Set $\cT$ a binary search tree;\\
    \For{$r\in\bN$}{
    Get $n_r$ fresh samples $(Y_{j,r})_{j\in[n_r]}$ defining an empirical probability $\hat p_r$;\\
    Run Algorithm~\ref{algo:batch-search} with tree $\cT$, $\gamma=1$ and $\epsilon=\epsilon_r$ to obtain the empirical mode $\hat y_r$ and update $\cT$;
    }
\KwResult{Return the last $\hat y_r$ as the estimated mode.}
\label{algo:truncated}
\end{algorithm}

\begin{theorem}[Truncated search performance]
    \label{thm:truncated}
     For any $\delta>0$, 
     the number of queries $T$ needed by Algorithm~\ref{algo:truncated} run with the schedulings $n_r := 2^r$ and $\epsilon_r := \frac{1}{4m}\paren{\frac{2}{3}}^{\frac{r}{2}}$ to correctly identify the mode with probability at least $1-\delta$ satisfies
     \begin{equation}
         \E[T] \leq 8\ln(1/\delta)\Delta_{2}^{-2} \ceil{\log_2\paren{\frac{4}{p(y_1)}}} + o(\ln(1/\delta)).
     \end{equation}
\end{theorem}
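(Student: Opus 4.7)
The plan is to decompose the total queries over rounds, bound each per-round contribution using Huffman optimality on the current tree $\cT_r$, and then exploit the geometric growth $n_r = 2^r$ to conclude that the total cost is dominated by the last useful round. Specifically, I fix $R := \ceil{\log_2(\Delta_2^{-2}\ln(2/\delta))}$ and show that (i) the mode estimate $\hat y_R$ equals $y_1$ with probability at least $1-\delta$, and (ii) $\E[T] = \sum_{r\leq R}\E[T_r]$ satisfies the stated bound.

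For the probabilistic part, I would introduce a good event $E$ on which $|\hat p_r(y)-p(y)|\leq \epsilon_r$ uniformly over $y\in\cY$ and $r\leq R$. Hoeffding's inequality combined with a union bound and the choice $\epsilon_r = (2/3)^{r/2}/(4m)$ gives $2 n_r \epsilon_r^2 = (4/3)^r/(8m^2)$, so that $\bP(E^c)$ is bounded by a small absolute constant independent of $\delta$, contributing only to the $o(\ln(1/\delta))$ term. Separately, Theorem~\ref{thm:empirical-mode} applied to the $n_R$ fresh samples of round $R$ gives $\bP(\hat y_R \neq y_1)\leq \exp(-n_R\Delta_2^2)\leq \delta/2$, using $n_R\geq \Delta_2^{-2}\ln(2/\delta)$.

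The heart of the argument is controlling $\E[T_r\mid E]$. In round $r$, Algorithm~\ref{algo:batch-search} uses exactly $\sum_{j\in[n_r]} D_{\cT_r}(S_j)$ queries, where $S_j\in\cP_{r-1}$ is the partition element that eventually receives $Y_{j,r}$, so $\E[T_r]=n_r\cdot\E_{Y\sim p}[D_{\cT_r}(S(Y))]$. Since $\cT_r$ is built by Huffman-rebalancing $\cP_{r-1}$ with weights $\hat p_{r-1}(S)$, Huffman optimality yields $\E_{Y\sim\hat p_{r-1}}[D_{\cT_r}(S(Y))]\leq H(\hat p_{r-1}|_{\cP_{r-1}})+1$; the uniform closeness $\hat p_{r-1}\approx p$ on $E$ transfers this into $\E_{Y\sim p}[D_{\cT_r}(S(Y))]\leq H(p|_{\cP_{r-1}})+1+o(1)$. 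Summing $\sum_{r\leq R} n_r\leq 2n_R\leq 4\Delta_2^{-2}\ln(2/\delta)$ then gives $\E[T]\leq 4\Delta_2^{-2}\ln(2/\delta)\cdot (H(p|_{\cP_{R-1}})+1)+o(\ln(1/\delta))$, so that recovering the announced bound reduces to showing $H(p|_{\cP_{R-1}})+1\leq 2\ceil{\log_2(4/p(y_1))}$.

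The main obstacle is this entropy bound $H(p|_{\cP_{r-1}})\leq \log_2(4/p(y_1))+O(1)$: a priori the partition might contain many small-mass sets whose contribution to the entropy is not controlled by $p(y_1)$ alone. The key structural fact to exploit is that Algorithm~\ref{algo:batch-search} only refines sets with empirical mass at least $\hat p_{r-1}(\hat y_{r-1})-\epsilon_{r-1}$, so apart from the singleton $\{y_1\}$ the elements of $\cP_{r-1}$ are essentially the ``siblings'' along the root-to-mode path in $\cT_{r-1}$; their masses decay geometrically along that path, and their number is bounded by the depth of $\{y_1\}$ in $\cT_{r-1}$, itself at most $2\ceil{\log_2(2/p(y_1))}$ by Lemma~\ref{lem:depth_in_Huffman_tree}. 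Making this precise while accounting for the $\epsilon_{r-1}$-slack in the stopping criterion and for the additional singletons produced during successive refinements is the technical core of the proof.
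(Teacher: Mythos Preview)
Your high-level decomposition (per-round cost, geometric summation, final-round accuracy via Theorem~\ref{thm:empirical-mode}) matches the paper. But the route you take to bound $\E[T_r]$ is different from the paper's and, as written, has a genuine gap.

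You try to control the \emph{expected} depth via Huffman optimality, $\E_{\hat p_{r-1}}[D]\leq H(\hat p_{r-1}|_{\cP_{r-1}})+1$, then transfer to $p$ and bound the entropy by $\log_2(4/p(y_1))$. Both steps are problematic. For the transfer, $\sum_S p(S)D(S)\leq \sum_S \hat p_{r-1}(S)D(S)+\epsilon\sum_S D(S)$, and $\sum_S D(S)$ is not a priori controlled (a Huffman tree on $k$ leaves can have total path length $\Theta(k^2)$), so uniform closeness of $\hat p_{r-1}$ and $p$ alone does not give the $o(1)$ you claim. For the entropy bound, your description of $\cP_{r-1}$ as ``siblings along the root-to-mode path'' is not what Algorithm~\ref{algo:batch-search} produces: it repeatedly splits the heaviest set, and the resulting partition typically contains several singletons and several aggregate sets, not a single path.

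The paper bypasses both difficulties by bounding the \emph{maximum} depth instead of the average. The structural fact you should use is the second clause of $\eta$-admissibility (Definition~\ref{def:admissible}): at most one $S\in\cP_{r-1}$ satisfies $\hat p_{r-1}(S)<\eta_{r-1}/2$ with $\eta_{r-1}=\hat p_{r-1}(\hat y_{r-1})-\epsilon_{r-1}$. Under the good event this gives $\hat p_{r-1}(S)\geq p(y_1)/4$ for every $S\in\cP_{r-1}$ except possibly one (and that one shares a parent with some other set, hence has the same depth). Lemma~\ref{lem:depth_in_Huffman_tree} then yields $D_{\cT_{r-1}}(S)\leq 2\ceil{\log_2(4/p(y_1))}$ \emph{uniformly} over $S$, which is a pointwise bound on every sample's cost and removes both the entropy computation and the measure-change issue.

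Two smaller points. First, you write that round $r$ uses the tree $\cT_r$ ``built by Huffman-rebalancing $\cP_{r-1}$ with weights $\hat p_{r-1}(S)$''; in fact round $r$ takes as input $\cT_{r-1}$, which was built from $\cP_{r-1}$ and $\hat p_{r-1}$ at the end of round $r-1$, and outputs $\cT_r$ built from $\cP_r$ and $\hat p_r$. Second, you must also argue that at round $r$ the search does not descend \emph{below} the level of $\cP_{r-1}$ in $\cT_{r-1}$; the paper shows (equation~\eqref{eq:trunc-admissible}) that under the good event every non-singleton $S\in\cP_{r-1}$ already satisfies $\hat p_r(S)\leq \hat p_r(\hat y_r)-\epsilon_r$, so the round-$r$ stopping criterion is met at the $\cP_{r-1}$ level. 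Without this, the per-sample cost is not bounded by $D_{\cT_{r-1}}(S)$.
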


Algorithm~\ref{algo:truncated} is a clear amelioration over the previous search algorithms: the probability of error remains the same as a function of the number of samples, while reducing the average number of queries per sample from $H(p)$ to roughly $\abs{\log_2 p(y_1)}$, going from average code length to minimal code length.
Nonetheless, there is still room for improvement.
Algorithm~\ref{algo:truncated} identifies the empirical mode of the entire batch with absolute certitude.
This contrasts with the main takeaway from the bandit literature: {\em one should leverage confidence intervals to build statistically efficient algorithms so as to avoid spending queries solely to rule out highly unlikely events.}
We explore this intuition in the next section.
Note that replacing the Huffman code from Algorithm~\ref{algo:batch-search} by any $C$-balanced code would result in a modified bound 
\begin{equation*}
         \E[T] \leq 4C\ln(1/\delta)\Delta_{2}^{-2} \ceil{\log_2\paren{\frac{4}{p(y_1)}}} + o(\ln(1/\delta)).
     \end{equation*}
 in Theorem~\ref{thm:truncated} (in particular, using a Shannon code would yield $C=1$).   

\subsection{Proofs}

This subsection provides the key elements for the proof of Theorem \ref{thm:truncated}.
Our strategy is the following.
\begin{itemize}
    \item Find a likely event $\cA$ where the algorithm behaves as desired.
	\item Show that its complement ${}^c\cA$ is unlikely enough that the additional queries it elicits are asymptotically negligible.
\end{itemize}

Let $r \geq 2$.
We let $\cT_r$ be the updated tree at the end of round $r$, and $\cP_r$ be the corresponding $\eta_r$-admissible partition where $\eta_r = \hat p_r(\hat y_r) - \epsilon_r$.
Here $\hat p_r$ is the empirical distribution of the samples at round $r$, and $\hat y_r$ is the corresponding empirical mode.
Let us define the event
\begin{equation}
    \label{eq:ass}
    {\cal A} = \brace{\forall\, S\subset\cY, \quad \max\brace{|\hat p_{r-1}(S) -p(S)|, |\hat p_{r}(S) -p(S)|} \leq\frac{\epsilon_{r-1}-\epsilon_{r}}{4}}.
\end{equation}
A union bound, detailed in Appendix \ref{app:truncated}, shows that ${\cal A}$ is likely to happen as the number of rounds increases, as stated in the following lemma.

\begin{restatable}{lemma}{trunctechnicalone}
  The event $\cA$ defined by \eqref{eq:ass} satisfies
    \begin{equation}\label{eq:bound_on_event_Ac}
     \bP\left({}^c\cA\right)\leq 2^{m+1} \exp\left(-\frac{n_{r-1}}{2}\left(\frac{\epsilon_{r-1}-\epsilon_r}{4}\right)^2\right).   
    \end{equation}
\end{restatable}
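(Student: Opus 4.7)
The bound is essentially a union bound layered on top of a standard concentration inequality for Bernoulli averages; no deep idea is required.

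First I would decompose the complement as
\[
    {}^c\cA \;=\; \bigcup_{S \subset \cY}\,\bigcup_{r' \in \{r-1,\,r\}} \Bigl\{\,\bigl|\hat p_{r'}(S) - p(S)\bigr| > \tfrac{\epsilon_{r-1}-\epsilon_r}{4}\Bigr\},
\]
so that the union bound immediately produces a multiplicative factor of $2\cdot 2^m = 2^{m+1}$, matching the prefactor in the statement. This step is purely set-theoretic.

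For each fixed $S\subset\cY$ and each $r'\in\{r-1,r\}$, the quantity $\hat p_{r'}(S) = n_{r'}^{-1}\sum_{j\in[n_{r'}]} \ind{Y_{j,r'}\in S}$ is an empirical mean of $n_{r'}$ i.i.d.\ Bernoulli$(p(S))$ random variables (the samples at round $r'$ are drawn freshly, independently of the tree $\cT$ and of samples from other rounds, as per Algorithm~\ref{algo:truncated}). Hoeffding's inequality (or, equivalently, the sub-Gaussian bound for centered bounded variables with variance proxy $1/4$) then yields, for a suitable universal constant $c>0$, a deviation bound of the form $2\exp(-c\,n_{r'}\epsilon^2)$ on each event in the union. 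Since the scheduling $n_r = 2^r$ satisfies $n_r\geq n_{r-1}$, one may replace $n_{r'}$ by $n_{r-1}$ in the exponent with no loss, and substituting $\epsilon = (\epsilon_{r-1}-\epsilon_r)/4$ yields \eqref{eq:bound_on_event_Ac} up to the choice of concentration constant.

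There is no real obstacle here. The one point that deserves attention is the independence structure: we use that the samples drawn at round $r'$ are fresh, and therefore independent of everything constructed in previous rounds (in particular of the current tree), so that the concentration bound applies directly to each fixed subset $S$ before the union bound is performed. Matching the precise numerical constants $\tfrac{1}{2}$ and $\tfrac{1}{16}$ appearing in the stated exponent is a routine matter of choosing the correct variance proxy for centered Bernoulli variables and absorbing the multiplicative factor of $2$ coming out of Hoeffding into the prefactor $2^{m+1}$.
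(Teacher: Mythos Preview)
Your proposal is correct and follows essentially the same route as the paper: a union bound over the $2^{m}$ subsets $S\subset\cY$ and the two rounds $r'\in\{r-1,r\}$, followed by Hoeffding's inequality for the empirical mean $\hat p_{r'}(S)$ and the monotonicity $n_{r'}\geq n_{r-1}$. The paper's own proof is equally terse and invokes exactly these ingredients.
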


We claim that under the event $\cA$, the $\eta_{r-1}$-admissible partition obtained at the end of round $r-1$ stays admissible at round $r$, namely, we have the implication
\begin{equation}
  \label{eq:trunc-admissible}
  	\cA \quad\subset\quad
    \brace{\text{All sets } S\in \cP_{r-1} \text{ such that } \hat p_r(S)>  \hat p_r(\hat y_r) - \epsilon_r \text{ are singletons}}.
\end{equation}
\begin{proof}[Proof of Equation \eqref{eq:trunc-admissible}]
Indeed, $\cA$ implies that $|\hat p_{r-1}(S) -\hat p_r(S)| \leq (\epsilon_{r-1}-\epsilon_{r}) / 2$ and as a consequence, for any set $S\in \cP_{r-1}$ of cardinality at least $2$, the fact that $\cP_{r-1}$ is $(\hat p_{r-1}(\hat y_r) - \epsilon_{r-1})$-admissible with respect to $\hat p_{r-1}$ leads to
\begin{align*}
    \hat p_r(S)
    &\leq \hat p_{r-1}(S) + \frac{\epsilon_{r-1}-\epsilon_{r}}{2} 
    < \hat p_{r-1}(\hat y_{r-1}) -\epsilon_{r-1} + \frac{\epsilon_{r-1}-\epsilon_{r}}{2} 
    \\&\leq \hat p_{r}(\hat y_{r-1}) - \epsilon_r \leq  \hat p_{r}(\hat y_{r}) - \epsilon_r,
\end{align*}
Hence, under $\cA$, all sets $S\in \cP_{r-1}$ such that $\hat p_r( S) > \hat p_r(\hat S_r) - \epsilon_r$ are singletons. 
\end{proof}

Let us assume that $\cA$ is realized --then all sets $S\in \cP_{r-1}$ are either singletons or such that $\hat p_r(S)\leq   \hat p_r(\hat y_r) - \epsilon_r$.
Hence, when Algorithm~\ref{algo:batch-search} is applied to the $n_r$ samples of round $r$, a sample $Y_{i,r}$ belonging to some set $S\in \cP_{r-1}$ only consumes a number of queries smaller\footnote{The number of queries needed can be strictly smaller than the depth of $S$ if some parent set $S'$ of $S$ is such that $\hat p_{r-1}(S') > \hat p_{r-1}(\hat y_{r-1}) - \epsilon_{r-1}$ but $\hat p_{r}(S') \leq \hat p_{r}(\hat y_{r}) - \epsilon_{r}$.} or equal to the depth $D(S)$ of $S$ in the tree $\cT_{r-1}$.
Since $\cT_{r-1}$ was obtained by applying Huffman's scheme to the sets of $\cP_{r-1}$ at the end of Algorithm~\ref{algo:batch-search} to re-balance the tree $\cT_{r-2}$, we can leverage Lemma \ref{lem:depth_in_Huffman_tree} to bound the depth $D(S)$.
Since $\brace{y_1} \subset \cY$, when $\cA$ is realized
\[
    p(y_1)- \hat p_{r-1}(\hat y_{r-1})\leq p(y_1)- \hat p_{r-1}(y_1) < \frac{\epsilon_{r-1}-\epsilon_{r}}{4} < \epsilon_r.
\]
As $\cP_{r-1}$ is $(\hat p_{r-1}(\hat y_{r-1}) - \epsilon_{r-1})$-admissible, necessarily each $S\in\cP_{r-1}$ (except at most one) is such that 
\[
    \hat p_{r-1}(S)\geq \frac{\hat p_{r-1}(\hat y_{r-1}) - \epsilon_{r-1}}{2} > \frac{p( y_{1}) - \epsilon_{r}- \epsilon_{r-1}}{2}\geq \frac{p( y_{1}) - 2\epsilon_{r-1}}{2} \geq \frac{p( y_{1}) }{4} ,
\]
where we use the fact that  $\epsilon_{r-1} < 1 / 4m \leq p(y_1) / 4$.
Applying Lemma \ref{lem:depth_in_Huffman_tree}, we deduce that 
\begin{equation}
  \label{eq:depth-trunc}
  	D(S)\leq 2\ceil{\log_2(4/p(y_1))}.
\end{equation}
If $S$ is such that $\hat p_{r-1}(S)< \frac{\hat p_{r-1}(\hat y_{r-1}) - \epsilon_{r-1}}{2}$ (and there can be only one such set), then it must share a parent in $\cT_{r-1}$ with some other set $S'$, from which we deduce that its depth follows the same bound.
Hence the total number of queries consumed at round $r$, assuming that $\cA$ is realized, is at most $n_r 2\ceil{\log_2(4 / p(y_1))}$.
If the event $\cA$ is not realized, we can roughly upper bound the number of queries spent per sample by $m$, hence we can upper bound the total number of queries by $n_r m$.
In the special case $r=1$, we similarly need at most $n_r m$ queries.
A few lines of derivations provided in Appendix \ref{app:truncated} lead to the following lemma.

\begin{restatable}{lemma}{truncatedtechnicaltwo}
  When running Algorithm \ref{algo:truncated} with the schedule of Theorem \ref{thm:truncated}, the expected number of queries needed for round $r$ satisfies
  \[
	  \E\bracket{T_r} \leq 2^r \paren{ \ceil{\log_2\paren{\frac{4}{p(y_1)}}} +
	  m2^{m+1} \exp\paren{-\paren{\frac{4}{3}}^r  \frac{C}{m^2} }}
  \]
  for some constant $C>0$.
\end{restatable}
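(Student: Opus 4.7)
The plan is to split the round-$r$ query count according to whether the concentration event $\cA$ of Equation~\eqref{eq:ass} holds:
\[
    \E[T_r] = \E[T_r \ind{\cA}] + \E[T_r \ind{{}^c \cA}],
\]
and to bound each piece by reusing the work already done above.

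On the event $\cA$, the analysis leading up to Equation~\eqref{eq:depth-trunc} shows that every set $S \in \cP_{r-1}$ sits at depth at most $2\ceil{\log_2(4/p(y_1))}$ in the tree $\cT_{r-1}$ used during round $r$ (the at most one light set in $\cP_{r-1}$ shares a parent with another set and inherits the same depth bound). Since Algorithm~\ref{algo:batch-search} consumes at most $D(S)$ queries per sample lying in $S$, this yields the deterministic bound $T_r \ind{\cA} \leq 2 n_r \ceil{\log_2(4/p(y_1))}$. Under the complement ${}^c \cA$, I fall back on the crude full-identification bound $T_r \ind{{}^c \cA} \leq n_r m \ind{{}^c \cA}$. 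Taking expectations produces
\[
    \E[T_r] \leq 2 n_r \ceil{\log_2(4/p(y_1))} + n_r m \, \bP({}^c \cA),
\]
the factor $2$ in the leading term being absorbed into the constant displayed in the lemma.

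The remaining work is to plug the scheduling $n_r = 2^r$ and $\epsilon_r = \frac{1}{4m}(2/3)^{r/2}$ into the tail bound \eqref{eq:bound_on_event_Ac}. A direct computation gives
\[
    (\epsilon_{r-1}-\epsilon_r)^2 = \frac{(1-\sqrt{2/3})^2}{16 m^2}\paren{\frac{2}{3}}^{r-1},
\]
and, using the identity $2^{r-2}(2/3)^{r-1} = \tfrac{3}{8}(4/3)^r$, this gives $\frac{n_{r-1}}{2}\paren{\frac{\epsilon_{r-1}-\epsilon_r}{4}}^2 = \paren{\frac{4}{3}}^r \frac{C}{m^2}$ for a suitable absolute constant $C > 0$. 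Substituting into \eqref{eq:bound_on_event_Ac} delivers $\bP({}^c \cA) \leq 2^{m+1} \exp\paren{-(4/3)^r C/m^2}$, and combining with the previous display yields the announced inequality after factoring out $2^r = n_r$.

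The only nontrivial obstacle is the arithmetic in the exponent: the schedules were chosen precisely so that the geometric growth of $n_{r-1}$ exactly overcomes the geometric decay of $(\epsilon_{r-1}-\epsilon_r)^2$, leaving a residual factor of $(4/3)^r$---which is what ultimately guarantees summability over $r$ when deriving Theorem~\ref{thm:truncated}. Everything else is bookkeeping.
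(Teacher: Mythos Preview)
Your proof is correct and follows essentially the same route as the paper: split on the concentration event $\cA$, use the depth bound \eqref{eq:depth-trunc} on $\cA$, the crude $m$-queries-per-sample bound on ${}^c\cA$, and then plug the scheduling into the tail bound \eqref{eq:bound_on_event_Ac} to extract the $(4/3)^r$ in the exponent. Your arithmetic $2^{r-2}(2/3)^{r-1}=\tfrac{3}{8}(4/3)^r$ matches the paper's computation (which writes the same thing as $2^{r-6}(2/3)^{r-1}(\epsilon_0-\epsilon_1)^2$). One small remark: the factor $2$ in front of $\ceil{\log_2(4/p(y_1))}$ that you say is ``absorbed'' is in fact present in the paper's own derivation but absent from the lemma's displayed statement---this is a minor inconsistency in the paper, not an error on your part. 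The paper also handles $r=1$ separately via $\E[T_1]\leq n_1 m=2m$, which you may wish to mention for completeness.
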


At the end of round $r$, the algorithm outputs the empirical mode $\hat y_r$ of the $n_r$ samples $(Y_{i,r})_{i\in[n_r]}$ defining the empirical probability $\hat p_r$.
We know from Theorem \ref{thm:empirical-mode} that its probability of error $\delta_r$ is bounded by
\[ 
    \delta_r  \leq \exp\paren{-n_r\Delta_{2}^2 } = \exp\paren{-2^r\Delta_{2}^2}. 
\]
Now let $\delta>0$, and define the round
\[
  r_\delta = \min\brace{r\midvert \exp(-n_r\Delta_2^2) \leq \delta}
	= \min\brace{r\midvert n_r \geq \Delta_2^{-2}\ln(1/\delta)}
\]
By construction, $n_{r_\delta} = 2n_{r_\delta-1} \leq  2\ln(1/\delta)\Delta_{2}^{-2}$.
At the end of round $r_\delta$, the probability of outputting the correct class is at least $1-\delta$, and the total number of expected queries used so far by Algorithm \ref{algo:truncated} is at most 
\begin{align*}
  \sum_{r=1}^{r_\delta} \E[T_r]  & \leq 
\sum_{r=1}^{r_\delta}  n_r \left(2  \ceil{\log_2\left(\frac{4}{p(y_1)}\right)} + 
m2^{m+1} \exp\left(-\left(\frac{4}{3}\right)^r  \frac{C}{m^2} 
\right) \right)  \\
&\leq   2\ceil{\log_2\left(\frac{4}{p(y_1)}\right)} \sum_{r=1}^{r_\delta}  n_r  + 
\sum_{r\geq 1} 2^r m 2^{m+1}  \exp\left(-\left(\frac{4}{3}\right)^r  \frac{C}{m^2} 
\right)  \\
& \leq 4 n_{r_\delta} \ceil{\log_2\left(\frac{4}{p(y_1)} \right)} + \tilde{C} \leq 
  8\ln(1/\delta)\Delta_{2}^{-2} \ceil{\log_2\left(\frac{4}{p(y_1)} \right)} + o(\ln(1/\delta)).
\end{align*}
This completes the proof.

\section{Bandit-Inspired Elimination}
\label{sec:elim}

The core idea of this section is to successively eliminate candidate classes $y\in\cY$ in order to lower the required number of queries compared to the Exhaustive Search Algorithm from Section~\ref{sec:exhaustive}, which can be done by adapting and improving the seminal Successive Elimination Algorithm from \citet{Evendar2006}, resulting in Algorithm~\ref{algo:elimination}.

\begin{algorithm}
\caption{Elimination}
\KwData{Set $\cY$, probability $p\in\prob\cY$, samples $\{Y_r\}_{r\in\bN}$, schedule $\sigma:\bN\times\bR^\cY\to\bR^\cY$.}
    Set $S_e = \emptyset$ the set of eliminated guesses, $r = 0 $, $N(y) = 0 \in \bR^m$;\\
    \While{$\cY \backslash S_e$ is not a singleton}{
    Set $r \leftarrow r+1$ and query $\ind{Y_r\in S_e}$;\\
    \uIf{$Y_r \notin S_e$}{
    Identify $Y_r$ with a Huffman tree adapted to empirical counts $N(y)$ on $\cY\setminus S_e$;\\
    Update $N(Y_r) = N(Y_r) + 1$, and set the empirical distribution $\hat p_r(y) \propto N(y)$ accordingly; \\
    Set $S_e = S_e \cup \braced{y \in \cY\setminus S_e\vert \hat p_r(y) + \sigma_y(r,\hat p_r) < \max_{z\in\cY\setminus S_e} \hat p_r(z)}$;
    }
    }
\KwResult{Estimate mode $\hat y$ as the only element of $\cY \backslash S_e$.}
\label{algo:elimination}
\end{algorithm}

Algorithm~\ref{algo:elimination} takes as input a generic parameter function $\sigma$ and works as follows: at any time, it maintains a set of eliminated classes $S_e\subset \cY$ that are deemed unlikely to be the mode.
At the start of round $r$, the algorithm tests whether the new sample $Y_r$ belongs to $S_e$; if not, it is identified with a Huffman tree adapted to the empirical distribution $\hat p_r$ on $\cY\backslash S_e$, which is then updated.\footnote{There is a minor abuse of notations in the description of Algorithm~\ref{algo:elimination}: in the expression ``$\sigma_y(r,\hat p_r)$", the distribution $\hat p_r$ on $\cY\backslash S_e$ is implicitly seen as a distribution on $\cY$ by setting $\hat p_r(y) =0$ if $y\in S_e$ so that it is a valid argument for $\sigma$.
}
Any class $y$ such that its empirical mass $\hat p_r(y)$ satisfies $p_r(y) + \sigma_y(r,\hat p_r) < \max_{z\in\cY\setminus S_e} \hat p_r(z)$ is finally added to the set of eliminated classes $S_e$.
For a well-chosen parameter function $\sigma$, Algorithm~\ref{algo:elimination} has a high probability of identifying the true mode, as stated in Theorem~\ref{thm:elimination}.

\begin{theorem}
\label{thm:elimination}
    Let $\delta > 0$, and define the elimination scheduling $\sigma:\bN\times\bR^\cY\to\bR^\cY$ as
    \begin{equation}
    \label{eq:crit-elim}
      \sigma_y(r, \hat p) = \sqrt{\frac{c\max_z \hat p_r(z)\ln(\pi^2 m r^2/\delta)}{r}}, \qquad\text{with}\qquad c = 24.
    \end{equation}
    With probability $1-\delta$, Algorithm~\ref{algo:elimination} terminates, identifies the right mode and has consumed less than $T$ queries, where, conditionally to this successful event,
    \[
	  \frac{\E[T]}{\ln(1/\delta)} \leq  C_1 \frac{p(y_1)}{\nabla_{2}^{2}} + C_2\sum_{i\in[m]} p(y_i) \frac{p(y_1)}{\nabla_i^2}  \ceil{\log_2\paren{\frac{1}{p(y_i)}}} + o(1),
    \]
    for two universal constants $C_1, C_2$, $\nabla_i := p(y_1) - p(y_1)$ if $i\neq 1$ and $\nabla_1 = \nabla_2$.
\end{theorem}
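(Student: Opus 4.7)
The plan is to adapt the classical elimination-style argument (in the spirit of \citet{Evendar2006}) to our setting, combining a Bernstein-type concentration argument for the empirical proportions with the balanced-code bound of Lemma~\ref{lem:depth_in_Huffman_tree} to control the cost of identifying each non-eliminated sample. I will first define a ``good event'' $\cE$ on which the empirical distribution concentrates around $p$: for every round $r$ and every class $y \in \cY \setminus S_e$, the empirical count $N(y) = \sum_{s\leq r}\ind{Y_s = y}$ satisfies a Bernstein-style inequality $|\hat p_r(y) - p(y)| \lesssim \sqrt{p(y_1) \ln(m r^2/\delta)/r}$ (exploiting that $\mathrm{Var}(\ind{Y_s = y}) \leq p(y_1)$ for $y \notin S_e$). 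Union-bounding over $r \in \bN$ using the $\pi^2/6$ factor absorbed in the scheduling $\sigma_y$, and over the $m$ classes, will yield $\bP({}^c\cE) \leq \delta$; the constant $c = 24$ in~\eqref{eq:crit-elim} is tuned to make this bound work with the empirical proxy $\max_z \hat p_r(z)$ used in place of the unknown $p(y_1)$.

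On $\cE$, I will show two things in the usual way. First, $y_1$ is never eliminated: since $\hat p_r(y_1) + \sigma_{y_1}(r, \hat p_r) \geq p(y_1) \geq p(y_j) \geq \hat p_r(y_j) - \sigma_{y_j}(r, \hat p_r)$ for every $j$, the elimination criterion is never triggered on $y_1$. Second, any suboptimal $y_i$ with $i \geq 2$ is eliminated by some round $r_i = O\bigl(p(y_1) \ln(1/\delta) / \nabla_i^2\bigr)$: once $r$ is large enough that $\sigma_{y_i}(r, \hat p_r) \leq \nabla_i/4$, the concentration on $\cE$ implies $\hat p_r(y_i) + \sigma_{y_i} < \hat p_r(y_1) \leq \max_z \hat p_r(z)$, triggering elimination. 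Consequently, on $\cE$ the algorithm terminates after at most $r_{\mathrm{tot}} = r_2 = O\bigl(p(y_1) \ln(1/\delta) / \nabla_2^2\bigr)$ rounds with the correct output $\hat y = y_1$.

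To bound the query complexity, I will split per round: each round contributes one query from the membership test $\ind{Y_r \in S_e}$, plus, whenever $Y_r \notin S_e$, the depth in the current Huffman tree used to identify $Y_r$. By Lemma~\ref{lem:depth_in_Huffman_tree}, this depth is at most $2\lceil \log_2(1/\tilde p_r(Y_r)) \rceil$ where $\tilde p_r$ is the renormalized empirical distribution on $\cY \setminus S_e$; on $\cE$, $\tilde p_r(y) \geq p(y) / (1 - p(S_e)) \geq p(y)$, so this depth is bounded by $2 \lceil \log_2(1/p(Y_r)) \rceil$ up to lower-order terms. Taking expectations, the number of rounds before elimination where $Y_r = y_i$ is at most $p(y_i) r_i$ for $i \geq 2$, and $p(y_1) r_{\mathrm{tot}}$ for $i=1$, so
\[
\E[T \mid \cE] \leq r_{\mathrm{tot}} + 2 \sum_{i \in [m]} p(y_i) r_i \lceil \log_2(1/p(y_i)) \rceil + o(\ln(1/\delta)),
\]
where $r_1 := r_2$. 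Substituting $r_i \lesssim p(y_1) \ln(1/\delta) / \nabla_i^2$ gives the claimed coefficients $C_1 p(y_1)/\nabla_2^2$ and $C_2 \sum_i p(y_i)\,p(y_1) \lceil\log_2(1/p(y_i))\rceil/\nabla_i^2$.

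The main obstacles I expect are technical rather than conceptual. The first is calibrating the Bernstein bound: because $\sigma_y$ uses the \emph{empirical} maximum $\max_z \hat p_r(z)$ rather than the unknown $p(y_1)$, I will need a self-bounded argument showing that on $\cE$ this empirical maximum is within a constant factor of $p(y_1)$, which is what fixes the constant $c=24$. The second is correctly handling the fact that $\hat p_r$ is normalized over $\cY \setminus S_e$ for the Huffman tree but over $\cY$ (via $N(y)/r$) for the elimination criterion, so care is required to convert the depth bound into a $\log_2(1/p(y))$ statement uniformly over the algorithm's lifetime; the factor $1-p(S_e) \leq 1$ works in our favor. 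Everything else -- the union bound over the (at most $\log_2 r_{\mathrm{tot}}$ informative) rounds, and the conversion from sample complexity to expected query complexity -- is a routine bookkeeping exercise analogous to that already carried out for the truncated search algorithm in Section~\ref{sec:truncated}.
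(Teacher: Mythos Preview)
Your proposal follows essentially the same strategy as the paper: define a high-probability good event via Chernoff/Bernstein-style concentration, show $y_1$ survives while each $y_i$ is eliminated by round $r_i \asymp p(y_1)\ln(1/\delta)/\nabla_i^2$, then bound queries per round via the Huffman depth. The paper structures the good event as four separate pieces $A_1$--$A_4$ (Lemma~\ref{lem:events_have_high_probability}), and the piece you are underestimating is $A_4$. In the \emph{burn-in} phase $r \lesssim \ln(1/\delta_r)/p(y_1)$, your concentration bound $|\hat p_r(y)-p(y)| \lesssim \sqrt{p(y_1)\ln(\cdot)/r}$ is vacuous (the right-hand side exceeds $p(y_1)$), so your chain $\hat p_r(y_1)+\sigma \geq p(y_1) \geq p(y_j) \geq \hat p_r(y_j)-\sigma$ has no content there. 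The paper's fix is \emph{not} a self-bounding argument relating $\max_z \hat p_r(z)$ to $p(y_1)$---that only works \emph{after} burn-in, via $A_1,A_2$. Rather, it is the separate statement (Lemma~\ref{lem:mode-est-crit}) that in early rounds $r\hat p_r(\hat y_r) \leq 24\ln(1/\delta_r)$ with high probability, which forces $\sigma_r \geq \hat p_r(\hat y_r)$ and hence makes the elimination criterion $\hat p_r(y)+\sigma_r < \hat p_r(\hat y_r)$ unsatisfiable regardless of concentration. This two-phase split is what actually pins down $c=24$.

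For the query count, your shortcut ``on $\cE$, $\tilde p_r(y) \geq p(y)$, so the depth is at most $2\lceil\log_2(1/p(Y_r))\rceil$'' is also too quick: $\tilde p_r$ is the \emph{empirical} renormalized distribution, and Lemma~\ref{lem:depth_in_Huffman_tree} bounds depth in terms of that empirical value, not $p(y)$. Your event $\cE$ does not give $\hat p_r(y) \geq p(y)$ pointwise. The paper instead reuses the adaptive-coding analysis from the proof of Theorem~\ref{thm:adaptive} to control $\E\bigl[\ind{\hat p_r(y)\neq 0}\log_2(1/\hat p_r(y))\bigr]$ directly (see the proof of Lemma~\ref{lem:elim-conclusion}), which is where the lower-order terms and the $m^2$, $m\log_2 r$ bookkeeping live. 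Neither point changes your overall architecture, but both need more than the one line you allotted them.
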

As shown in Proposition~\ref{prop:comparison_Delta} below, whose proof can be found in Appendix~\ref{app:elim}, $\nabla_{i}^{-2}p(y_1)$ is equal to $\Delta_{i}^{-2} $ up to a multiplicative factor as long as $p(y_1)$ is bounded away from $1$ --this is no constraint in most interesting use cases, as Problem~\ref{Problem} gets easier the closer $p(y_1)$ gets to $1$.
Hence we see that Theorem~\ref{thm:elimination} is a clear improvement upon the Exhaustive Search Algorithms \ref{algo:exhaustive} and \ref{algo:adaptive}.
In essence, one requires roughly $\Delta_{i}^{-2}\log(1/\delta)$ samples to know with certainty $1-\delta$ that class $y_i$ is not the mode. 
As the Exhaustive Search Algorithm from Section~\ref{sec:exhaustive} never eliminates any class, the number of samples it requires to reach accuracy $1-\delta$ is conditioned by the second most likely class $y_2$, the one that is hardest to correctly dismiss.
Conversely, Algorithm~\ref{algo:elimination} eliminates with high probability the class $y_i$ after having seen about $\log(1/\delta) \Delta_{i}^{-2}$ samples, which translates to an improved asymptotic expected number of required queries as a function of $\delta$.
However, {\em Algorithm~\ref{algo:elimination} does not leverage the coarse statistics} and search truncation mechanism presented in Section \ref{sec:truncated}, leaving room for further improvement. We combine the elimination mechanism of  Algorithm~\ref{algo:elimination} and the search truncation technique of Algorithm~\ref{algo:truncated} in Section~\ref{sec:set-elim}, after the remainder of this section which is dedicated to the discussion of Algorithm~\ref{algo:elimination} and the proof of Theorem~\ref{thm:elimination}.

\begin{restatable}{proposition}{comparisondelta}
    \label{prop:comparison_Delta}
    Let $p \in \prob\cY$ with $p(y_1) > p(y_i)$ for all $i\in[m]$.
  We have the following inequality, with $\nabla_i := p(y_1) - p(y_i)$ and $\Delta_i^2 = - \ln(1 -(\sqrt{p(y_1)} - \sqrt{p(y_i)})^2)$,
    \[
        \frac{p(y_1)}{-\ln(1-p(y_1))} p(y_1)\nabla_i^{-2} \leq \Delta_i^{-2} \leq 4 p(y_1) \nabla_i^{-2}.
    \]
   In particular, if $p(y_1)$ is bounded away from $1$, then $- p(y_1) / \ln(1-p(y_1)$ is bounded away from $0$ and $\Delta_i^{-2} \simeq  p(y_1)\nabla_i^{-2}$ up to a multiplicative constants. 
\end{restatable}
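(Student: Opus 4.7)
The plan is to reduce everything to manipulations of $a := p(y_1)$ and $b := p(y_i)$, exploiting the factorization $(\sqrt{a}-\sqrt{b})^2 = \nabla_i^2/(\sqrt{a}+\sqrt{b})^2$ together with the elementary double-sided estimate $a \leq (\sqrt{a}+\sqrt{b})^2 \leq 4a$, which follows directly from $0 \leq b \leq a$. These two observations convert the quantity appearing inside the logarithm of $\Delta_i^2$ into $\nabla_i^2$ divided by something of order $a$, which is essentially what needs to be shown.

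For the upper bound $\Delta_i^{-2} \leq 4 p(y_1)\nabla_i^{-2}$, I would apply the standard inequality $-\ln(1-x) \geq x$ valid on $[0,1)$. Combined with the right half of the preliminary estimate, this yields $\Delta_i^2 \geq (\sqrt{a}-\sqrt{b})^2 = \nabla_i^2/(\sqrt{a}+\sqrt{b})^2 \geq \nabla_i^2/(4a)$, and inverting gives the claim in one step.

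The lower bound is slightly more delicate: it requires controlling $-\ln(1-x)$ from above, which is only possible on a bounded interval. The key trick is that $\phi(x) := -\ln(1-x)$ is convex on $[0,1)$ and vanishes at $0$, so the chord inequality gives $\phi(x) \leq (x/a)\,\phi(a)$ for all $x \in [0,a]$. I would apply this with $x = (\sqrt{a}-\sqrt{b})^2$, which indeed lies in $[0,a]$ since $b \leq a$ already implies $(\sqrt{a}-\sqrt{b})^2 \leq a$. Using the left half of the preliminary estimate $(\sqrt{a}+\sqrt{b})^2 \geq a$, this yields $\Delta_i^2 \leq (\nabla_i^2/a^2)\,(-\ln(1-a))$, and inverting gives the desired bound $\Delta_i^{-2} \geq p(y_1)^2/[\nabla_i^2(-\ln(1-p(y_1)))]$.

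The asymptotic claim at the end follows immediately: the function $a \mapsto -a/\ln(1-a)$ is continuous on $[0,1)$ with limit $1$ at $0$, so on any interval $[0, 1-\epsilon]$ it is bounded below by a positive constant depending only on $\epsilon$, sandwiching $\Delta_i^{-2}$ between two constant multiples of $p(y_1)\nabla_i^{-2}$. The only mildly nontrivial step is the convexity-chord bound; everything else is one-line algebra, so I do not anticipate any real obstacle.
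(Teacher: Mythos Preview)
Your proof is correct and follows essentially the same approach as the paper: both use $-\ln(1-x)\geq x$ for the upper bound on $\Delta_i^{-2}$ and the chord inequality for the convex function $-\ln(1-x)$ on $[0,p(y_1)]$ for the lower bound, together with the sandwich $p(y_1)\leq(\sqrt{p(y_1)}+\sqrt{p(y_i)})^2\leq 4p(y_1)$. The only cosmetic difference is that the paper reaches the right-hand inequality via $(\sqrt{a}+\sqrt{b})^2\leq 2(a+b)\leq 4a$, whereas you use $\sqrt b\leq\sqrt a$ directly; this makes no material difference.
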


\subsection{Design of the elimination schedule}
Algorithm \ref{algo:elimination} was inspired by the seminal algorithm of \citet{Evendar2006}.
Their Successive Elimination Algorithm uses Hoeffding's inequality to define $\sigma_y(r, p) = \sigma_r$ where $\sigma_r^2 \simeq \ln(1/\delta) / r$.
It is known that Hoeffding's inequality can be refined for sub-gamma variables with Bernstein's inequalities, which was the main motivation behind the UCB-V algorithm of \citet{Audibert2009}.
Bernstein's inequality can be seen as a convenient weak formulation of Chernoff's bounds, which in the case of Bernoulli variables are expressed in terms of  KL-divergences, and have motivated the KL-UCB algorithm of \citet{Capp2013}.
KL-based confidence intervals are defined as
\[
  \sigma_y(r, p) = \inf\brace{q\in[0,1]\midvert  \exp(-r D(p(y)\| p(y) \pm q)) \leq \delta}.
\]
While they arguably provide the best asymptotic results, they introduce computational overhead that we were keen to avoid in our algorithms.

Rather than trying to find the tightest confidence intervals, our definition of $\sigma_y(r, p)$ in Theorem \ref{thm:elimination} comes from a different perspective: the optimization of elimination times.
We know from Lemma \ref{lem:Chernoff}, a variant of Theorem \ref{thm:empirical-mode} provided in Appendix \ref{app:mode} based on a method of \citet{Cramer1938}, that we can not safely eliminate $y_i$ as a mode candidate before having made $r_i \simeq \Delta_i^{-2}\ln(1/\delta)$ observations of both $\ind{Y_j=y_i}$ and $\ind{Y_i=y_1}$.
The  Proposition~\ref{prop:comparison_Delta} below states that this is roughly the same as asking for 
\[
  r_i \simeq p(y_1)\nabla_i^{-2}\ln(1/\delta)
  = p(y_1)(p(y_1) - p(y_i))^{-2}\ln(1/\delta).
\]
Plugging this in the elimination criterion of Algorithm \ref{algo:elimination}, we would like to define $\sigma$ so as to ensure that with high probability,
\[
  \hat p_r(y_1) - \hat p_r(y_i) - \sigma_{y_i}(r, \hat p) \geq 0
\]
for any $r\leq r_i$.
Assuming that the empirical probability $\hat p_r$ converges fast enough to $p$, a confidence parameter  $\sigma$ defined as
\[
   \sigma_{y_i}(r, p) \simeq (p(y_1) - p(y_i))\sqrt{\frac{r_i}{r}} \simeq \sqrt{\frac{p(y_1) \ln(1/\delta)}{r}}
\]
satisfies this constraint.
Of course, we do not have a priori access to $p(y_1)$. Instead, we estimate it at round $r$ with $\max_{y\in \cY} \hat p_r(y)$.
Up to a few constants needed to account for union bounds, this leads to  $\sigma$ as defined in Theorem~\ref{thm:elimination}.
In comparison to the results of \citet{Evendar2006}, whose algorithm requires $O(\sum_i \nabla_i^{-2}\log(1/\delta))$ samples, we gain a factor $p(y_1) < 1$, thanks to which we reach the ideal scaling in $\Delta_i^{-2}$ as long as $p(y_1)$ is bounded away from $1$.

\subsection{Proofs}
In this section, we prove Theorem \ref{thm:elimination}.
Let $\hat p_r$ denotes the empirical probability of the first $r$ samples $(Y_i)_{i\leq r}$, and $S_r$ be the set $S_e$ of rejected classes, updated at each iteration $r$ by Algorithm \ref{algo:elimination}.
We define $\hat y_r = \argmax_{y\in\cY\backslash S_r} \hat p_r(y)$, and $\delta_r = \delta / \pi^2 m r^2$, and we consider the elimination criterion 
\[
  \hat p_r(y) + \sigma_r < \hat p_r(\hat y_r),\qquad
  \sigma_r = \sqrt{\frac{24\hat p_r(\hat y_r)\ln(1/\delta_r)}{r}}
\]
from Algorithm \ref{algo:elimination} with the schedule from Theorem \ref{thm:elimination}.

We have seen in the discussion of the previous subsection that the estimation of the unknown quantity $p(y_1)$ by $\hat p_r (\hat y_r)$ is a key component of the definition of $\sigma_r$.
In fact, it turns out that estimating $p(y_1)$ is much easier than finding the mode $y_1$, yet can help us with that second, harder task. The next two lemmas deal with this subproblem, and are crucial to our proof of Theorem~\ref{thm:elimination}.
Chernoff's bound for Bernoulli variables states that one needs roughly $\ln(1/\delta) / p(y)$ samples to get a good estimate of $p(y)$ up to multiplicative constants.
It leads to the following lemma, proved in Appendix \ref{app:elim}.
\begin{restatable}{lemma}{modeestone}
    \label{lem:mode-est}
    For any $c > 1$, let $\hat p_r$ be the empirical probability associated with $r$ random samples $(Y_i)_{i\leq r}$ independently distributed according to $p\sim \prob\cY$. It holds that
    \[
        \forall\, r\geq \frac{c+1}{(c-1)^2}\frac{1}{p(y)}\ln(1/\delta), \qquad \bP(\hat p_r(y) > cp(y)) \leq \delta
    \]
    and
    \[
        \forall\, r\geq \frac{c^2}{(c-1)^2}\frac{1}{p(y)}\ln(1/\delta), \qquad \bP(\hat p_r(y) < c^{-1}p(y)) \leq \delta.
    \]
\end{restatable}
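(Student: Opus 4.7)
The plan is to reduce both bounds to standard concentration inequalities on a Binomial tail. Fix $y \in \cY$ and let $X_i := \ind{Y_i = y}$, so that the $X_i$ are i.i.d.\ Bernoulli with parameter $p(y)$ and $S_r := r\,\hat p_r(y) = \sum_{i=1}^r X_i$ is $\mathrm{Binomial}(r, p(y))$ with mean $\mu := r p(y)$. The event $\{\hat p_r(y) > c\, p(y)\}$ coincides with $\{S_r > c \mu\}$, and analogously for the lower tail.

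For the upper-tail inequality, I would apply the multiplicative Chernoff bound
\[
	\bP\paren{S_r \geq (1+\epsilon)\mu} \leq \exp\paren{-\frac{\mu \epsilon^2}{2+\epsilon}}
\]
with $\epsilon := c-1$. This directly yields $\bP(\hat p_r(y) > c\, p(y)) \leq \exp(-r p(y)(c-1)^2/(c+1))$; requiring the right-hand side to be at most $\delta$ and solving for $r$ produces exactly $r \geq \tfrac{c+1}{(c-1)^2\, p(y)} \ln(1/\delta)$.

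For the lower-tail inequality, I would use the Chernoff--Cramér method directly to obtain a sharp exponent. For any $t > 0$, Markov's inequality applied to $e^{-tS_r}$ gives
\[
	\bP(S_r \leq \mu/c) \leq e^{t\mu/c}(1 - p(y)(1-e^{-t}))^r \leq \exp\paren{r p(y)\bigl(t/c + e^{-t} - 1\bigr)},
\]
after using $\ln(1-x) \leq -x$. Optimizing in $t$ by choosing $t := \ln c$ yields an exponent of the form $-\mu (c-1-\ln c)/c$, from which a quadratic-in-$(c-1)$ lower bound on $(c-1-\ln c)/c$ and a single rearrangement give the claimed threshold $r \geq \tfrac{c^2}{(c-1)^2\, p(y)} \ln(1/\delta)$.

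The main obstacle is the tightness of the lower-tail constant. The familiar weakening $\bP(S_r \leq (1-\eta)\mu) \leq \exp(-\mu \eta^2/2)$ of the multiplicative Chernoff bound loses a factor of two and would only yield $\tfrac{2c^2}{(c-1)^2\, p(y)} \ln(1/\delta)$. Recovering the exact constant $c^2/(c-1)^2$ therefore requires either working with the exact Chernoff exponent $(c-1-\ln c)/c$ together with a careful elementary inequality, or invoking the sharper binomial bound $\bP(S_r \leq rq) \leq \exp(-r\,D(q\,\|\,p(y)))$ with $q = p(y)/c$ and bounding $D(p(y)/c \,\|\, p(y))$ from below by the desired quadratic form. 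Once this algebraic step is handled, the rest of the argument is purely mechanical.
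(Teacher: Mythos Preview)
Your upper-tail argument is exactly the paper's: apply the multiplicative Chernoff bound $\bP(S_r \geq (1+\epsilon)\mu)\leq \exp(-\mu\epsilon^2/(2+\epsilon))$ with $\epsilon=c-1$ and solve for $r$. Nothing to add there.

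For the lower tail, the paper does \emph{precisely} the ``familiar weakening'' you flagged: it applies $\bP(S_r<(1-\eta)\mu)\leq\exp(-\mu\eta^2/2)$ with $\eta=1-c^{-1}$ and asserts the result. As you correctly computed, that relaxation only delivers the threshold $\tfrac{2c^2}{(c-1)^2 p(y)}\ln(1/\delta)$, so there is a factor-of-two slip between the paper's stated constant $c^2/(c-1)^2$ and what its own one-line derivation actually proves. Your suspicion that something sharper is needed is well placed.

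However, your proposed fix cannot close this gap either. Optimizing the Cram\'er exponent gives $\bP(S_r\leq\mu/c)\leq\exp\bigl(-\mu(c-1-\ln c)/c\bigr)$, so reaching the stated threshold would require
\[
\frac{c-1-\ln c}{c}\;\geq\;\frac{(c-1)^2}{c^2},
\qquad\text{i.e.}\qquad 1-\tfrac{1}{c}\;\geq\;\ln c,
\]
and this fails for every $c>1$ (both sides vanish at $c=1$, and the derivative of the difference is $(1-c)/c^2<0$). The KL route $\exp(-rD(p(y)/c\,\|\,p(y)))$ is the same bound and hits the same wall. So the ``careful elementary inequality'' you allude to does not exist; the constant $c^2/(c-1)^2$ in the second display is a typo for $2c^2/(c-1)^2$, and the paper's proof matches that weaker (and correct) version. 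The downstream uses in the paper take $c=2$, where the discrepancy is between the thresholds $4$ and $8$, and can be absorbed into the various explicit constants.
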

While Lemma \ref{lem:mode-est} shows that one needs about $r \simeq \ln(1/\delta) / p(y_1) $ samples in order to get a good estimate of $p(y_1)$, it is not of much use by itself: as $p(y_1)$ is a priori unknown, we cannot compute this required number of samples.
We could naively estimate it as the first round $r$ such that $r\max_{y\in\cY}\hat p_r(y) \geq \ln(1/\delta)$.
The next lemma shows that this strategy actually works well.

\begin{restatable}{lemma}{modeesttwo}
    \label{lem:mode-est-crit}
    For any $r\in \bN_{\geq 1}$, $\delta >0$ and $c>1$, let $\hat y_r =\argmax_{y\in\cY}\hat p_r(y)$ and consider the event
    \[
      \cA_r  = \brace{r\hat p_r(\hat y_r)\leq c\ln(1/\delta)}.
    \]
    Then
    \[
       \forall\, r \leq \frac{2c^2 - c}{2c + 1 + \sqrt{1+8c}} \frac{1}{p(y_1)} \ln(1/\delta),\qquad \bP({}^c\cA_r) \leq m\delta,
    \]
    and
    \[
       \forall\, r \geq \frac{c^2}{c + 1 - \sqrt{1+2c}} \frac{1}{p(y_1)} \ln(1/\delta),\qquad \bP(\cA_r) \leq \delta.
    \]
\end{restatable}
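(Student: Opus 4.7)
The plan is to reduce the event $\cA_r$ to a tail event on the empirical frequency $\hat p_r(y_1)$, and the event ${}^c\cA_r$ to a union of tail events over all $y \in \cY$, and then to apply multiplicative Chernoff-type bounds. Throughout, the only fact I need about $\hat y_r$ is that $\hat p_r(\hat y_r) = \max_{y\in\cY} \hat p_r(y)$, so in particular $\hat p_r(\hat y_r) \geq \hat p_r(y_1)$ and $\hat p_r(\hat y_r) > c\ln(1/\delta)/r$ if and only if $\hat p_r(y) > c\ln(1/\delta)/r$ for at least one $y$.

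For the second bound (large $r$), the inclusion $\cA_r \subset \{\hat p_r(y_1) \leq c\ln(1/\delta)/r\}$ reduces the problem to a lower-tail Chernoff estimate on $r\hat p_r(y_1) \sim \mathrm{Bin}(r,p(y_1))$. Using $\bP(\hat p_r(y_1) \leq (1-\epsilon) p(y_1)) \leq \exp(-\epsilon^2 r p(y_1)/2)$ and choosing $\epsilon$ such that $(1-\epsilon)p(y_1) = c\ln(1/\delta)/r$, the requirement $\epsilon^2 rp(y_1)/2 \geq \ln(1/\delta)$ becomes (after setting $v = rp(y_1)/\ln(1/\delta)$) the quadratic inequality $(v-c)^2 \geq v$, i.e., $v^2 - (2c+1)v + c^2 \geq 0$. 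The larger root is $\tfrac12\bigl((2c+1)+\sqrt{4c+1}\bigr)$; after rationalization this is exactly $c+1+\sqrt{1+2c} = \tfrac{c^2}{c+1-\sqrt{1+2c}}$, which yields the stated threshold on $r$.

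For the first bound (small $r$), a union bound gives $\bP({}^c\cA_r) \leq \sum_{y\in\cY} \bP(\hat p_r(y) > c\ln(1/\delta)/r)$. The worst term corresponds to $y = y_1$ since $p(y)\le p(y_1)$ makes the event harder for the other classes. Using the upper-tail Chernoff estimate $\bP(\hat p_r(y_1) \geq (1+\epsilon)p(y_1)) \leq \exp\bigl(-\epsilon^2 rp(y_1)/(2+\epsilon)\bigr)$ and choosing $\epsilon$ so that $(1+\epsilon)p(y_1) = c\ln(1/\delta)/r$, the requirement that the exponent be at least $\ln(1/\delta)$ reduces (again with $u = rp(y_1)/\ln(1/\delta)$) to a quadratic inequality in $u$ whose smaller root gives exactly the bound $u \leq (2c^2-c)/(2c+1+\sqrt{1+8c})$. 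Plugging back and combining with the union bound yields $\bP({}^c\cA_r) \leq m\delta$.

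Conceptually the two halves are twin Chernoff arguments; the main obstacle is purely algebraic — picking the right form of the multiplicative Chernoff bound so that the resulting quadratic in the deviation parameter matches the exact constants stated in the lemma. The lower-tail bound $\exp(-\epsilon^2 rp/2)$ and the upper-tail bound $\exp(-\epsilon^2 rp/(2+\epsilon))$ are precisely the ones needed; using looser versions (e.g., Hoeffding) would give the correct rate but worse absolute constants. Once the optimization over $\epsilon$ is carried out and the quadratics are solved, the stated thresholds drop out after a short rationalization.
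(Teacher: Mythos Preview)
Your approach is essentially identical to the paper's: reduce $\cA_r$ and ${}^c\cA_r$ to tail events on $\hat p_r(y_1)$ (via a union bound for ${}^c\cA_r$), apply the multiplicative Chernoff bounds $\exp(-\epsilon^2 rp/2)$ and $\exp(-\epsilon^2 rp/(2+\epsilon))$, and solve the resulting constraint on $r$; the only difference is your parametrization via $v=rp(y_1)/\ln(1/\delta)$ versus the paper's $\alpha_r$. One arithmetic slip: for the lower-tail part the condition $\epsilon^2 rp(y_1)/2\geq\ln(1/\delta)$ gives $(v-c)^2\geq 2v$, not $(v-c)^2\geq v$ --- your stated final threshold $c+1+\sqrt{1+2c}$ is the larger root of the correct quadratic $v^2-(2c+2)v+c^2=0$, not of the one you wrote.
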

Note that by combining Lemmas \ref{lem:mode-est} and \ref{lem:mode-est-crit}, one can derive an efficient $(0, \delta)$-PAC algorithm to get a good estimate of $p(y_1)$ up to  user-defined multiplicative constants: first, use Lemma~\ref{lem:mode-est} and the aforementioned constants to express the number $r_0$ of samples needed as a function of $p(y_1)$ and $\delta$.
This number cannot be explicitly computed by the user, as they do not have access to $p(y_1)$; however, one can use Lemma \ref{lem:mode-est-crit} to define some event $\cA$ associated to some constant $c$ such that once $\cA$ does not hold any more, $r$ is bigger than $r_0$ will high probability. For any such $r$, $\hat p_r(\hat y_r)$ will be a good estimate of $p(y_1)$.
Though we do not explicitly use such an algorithm, it is implicitly at the core of Theorem~\ref{thm:truncated}, as will become apparent in what follows.

As for the proof of Theorem \ref{thm:truncated}, we define likely events that ease the analysis.
Their definitions, and lower bounds on their probabilities, are provided by the following lemma, proven in Appendix \ref{app:elim}.
\begin{restatable}{lemma}{concentrationelim}
\label{lem:events_have_high_probability}
  Let us write $\tilde{\sigma}_r = \sqrt{3 p(y_1)\ln(1/\delta_r) / r}$, and define the events
  \begin{itemize}
	  \item $A_1 = \{\hat p_r(y_1) \geq  p(y_1)/2$  for all $r\geq 4\ln(1/\delta_r)/ p(y_1) \}$,
	  \item $A_2 = \{\hat p_r(\hat y_r) \leq 2p(y_1)$  for all $r\geq 4\ln(1/\delta_r) / p(y_1)\}$,
	  \item $A_3 = \{ |\hat p_r(y_i) - p(y_i) | \leq \tilde{\sigma}_r \}$ for all $r\geq 4 \ln(1/\delta_r) / p(y_1)$ and $i\in[m]$, 
	  \item $A_4 = \{ \sigma_r \geq \hat p_r(\hat y_r)$ for all  $r\leq 4 \ln(1/\delta_r) / p(y_1) \}$.
  \end{itemize}
  Then $\bP(A_i)\geq 1-\delta/6 $ for $i\in\{1,2,4\}$ and $\bP(A_3)\geq 1-\delta/3$.
\end{restatable}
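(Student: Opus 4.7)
The plan is to treat each of the four events separately, with all four ultimately resting on concentration bounds for sums of Bernoulli variables, and to exploit the fact that $\delta_r=\delta/(\pi^2 m r^2)$ is tuned so that $\sum_r \delta_r = \delta/(6m)$, which absorbs the union bounds cleanly (over time $r$, and when needed over the $m$ classes). I will essentially reduce $A_1$ and $A_4$ to the results already established in Lemma \ref{lem:mode-est} and Lemma \ref{lem:mode-est-crit}, derive $A_2$ from another invocation of Lemma \ref{lem:mode-est} combined with a class-wise union bound, and prove $A_3$ from a Bernstein-type concentration inequality.

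For $A_1$, I would apply the second statement of Lemma \ref{lem:mode-est} to the Bernoulli $\mathbb{1}\{Y_j=y_1\}$ with $c=2$, which gives the sufficient sample size $r\geq 4\ln(1/\delta_r)/p(y_1)$ to ensure $\hat p_r(y_1)\geq p(y_1)/2$ with probability at least $1-\delta_r$; a union bound over $r$ gives the claimed $\delta/6$. For $A_4$, I would observe that $\sigma_r\geq \hat p_r(\hat y_r)$ is algebraically equivalent (dividing by $\hat p_r(\hat y_r)>0$) to $r\hat p_r(\hat y_r)\leq 24\ln(1/\delta_r)$, which is the event $\cA_r$ of Lemma \ref{lem:mode-est-crit} with $c=24$; after checking that the threshold $(2c^2-c)/(2c+1+\sqrt{1+8c})$ comfortably exceeds the constant $4$ appearing in the hypothesis $r\leq 4\ln(1/\delta_r)/p(y_1)$, the lemma gives $\mathbb{P}({}^c\cA_r)\leq m\delta_r$, and summing over $r$ yields the bound $\delta/6$.

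For $A_2$, I would bound $\mathbb{P}(\hat p_r(\hat y_r)>2p(y_1))\leq \sum_{y\in\cY}\mathbb{P}(\hat p_r(y)>2p(y_1))$ and handle each class separately. If $p(y)\geq p(y_1)/2$, Lemma \ref{lem:mode-est} with $c$ close to $2$ provides the required bound $\delta_r$ as soon as $r\geq 4\ln(1/\delta_r)/p(y_1)$. If $p(y)<p(y_1)/2$, I would invoke a direct multiplicative Chernoff bound, which gives $\mathbb{P}(\hat p_r(y)>2p(y_1))\leq \exp(-c' r p(y_1))\leq \delta_r$ for an even larger margin, so the small-mass classes contribute negligibly. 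A union bound over $y\in\cY$ and $r$ produces the $\delta/6$ required. For $A_3$, I would apply Bernstein's inequality to each Bernoulli $\mathbb{1}\{Y_j=y_i\}$, using the variance bound $p(y_i)(1-p(y_i))\leq p(y_i)\leq p(y_1)$, so that the deviation $t=\sqrt{3p(y_1)\ln(1/\delta_r)/r}$ yields $\mathbb{P}(|\hat p_r(y_i)-p(y_i)|>t)\leq 2\delta_r$ on the regime $r\geq 4\ln(1/\delta_r)/p(y_1)$ (where the linear correction term in Bernstein is dominated by the variance term); a union bound over $i\in[m]$ and $r$ produces $2m\sum_r \delta_r=\delta/3$.

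The only delicate point is the proof of $A_2$ in the low-probability regime $p(y)\ll p(y_1)$: one cannot naively apply Lemma \ref{lem:mode-est} with a fixed multiplicative constant $c$ because the "target" $2p(y_1)$ is no longer a constant multiple of $p(y)$. The fix is to recognize that exceeding a threshold that is much larger than the mean is a rare-event deviation that multiplicative Chernoff handles with an exponential rate in $rp(y_1)$ rather than $rp(y)$, giving enough slack to complete the union bound over all $y\in\cY$. Everything else is mechanical bookkeeping with the constants.
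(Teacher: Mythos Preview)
Your proposal is correct and follows the same overall architecture as the paper: reduce $A_1$, $A_2$, $A_4$ to Lemmas~\ref{lem:mode-est} and~\ref{lem:mode-est-crit}, handle $A_3$ via a Bernstein/Chernoff-type bound, and use $\sum_r m\delta_r = \delta/6$ to close the union bounds.

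The one place where the paper is more economical is $A_2$. You split into the regimes $p(y)\geq p(y_1)/2$ and $p(y)<p(y_1)/2$ and argue separately; the paper avoids this by the one-line stochastic-dominance observation
\[
\bP\big(\hat p_r(y) > 2p(y_1)\big) \leq \bP\big(\hat p_r(y_1) > 2p(y_1)\big),
\]
valid for every $y$ since $p(y)\leq p(y_1)$ and the binomials are stochastically ordered. This lets a single application of Lemma~\ref{lem:mode-est} (first statement, $c=2$, applied to $y_1$) handle all classes at once, followed by the union bound over $y$. For $A_3$, the paper uses the multiplicative Chernoff bound (Lemma~\ref{lem:chern-mul}) directly rather than Bernstein, exploiting $p(y_i)\leq p(y_1)$ in the denominator and the fact that $\tilde\sigma_r\leq \sqrt{3/4}\,p(y_1)$ in the relevant regime; your Bernstein route works equally well and is essentially equivalent. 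Everything else matches the paper.
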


Those events will allow us to guarantee the validity of Algorithm \ref{algo:elimination} with high probability.
Note that, due to a coincidence in our choice of constants, the event $A_2$ is in fact implied by $A_3$. We nonetheless keep $A_2$ as a separate event for increased readability. 
Intuitively, these events should be interpreted as follows: 
$A_1$ and $A_2$ ensure that $\hat p_r(\hat y)$ is roughly equal to $p(y_1)$ once the threshold $r = \frac{4}{p(y_1)} \ln(1/\delta_r)$, which is given by Lemmas \ref{lem:mode-est} and \ref{lem:mode-est-crit}, is reached.
Past this threshold, the quantity $\tilde{\sigma}_r = \sqrt{\frac{3 p(y_1)\ln(1/\delta_r)}{r}}$ acts as a good deterministic proxy for $\sigma_r/2 = \sqrt{\frac{6\hat p_r(\hat y)\ln(1/\delta_r)}{r}}$, and $A_3$ ensures that the empirical probability mass $\hat p_r(y_i)$ of each class $y_i$ is within an interval of width $ \tilde{\sigma}_r$ centered around the true mass $p(y_i)$.
Before this threshold, $A_4$ ensures that $\sigma_r$ is too large  for the elimination criterion to be satisfied, hence that no classes are eliminated before $r\leq \frac{4}{p(y_1)} \ln(1/\delta_r)$.
All of this combined keeps $y_1$ from being wrongly eliminated, and yields upper bounds on the elimination times of the classes $y_i$ with $i>1$. The next paragraphs will formalize these claims.
We start by showing that the mode is never wrongfully eliminated.

\begin{lemma}
\label{lem:y1_never_eliminated}
  When the events $(A_i)_{i\in[4]}$ hold, the mode $y_1$ is never eliminated.
\end{lemma}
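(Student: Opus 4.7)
The plan is to verify directly that the elimination criterion $\hat p_r(y_1) + \sigma_r < \hat p_r(\hat y_r)$ is never satisfied, by splitting the argument at the threshold $r_0 := 4\ln(1/\delta_r)/p(y_1)$ that appears in the definitions of $A_1$--$A_4$. The proof will proceed by induction on the round $r$ with the invariant $y_1 \in \cY \setminus S_e$; propagating the invariant is exactly the statement that the criterion does not fire on $y_1$ at round $r$, given that it has not fired at any earlier round. In particular, under the inductive hypothesis $\hat y_r = \argmax_{z \in \cY\setminus S_e}\hat p_r(z)$ satisfies $\hat p_r(\hat y_r) \geq \hat p_r(y_1)$.

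In the small-$r$ regime $r \leq r_0$, I would invoke $A_4$ directly: it gives $\sigma_r \geq \hat p_r(\hat y_r)$, so $\hat p_r(y_1) + \sigma_r \geq \sigma_r \geq \hat p_r(\hat y_r)$ regardless of the value of $\hat p_r(y_1)$, and no class, $y_1$ in particular, is eliminated.

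In the large-$r$ regime $r \geq r_0$, the argument has two halves. First, since $y_1$ is the true mode we have $p(\hat y_r) \leq p(y_1)$, and event $A_3$ yields
\[
  \hat p_r(\hat y_r) - \hat p_r(y_1)
  \leq \bigl(\hat p_r(\hat y_r) - p(\hat y_r)\bigr) + \bigl(p(y_1) - \hat p_r(y_1)\bigr)
  \leq 2\tilde\sigma_r.
\]
Second, combining the inductive hypothesis $\hat p_r(\hat y_r) \geq \hat p_r(y_1)$ with $A_1$ gives $\hat p_r(\hat y_r) \geq p(y_1)/2$; plugging this into $\sigma_r = \sqrt{24\hat p_r(\hat y_r)\ln(1/\delta_r)/r}$ yields $\sigma_r \geq \sqrt{12\,p(y_1)\ln(1/\delta_r)/r} = 2\tilde\sigma_r$. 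Together these two estimates give $\hat p_r(y_1) + \sigma_r \geq \hat p_r(\hat y_r)$, closing the induction.

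The main difficulty is purely a matter of constants: the factor $24$ in the definition of $\sigma_r$ and the factors $2$, $3$, $4$ baked into the events $A_1$--$A_4$ must align so that $\sigma_r \geq 2\tilde\sigma_r$ in the large-$r$ regime, which is exactly the slack needed to absorb the empirical gap controlled by $A_3$. Note that $A_2$ is not used in this particular argument; its role in the overall analysis is to bound $\sigma_r$ from above when estimating elimination times for the suboptimal classes, and not for proving that $y_1$ itself survives.
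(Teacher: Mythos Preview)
Your proof is correct and follows essentially the same approach as the paper: split at the threshold $r_0 = 4\ln(1/\delta_r)/p(y_1)$, use $A_4$ below it to conclude nothing is eliminated, and above it combine $A_3$ (to bound $\hat p_r(\hat y_r) - \hat p_r(y_1)$ by $2\tilde\sigma_r$) with $A_1$ and the inductive hypothesis (to obtain $\sigma_r \geq 2\tilde\sigma_r$). Your explicit framing as an induction and your remark that $A_2$ is not needed here are both accurate and match the paper's argument.
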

\begin{proof}
  If $A_4$ holds, then for any $r \leq 4\ln(1/\delta_r)/p(y_1)$ we have $\hat p_r(y) + \sigma_r \geq \hat p_r(\hat y_r)$, hence, by definition of the elimination criterion, no classes can be eliminated at round $r$.

  Let us now consider any round $r\geq 4\ln(1/\delta_r)/p(y_1)$.
  Assume that $y_1$ has not yet been eliminated at the start of round $r$.
  The event $A_3$ implies
  \[  
  	\hat p_r(y_1) \geq p(y_1) - \tilde{\sigma}_r \geq p(y_i) - \tilde{\sigma}_r \geq \hat p_r(y_i) - 2\tilde{\sigma}_r,
  \]
  while the event $A_1$ leads to $\hat p_r(\hat y_r) \geq \hat p_r(y_1) \geq p(y_1)/2$, hence
  \[
  \sigma_r = \sqrt{24 \hat p_r(\hat y_r)\ln(1/\delta_r) / r} \geq \sqrt{12 p(y_1)\ln(1/\delta_r)/r} = 2\tilde{\sigma}_r.
  \]
  This means that the criterion $\hat p_r(y_1) + \sigma_r < \hat p_r(\hat y_r)$ cannot be satisfied,
  and thus $y_1$ cannot be eliminated at round $r$ if it was not at round $r-1$.
  Recursively, we deduce that $y_1$ is never eliminated.
\end{proof}

Let us now focus on the elimination of the other classes $y\neq y_1$. 

\begin{lemma}
\label{lem:elimination_time}
  When the event $(A_j)_{j\in[4]}$ hold, the class $y_i\in \cY\backslash \{y_1\}$ is eliminated no later than when
  \[
	r >  108  \frac{p(y_1)}{\nabla_i^2}\ln(1/\delta_r).
  \]
  Since $\ln(\delta_r) = \ln(1/\delta) + 2\ln(r) + c$, this defines an elimination time
  \begin{equation}
    \label{eq:elimination_time_in_proof}
	r(y_i) = 108\frac{p(y_1)}{\nabla_i^2} \ln(1/\delta) + o(\ln(1/\delta)).
  \end{equation}
\end{lemma}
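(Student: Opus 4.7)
The plan is to show that, under the good events $(A_j)_{j\in[4]}$, the elimination criterion $\hat p_r(y_i) + \sigma_r < \hat p_r(\hat y_r)$ must eventually be triggered and to pinpoint the round at which this happens. By Lemma~\ref{lem:y1_never_eliminated}, $y_1$ is never eliminated under these events, so at every round $\hat p_r(\hat y_r) \geq \hat p_r(y_1)$; this is the anchor we will use on the right-hand side of the criterion.

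First, I would dispose of the small-$r$ regime: event $A_4$ guarantees $\sigma_r \geq \hat p_r(\hat y_r)$ whenever $r \leq 4\ln(1/\delta_r)/p(y_1)$, so no class is eliminated in that window. For $r \geq 4\ln(1/\delta_r)/p(y_1)$, I would invoke $A_3$ to control both sides: applied to $y_i$ it yields $\hat p_r(y_i) \leq p(y_i) + \tilde\sigma_r$, and applied to $y_1$ it yields $\hat p_r(\hat y_r) \geq \hat p_r(y_1) \geq p(y_1) - \tilde\sigma_r$. Event $A_2$ then gives $\hat p_r(\hat y_r) \leq 2p(y_1)$, so that $\sigma_r = \sqrt{24\hat p_r(\hat y_r)\ln(1/\delta_r)/r} \leq \sqrt{48 p(y_1)\ln(1/\delta_r)/r} = 4\tilde\sigma_r$ with $\tilde\sigma_r = \sqrt{3 p(y_1)\ln(1/\delta_r)/r}$.

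Plugging these bounds into the elimination criterion, a sufficient condition for $y_i$ to be eliminated at round $r$ becomes
\[
p(y_i) + \tilde\sigma_r + 4\tilde\sigma_r < p(y_1) - \tilde\sigma_r,
\quad\text{i.e.,}\quad 6\tilde\sigma_r < \nabla_i,
\]
which after squaring is equivalent to $r > 108\, p(y_1)\ln(1/\delta_r)/\nabla_i^2$. Since $\nabla_i \leq p(y_1)$, this threshold automatically exceeds $4\ln(1/\delta_r)/p(y_1)$, so $A_1$–$A_3$ apply and the argument is consistent.

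Finally, I would turn the implicit bound into the explicit asymptotic~\eqref{eq:elimination_time_in_proof}. Using $\ln(1/\delta_r) = \ln(1/\delta) + 2\ln(r) + \ln(\pi^2 m)$, the threshold $r > 108 p(y_1)\nabla_i^{-2}\ln(1/\delta_r)$ is an equation of the type $r > a\ln(1/\delta) + b\ln(r) + c$, whose smallest solution satisfies $r = a\ln(1/\delta) + O(\ln\ln(1/\delta))$ as $\delta \to 0$; the correction is absorbed in the $o(\ln(1/\delta))$ term. The main obstacle is really the bookkeeping: matching the constant $108$ requires carefully tracking the factor $\sqrt{24}$ inside $\sigma_r$ against the deterministic proxy $\tilde\sigma_r$, and ensuring $A_3$ is legitimately applicable to both $y_1$ and $y_i$ at the chosen round — after that, the argument is a direct combination of the four events.
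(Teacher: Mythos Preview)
Your proposal is correct and follows essentially the same approach as the paper: under $(A_j)_{j\in[4]}$, use $A_3$ to get $\hat p_r(\hat y_r)-\hat p_r(y_i)\geq \nabla_i-2\tilde\sigma_r$ and $A_2$ to get $\sigma_r\leq 4\tilde\sigma_r$, so that $\nabla_i>6\tilde\sigma_r$ forces elimination, which is exactly $r>108\,p(y_1)\nabla_i^{-2}\ln(1/\delta_r)$. Your explicit treatment of the small-$r$ regime via $A_4$, the consistency check that the threshold exceeds $4\ln(1/\delta_r)/p(y_1)$, and the derivation of the $o(\ln(1/\delta))$ asymptotic are helpful elaborations but do not depart from the paper's argument.
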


\begin{proof}
  Let $i>1$, and assume that the events $(A_j)_{j\in[4]}$ hold.
  We know from Lemma \ref{lem:y1_never_eliminated} that our hypothesis implies that $y_1$ is never eliminated, hence that $\hat p_r(\hat y_r) \geq \hat p_r(y_1)$. Due to $A_2$, it also implies that 
  \[
  	\tilde{\sigma}_r =  \sqrt{\frac{3 p(y_1)\ln(1/\delta_r)}{r}} \geq \sqrt{\frac{3 \hat p_r(\hat y_r)\ln(1/\delta_r)}{2r}} = \frac{\sigma_r}{4}.
  \]
  Furthermore, $A_3$ implies that 
  \[ 
  	\hat p_r(\hat y_r) - \hat p_r(y_i) \geq p(y_1) - p(y_i) - 2\tilde{\sigma}_r.
  \]
  If the following inequality holds
  \[
    p(y_1) -p(y_i) > 6\tilde{\sigma}_r.
  \]
  then
  \[
  	\hat p_r(\hat y_r) - \hat p_r(y_i) \geq p(y_1) - p(y_i) - 2\tilde{\sigma}_r > 4 \tilde{\sigma}_r\ \geq \sigma_r.
  \]
  The lemma reduces to the characterization of $r$ such that $p(y_1) -p(y_i) > 6\tilde{\sigma}_r$.
\end{proof}

The two previous lemmas have shown that when $(A_i)_{i\in[4]}$ hold, $y_1$ is never eliminated and $y_i$ is eliminated at the latest when $r = 108(p(y_1) - p(y_i))^{-2}  p(y_1) \ln(1/\delta) + o(\ln(1/\delta))$. 
In particular, the algorithm ends and outputs the correct mode before $r_2$.
Using Lemma \ref{lem:events_have_high_probability}, this happens with probability at least
\begin{equation}
  \bP(\cap_{i\in[4]}\cA_i) \geq 1 - \bP({}^{c}A_1 \cup {}^{c}A_2 \cup {}^{c}A_3 \cup {}^{c}A_4) \geq 1- \sum_{i=1}^4  \bP({}^{c}A_i) \geq 1 - \delta,
\end{equation}

Regarding the number of expected queries, we use one query for each sample to check if it belongs to the eliminated set.
If it is not, i.e. $Y_i \notin S_r$, we expect, based on Theorem \ref{thm:adaptive}, to have to ask about $|\log_2(\tilde{p}(y))|$ $\leq \abs{\log_2(p(y))}$ queries to identify $Y_i = y$, where $\tilde p$ is the restriction of $p$ to the set $\cY\setminus S_r$. 

This explains the number of queries in Theorem \ref{thm:elimination},
\begin{align*}
  \E\bracket{T \midvert (A_j)_{j\in[4]}} &\lesssim \sum_{r\leq r(y_2)} \sum_{i\in[m]} p(y_i) (1 + \ind{y_i\notin S_r}|\log_2(p(y_i))|)
  \\ &\lesssim r(y_2) + \sum_{i\in[m]} r(y_i) p(y_i) |\log_2(p(y_i))|
  \\ &\simeq\parend[\Big]{\frac{p(y_1)}{\nabla_2^2} + \sum_{i\leq m} p(y_i)\frac{p(y_1)}{\nabla_i^2} |\log_2(p(y_i))|}\ln(1/\delta).
\end{align*}
The proof technicalities are provided in Appendix \ref{app:elim}, yielding the following lemma.

\begin{restatable}{lemma}{eliminationtech}
  \label{lem:elim-conclusion}
  In the setting of Theorem \ref{thm:elimination}, with the event $(A_i)_{i\in[n]}$ defined in Lemma \ref{lem:events_have_high_probability},
  \[
	\frac{\E\bracket{T | (A_j)_{j\in[4]}}}{\ln(1/\delta}
	  \leq 324  \frac{p(y_1)}{\nabla_2^2} + 216\sum_{i=1}^m p(y_i) \abs{\log_2(p(y_i))}\frac{p(y_1)}{\nabla_i^2} + o(1)
  \]
\end{restatable}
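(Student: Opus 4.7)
The plan is to combine the two lemmas just established in this proof chain with the adaptive Huffman identification cost. Conditionally on $\bigcap_j A_j$, Lemma~\ref{lem:y1_never_eliminated} ensures that $y_1$ is never eliminated, while Lemma~\ref{lem:elimination_time} ensures that each $y_i$ with $i\geq 2$ is eliminated no later than round $r(y_i) := 108\, p(y_1)\nabla_i^{-2}\ln(1/\delta) + o(\ln(1/\delta))$, so that the algorithm terminates by round $r(y_2)$. Setting $r(y_1) := r(y_2)$ will allow a uniform treatment of all classes in what follows.

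I would then decompose $T = T_m + T_i$, where $T_m$ counts the single membership query $\ind{Y_r \in S_e}$ performed at the start of each round, and $T_i$ counts the identification queries incurred whenever $Y_r \notin S_e$. Deterministically, $T_m \leq r(y_2)$. For $T_i$, I would sort contributions by the class of $Y_r$: since $y_i$ is added to $S_e$ no later than round $r(y_i)$, an identification of class $y_i$ can only occur for $r \leq r(y_i)$, so the expected number of identifications of $y_i$ is at most $p(y_i)\, r(y_i)$.

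Per identification, the depth of $y_i$ in the current adaptive Huffman tree on $\cY\setminus S_e$ is at most $2\ceil{\log_2(1/\tilde{\hat p}_r(y_i))}$ by the $2$-balanced property of Lemma~\ref{lem:depth_in_Huffman_tree}, where $\tilde{\hat p}_r$ denotes the empirical distribution restricted and renormalized on $\cY\setminus S_e$. On the event $A_3$, once $r \geq r_0 := 4\ln(1/\delta_r)/p(y_1)$, the concentration $\abs{\hat p_r(y_i) - p(y_i)} \leq \tilde\sigma_r$ gives $\tilde{\hat p}_r(y_i) \gtrsim p(y_i)$, hence depth at most $2\abs{\log_2 p(y_i)} + O(1)$. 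For $r \leq r_0$, event $A_4$ forbids any elimination, so the first $r_0$ samples are identified by an unmodified adaptive Huffman procedure; Theorem~\ref{thm:adaptive} bounds their total identification cost by $2r_0(H(p) + 1) + o(r_0)$. This early-phase contribution can be absorbed into the main bound using $\nabla_i \leq p(y_1)$, which yields both $H(p)/p(y_1) \leq p(y_1)\sum_i p(y_i)\abs{\log_2 p(y_i)}/\nabla_i^2$ and $1/p(y_1) \leq p(y_1)/\nabla_2^2$.

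Collecting everything,
\[
\E\bracket{T \mid \cap_j A_j} \leq r(y_2) + \sum_{i\in[m]} p(y_i)\, r(y_i)\, \paren{2\abs{\log_2 p(y_i)} + 2} + o(\ln(1/\delta)),
\]
and plugging in $r(y_i)$ yields the leading $216 \sum_i p(y_i)\abs{\log_2 p(y_i)}\, p(y_1)/\nabla_i^2$ term, while the $+2$ remainder $216\, p(y_1)\ln(1/\delta)\sum_i p(y_i)/\nabla_i^2 \leq 216\, p(y_1)\nabla_2^{-2}\ln(1/\delta)$ (using $\nabla_i \geq \nabla_2$ with $\nabla_1 := \nabla_2$) combines with $T_m \leq 108\, p(y_1)\nabla_2^{-2}\ln(1/\delta)$ to give the $324\, p(y_1)/\nabla_2^2$ term. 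The main technical obstacle I anticipate is handling the burn-in rounds, where the empirical distribution has not yet concentrated and the per-class depth bound is loose; aggregating them via Theorem~\ref{thm:adaptive} and absorbing the resulting $O(\ln(1/\delta)/p(y_1))$ cost into the main bound using the two inequalities above is the cleanest route I see.
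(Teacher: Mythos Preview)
Your overall skeleton matches the paper's: decompose into the single membership query plus the Huffman identification cost, use the elimination times $r(y_i)=108\,p(y_1)\nabla_i^{-2}\ln(1/\delta)+o(\ln(1/\delta))$ to cap how long each class contributes, and collect the constants to get $324$ and $216$. The arithmetic of your final collection is correct.

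The gap is in how you control the per-identification depth. You invoke $A_3$ to get $\tilde{\hat p}_r(y_i)\gtrsim p(y_i)$ once $r\geq r_0=4\ln(1/\delta_r)/p(y_1)$, but $A_3$ only says $\hat p_r(y_i)\geq p(y_i)-\tilde\sigma_r$ with $\tilde\sigma_r=\sqrt{3p(y_1)\ln(1/\delta_r)/r}$. At $r=r_0$ one has $\tilde\sigma_{r_0}\approx 0.87\,p(y_1)$, so for any class with $p(y_i)\lesssim p(y_1)$ this lower bound is useless; worse, such classes are eliminated by round $r(y_i)\leq 432\ln(1/\delta)/p(y_1)$, i.e.\ well before $\tilde\sigma_r$ shrinks to their scale. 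Your burn-in argument via Theorem~\ref{thm:adaptive} only covers $r\leq r_0$, so the window $r_0<r\leq r(y_i)$ is unhandled for low-mass classes, and a crude depth bound of $m$ there costs $O(m\ln(1/\delta)/p(y_1))$, which cannot be absorbed into the $o(1)$.

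The paper avoids this by \emph{not} conditioning the depth bound on $A_3$. It bounds $Q_r\leq 3+2\,\ind{\hat p_r(y)\neq 0}\log_2(1/\hat p_r(y))+\ind{\hat p_r(y)=0}\,m$, then uses the elementary inequality $\E[S\mid\cap_j A_j]\leq\frac{1}{1-\delta}\E[S]$ for nonnegative $S$ to drop the conditioning, and finally applies the per-round, per-class estimate already proved inside Theorem~\ref{thm:adaptive},
\[
\E\bigl[\ind{\hat p_r(y)\neq 0}\log_2(1/\hat p_r(y))\bigr]\leq |\log_2 p(y)|+\frac{4}{p(y)r\ln 2}+2e^{-rp(y)/10}\log_2 r,
\]
which holds uniformly in $p(y)$ without any concentration hypothesis. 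Summed over $r\leq r(y)$ and $y$, the two correction terms are $O(m\ln\ln(1/\delta))$ and $O(m\log_2 r(y_2))$, hence $o(\ln(1/\delta))$. If you swap your $A_3$-based depth control for this unconditional expectation bound plus the $1/(1-\delta)$ trick, your argument goes through with the stated constants.
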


Lemma \ref{lem:elim-conclusion} ends the proof of Theorem \ref{thm:elimination} and explicits the constants $C_1$ and $C_2$.

\section{Set Elimination}
\label{sec:set-elim}

The intuition behind Algorithm~\ref{algo:truncated} is that given a Huffman tree with respect to $p$, one does not need to go far below the depth $-\log_2 p(y_1) + 1$ when searching for the mode (or equivalently, that classes with low probability can be grouped into sets of classes with probability roughly equal to $p(y_1)/2$). 
Meanwhile, the intuition behind Algorithm~\ref{algo:elimination} is that one can quickly eliminate low-probability classes to focus on likely candidates.
The combination of these ideas yields Algorithm~\ref{algo:set-elimination}, which outperforms both.
Classes are partitioned into sets of mass roughly equal to $p(y_1)/2$, which are eliminated as soon as they appear unlikely to be the mode. Those partitions must sometimes be redefined (as the sets might be of greater mass than initially estimated), but these reorderings are rare and do not impact asymptotic performance.
Hence, under the light of Proposition \ref{prop:comparison_Delta}, Algorithm~\ref{algo:set-elimination} requires roughly $\Delta_i^{-2} \log(1/\delta)$ samples to discard class $y_i$ as a mode candidate with confidence level $1-\delta$, and a sample consumes roughly $\abs{\log_2 p(y_1)}$ queries. 
This explains Theorem~\ref{thm:set-elimination}.

\begin{algorithm}
\caption{Set Elimination} 
\KwData{$\cY$, $(Y_s)\sim p$, scheduling $(n_r)\in\bN^\bN$ and $(\epsilon_r)\in\bR^\bN$, confidence interval parameter $\sigma$.}
    Set the eliminated set $S_e = \emptyset$, $r=0$, and $\cT$ a binary search tree;\\    
    \While{$\cY \backslash S_e$ is not a singleton}{
    Set $r \leftarrow r+1$;\\
    Get $n_r$ fresh samples $(Y_{j,r})_{j\in[n_r]}$;\\
    Ask $(\ind{(Y_{j,r}) \in S_e})_{j\in[n_r]}$ to remove samples that belong to the eliminated set $S_e$;\\
    We call $\hat p_r$ the 
    empirical distribution on $(Y_{j,r})_{j\in[n_r]}$ and  $\hat y_r$ the mode of its restriction to $\cY \backslash S_e$;\\
    Run Algorithm~\ref{algo:batch-search} with tree $\cT$ and parameters $\gamma = 1 / 2, \epsilon = \epsilon_r / \hat p_r(\cY \backslash S_e)$ on the samples $Y_{j,r}\not \in S_e$; \\
    This updates $\cT$ and yields a $\hat p_r(\hat y_r)/2 - \epsilon_r$-admissible partition $\cP_r$ of $\cY \backslash S_e$ with respect to $\hat p_r$;\\
    Set  $\hat p_+(S) = \hat p_r(S) + \sigma(n_r, \hat p_r)$ for $S\in\cP_r$;\\
    Set $S_e = S_e \cup \{y \in S \: | \: S\in\cP, \hat p_+(S) < \hat p_r(\hat y_r)\}$ to eliminate unlikely mode candidates;
    }
\KwResult{Return the mode estimate $\hat y_r$  of the last round $r$ as the estimated mode.}
\label{algo:set-elimination}
\end{algorithm}

\begin{theorem}
\label{thm:set-elimination}
If Algorithm~\ref{algo:set-elimination} is run with schedule $n_r := 2^r$ and $\epsilon_r := \frac{1}{4m}\paren{\frac{2}{3}}^{\frac{r}{2}}$ and confidence interval parameter $\sigma$ defined from a confidence level parameter $\delta > 0$ as\footnote{
    Note that although we do not have access to all the $\{\hat p_r(z)\}_{z\in \cY \backslash S_e}$ when running Algorithm \ref{algo:set-elimination}, we have access to their maximizer (as an output of Algorithm \ref{algo:batch-search}), which ensures that $\sigma(n_r, \hat p_r)$ is computable by the user.
}
  \begin{equation}
	\label{eq:sigma}
	\sigma(n, \hat p) = \sqrt{\frac{c\max_{z\in\cY\setminus S_e} \hat p(z)\ln(\pi^2 m n^2/\delta)}{n}}, \qquad\text{with}\qquad c = 24,
  \end{equation}
then with probability $1-\delta$ Algorithm~\ref{algo:set-elimination} terminates, identifies the right mode and consumes less than $T$ queries, where, conditionally to this successful event,
   \begin{align*}
    \frac{\E \left[ T \right]}{\log(1/\delta)} & \leq C_1\frac{p(y_1)}{\nabla_2^2}
    + C_2\sum_{i\leq m} p(y_i) \frac{p(y_1)}{\nabla_i^2} \ceil{\log_2\left(\frac{10}{p(y_1)}\right)} + o(1),
\end{align*}
for two universal constants $C_1, C_2$, $\nabla_i := p(y_1) - p(y_1)$ if $i\neq 1$ and $\nabla_1 = \nabla_2$.
\end{theorem}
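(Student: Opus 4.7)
The plan is to combine the two proof strategies from Theorem~\ref{thm:truncated} and Theorem~\ref{thm:elimination} into a single analysis. As in both previous proofs, I will isolate a ``good'' event of probability at least $1-\delta$ on which the algorithm behaves as intended, then bound the expected number of queries conditionally on this event, absorbing the contributions from its complement into lower-order terms.

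First, I would set up the good events. Mimicking Lemma~\ref{lem:events_have_high_probability}, define (with $\delta_r = \delta/(\pi^2 m n_r^2)$ and $\tilde\sigma_r = \sqrt{3 p(y_1)\ln(1/\delta_r)/n_r}$) the events $A_1, A_2, A_4$ that $\hat p_r(\hat y_r)\simeq p(y_1)$ whenever $n_r \gtrsim \ln(1/\delta_r)/p(y_1)$, and $A_3$ that $|\hat p_r(S) - p(S)| \leq \tilde\sigma_r$ for every union $S$ of classes; this last event now needs to be taken over all $S\subset\cY$ (at a cost of an additive $m$ in the exponent of the union bound, which only changes the constant). In parallel, I would introduce the admissibility-preservation event $\cA$ analogous to~\eqref{eq:ass}, ensuring $|\hat p_{r-1}(S) - \hat p_r(S)| \leq (\epsilon_{r-1}-\epsilon_r)/4$ for all $S\subset\cY$, so that Lemma \ref{lem:depth_in_Huffman_tree} applied to the output tree of Algorithm~\ref{algo:batch-search} continues to give nodes of depth at most $2\lceil\log_2(4/p(y_1))\rceil$ at round $r$, now for the $\gamma=1/2$ partitioning (which yields sets of mass close to $p(y_1)/2$ rather than $p(y_1)$, changing only constants inside the $\log_2$).

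Next, I would show that under the intersection of these events the algorithm is correct and fast. The proof that $y_1$ is never eliminated is a direct transcription of Lemma~\ref{lem:y1_never_eliminated}: the bound $\sigma(n_r,\hat p_r)\geq 2\tilde\sigma_r$ together with $A_3$ applied to the singleton $\{y_1\}$ (and to any set $S\ni y_1$ in the partition) ensures $\hat p_+(\{y_1\}) \geq \hat p_r(\hat y_r)$. For the elimination times, I would repeat Lemma~\ref{lem:elimination_time}: for $y_i\neq y_1$, $A_3$ applied to the singleton partition element $\{y_i\}$ and $A_2$ give an elimination time $r(y_i) = O(p(y_1)\nabla_i^{-2}\ln(1/\delta))$. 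Crucially, once the partition has stabilized (which is guaranteed by $\cA$ past the boundary $n_r \gtrsim \ln(1/\delta_r)/p(y_1)$), each class $y_i$ of sufficient mass appears as a singleton in $\cP_r$, so the elimination criterion in Algorithm~\ref{algo:set-elimination} genuinely tests $\hat p_r(y_i) + \sigma \lessgtr \hat p_r(\hat y_r)$ and not a bundled version of it; classes of very low mass are lumped into a single set of mass $\simeq p(y_1)/2$, which is eliminated as a whole at round $r(y_2)$ at the latest.

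Finally, I would count queries. Using the depth bound $D(S)\leq 2\lceil\log_2(4/p(y_1))\rceil$ together with the one initial query per sample spent on checking membership in $S_e$, the expected cost of round $r$ is at most $n_r (1 + 2\lceil\log_2(4/p(y_1))\rceil)$ on the good event, plus an $n_r m \cdot \bP({}^c\cA \cup {}^c A_1 \cup\cdots)$ contribution that decays super-polynomially in $r$ and so contributes only $o(\ln(1/\delta))$. Summing over $r$ up to the largest elimination time $r(y_2)$, using that the $n_r = 2^r$ schedule makes the total dominated by the last round, and splitting by class in analogy with Lemma~\ref{lem:elim-conclusion} yields
\[
    \frac{\E[T]}{\ln(1/\delta)} \leq C_1 \frac{p(y_1)}{\nabla_2^2} + C_2 \sum_{i\leq m} p(y_i) \frac{p(y_1)}{\nabla_i^2} \ceil[\Big]{\log_2\paren[\Big]{\frac{10}{p(y_1)}}} + o(1).
\]
The main obstacle will be the bookkeeping for the combined event: one must check that the admissibility-preservation event $\cA$ interacts properly with the elimination set $S_e$ growing from round to round (restrictions to $\cY\setminus S_e$ reshuffle the denominators in $\hat p_r$), and that Lemma~\ref{lem:depth_in_Huffman_tree} still gives the right depth bound when the tree is rebalanced only on the active classes. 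Everything else is a mechanical merger of the two previous proofs.
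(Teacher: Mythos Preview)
Your proof sketch is correct and tracks the paper's own argument closely: both combine the admissibility-preservation event of Theorem~\ref{thm:truncated} with the concentration events $(A_i)_{i\in[4]}$ of Theorem~\ref{thm:elimination}, show $y_1$ is never eliminated, bound per-class elimination times, and sum per-round query costs using the depth bound from Lemma~\ref{lem:depth_in_Huffman_tree}.

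The one substantive difference is your proposal to strengthen $A_3$ to concentration over \emph{all} subsets $S\subset\cY$ at a $2^m$ union-bound cost. The paper avoids this: it keeps $A_3$ over singletons only and instead exploits two structural facts about Algorithm~\ref{algo:batch-search}'s output. First, for the non-elimination of $y_1$, it uses the trivial monotonicity $\hat p_r(S)\geq\hat p_r(y_1)$ for any $S\ni y_1$, so the set-level elimination criterion $\hat p_r(S)+\sigma_r<\hat p_r(\hat y_r)$ would force the singleton-level one, which is already ruled out by singleton $A_3$. Second, for the elimination of $y_i$, it uses that any non-singleton $S\in\cP_r$ automatically satisfies $\hat p_r(S)\leq\hat p_r(\hat y_r)/2$ by the definition of a $(\hat p_r(\hat y_r)/2-\epsilon_r)$-admissible partition, so no concentration on $S$ is needed there either; this is what drives the case split $p(y)\lessgtr p(y_1)/2$ in the paper's elimination-time lemma. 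Your version works but carries an avoidable $m$-dependence into the constants; the paper's use of the partition structure is a little cleaner.
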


Note that the asymptotic expected number of queries required only depends on the classes $y$ that are close in probability mass to $y_1$, as the contribution to the sum of all the classes $y$ that are such that $p(y) \leq p(y_1)/2$ is smaller than $4C_2\ceil{\log_2(10/p(y_1))} \log(1/\delta) / p(y_1) $. 
In particular, we have freed ourselves from any direct dependence in the number $m$ of classes. 
On the other hand, we expect to have to precisely estimate the probability mass $p(y_i)$ of the classes for which $p(y_i)$ is close to $p(y_1)$.
In line with this intuition, the dependence in those classes of the expected number of queries is roughly $C_2 \Delta_i^{-2} \ceil{\log_2(10 / p(y_1))} \log(1/\delta)$, which corresponds, up to a multiplicative factor, to the number of samples needed to disqualify $y_i$ as a mode candidate multiplied by the rough number of queries needed to precisely identify $y_i$ using a Huffman tree adapted to $p$.
These semi-heuristic arguments suggest that there should be no easy way to improve upon Algorithm~\ref{algo:set-elimination}, besides tightening the various constants in Theorem \ref{thm:set-elimination} through more careful computations.

\subsection{Proofs}

This section provides the proof for Theorem \ref{thm:set-elimination}, which is a combination of sorts of the proofs of Theorems \ref{thm:truncated} and \ref{thm:elimination}.
Let $r \geq 2$, $n_r = 2^r$, and $\hat p_r$ be the empirical probability distribution associated to the samples $(Y_{i,r})_{i\in [n_r]}$ used in the $r$-th round in Algorithm~\ref{algo:set-elimination}.
We denote by $S_r$ the random set of all classes that have been eliminated in the previous rounds, and by $\hat y_r$ the mode of $\hat p_r$ restricted to $\cY \backslash S_r$, i.e. $\hat y_r = \argmax_{y\not \in S_r } \hat p_r(y)$.
To simplify notation, we write $\sigma_r$ for $\sigma_y(n_r,\hat p_r)$, and set $\delta_r = \delta / \pi^2 m n_r^2$.

We start by introducing the same events as in the proof of Theorem \ref{thm:elimination}, with the small nuance that the distributions $\hat p_r$ for $r\in \bN$ are independent from each other, and that we only consider the subset of indices $\{n_r\}_r \subset \bN$.
The proof of the associated lemma is the same as that of Lemma \ref{lem:events_have_high_probability}.

\begin{lemma}
  \label{lem:event-set-elim}
  Let us write $\tilde{\sigma}_r = \sqrt{3 p(y_1)\ln(1/\delta_r) / n_r}$, and define the events
\begin{itemize}
    \item $A_1 = \{\hat p_r(y_1) \geq  p(y_1)/2$  for all $r$ such that  $n_r\geq 4\ln(1/\delta_r / p(y_1) \}$,
    \item $A_2 = \{\hat p_r(\hat y_r) \leq 2p(y_1)$  for all $r$ such that  $n_r\geq 4\ln(1/\delta_r) /p(y_1)\}$,
	\item $A_3 = \{ |\hat p_r(y_i) - p(y_i) | \leq  \tilde{\sigma}_r$ for all $r$ such that  $n_r\geq 4\ln(1/\delta_r)/p(y_1)$ and $i\in[m]\}$, 
    \item $A_4 = \{ \sigma_r \geq \hat p_r(\hat y_r)$ for all  $r$ such that  $n_r\leq 4 \ln(1/\delta_r)/p(y_1) \}$.
\end{itemize}
Then $\bP(\cap_{i\in[4]} \cA_i) \geq 1 - \delta$.
\end{lemma}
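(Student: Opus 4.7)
My plan is to follow the argument of Lemma \ref{lem:events_have_high_probability}, which the authors explicitly note applies essentially verbatim. The structural simplification that makes the new setting easier is that in Algorithm~\ref{algo:set-elimination} each round uses a fresh independent batch of $n_r = 2^r$ samples, so the $\hat p_r$ across rounds are mutually independent. Instead of needing a uniform-over-time concentration inequality (as in Lemma \ref{lem:events_have_high_probability}), one can apply a single-batch concentration bound at each round and then union-bound over rounds using the convergence of $\sum_r n_r^{-2} = \sum_r 4^{-r} = 1/3$.

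For $A_1$, $A_2$, and $A_4$ I would invoke the multiplicative Chernoff bound encoded in Lemmas \ref{lem:mode-est} and \ref{lem:mode-est-crit}: for $A_1$, Lemma \ref{lem:mode-est} with $c = 2$ applied to $\ind{Y_{j,r} = y_1}$ gives per-round failure $\delta_r$ whenever $n_r \geq 4\ln(1/\delta_r)/p(y_1)$; for $A_2$, the same argument is union-bounded over the $m$ classes to control $\hat p_r(\hat y_r) = \max_y \hat p_r(y)$; for $A_4$, Lemma \ref{lem:mode-est-crit} is applied with its constant calibrated so that its conclusion $n_r \hat p_r(\hat y_r) \leq 24\ln(1/\delta_r)$ is exactly the rearranged inequality $\sigma_r \geq \hat p_r(\hat y_r)$ on the range $n_r \leq 4\ln(1/\delta_r)/p(y_1)$. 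For the slightly more delicate event $A_3$, I would apply Bernstein's inequality to each $\ind{Y_{j,r} = y_i}$ separately: the factor $3$ in $\tilde\sigma_r = \sqrt{3 p(y_1)\ln(1/\delta_r)/n_r}$ is precisely what is needed so that Bernstein, combined with the variance bound $p(y_i)(1-p(y_i)) \leq p(y_1)$, yields per-class per-round failure of order $\delta_r$, which then gets union-bounded over the $m$ classes and over rounds.

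The final step is the union bound $\bP(\cap_{i\in[4]} A_i) \geq 1 - \sum_i \bP({}^c A_i)$. Summing per-round failure probabilities gives contributions of order $m \sum_r \delta_r \leq \delta/(3\pi^2)$ for each event, so the four contributions together remain comfortably below $\delta$. The only genuine obstacle I anticipate is purely bookkeeping: making the universal constants line up so that the four failure probabilities sum to at most $\delta$ and not a worse constant, handled analogously to the decomposition $\delta/6 + \delta/6 + \delta/3 + \delta/6 = 5\delta/6 \leq \delta$ used implicitly in the proof of Lemma \ref{lem:events_have_high_probability}.
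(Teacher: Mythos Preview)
Your proposal is correct and follows essentially the same route as the paper, which simply says the proof is identical to that of Lemma~\ref{lem:events_have_high_probability}. Two small remarks: first, Lemma~\ref{lem:events_have_high_probability} itself does not use any uniform-over-time concentration---it also applies a per-round bound and union-bounds over $r$, so the independence of the $\hat p_r$ in the present setting is not actually needed for the argument to go through; second, for $A_3$ the paper invokes the multiplicative Chernoff bound (Lemma~\ref{lem:chern-mul}) rather than Bernstein, but for Bernoulli summands these are essentially interchangeable and your variance-based reasoning with $p(y_i)(1-p(y_i))\leq p(y_1)$ leads to the same conclusion.
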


Once again $y_1$ cannot be eliminated when the events $(A_i)_{i\in[4]}$ hold.
\begin{lemma}
  When the events $(A_i)_{i\in[4]}$ defined in Lemma \ref{lem:event-set-elim} holds, Algorithm \ref{algo:set-elimination} does not eliminate the mode.
\end{lemma}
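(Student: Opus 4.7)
The proof will closely mirror that of Lemma~\ref{lem:y1_never_eliminated}, with a small twist to accommodate the set-based elimination criterion of Algorithm~\ref{algo:set-elimination}. The key observation is that $y_1$ is eliminated at round $r$ only if some set $S\in\cP_r$ containing $y_1$ satisfies $\hat p_r(S) + \sigma_r < \hat p_r(\hat y_r)$. Since $y_1\in S$ implies $\hat p_r(S)\geq \hat p_r(y_1)$, it suffices to prove that under the events $(A_i)_{i\in[4]}$ we always have $\hat p_r(y_1) + \sigma_r \geq \hat p_r(\hat y_r)$. The plan is to verify this inequality separately for small and large rounds, split by the threshold $n_r = 4\ln(1/\delta_r)/p(y_1)$ appearing in the definition of the $A_i$.

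For rounds $r$ with $n_r \leq 4\ln(1/\delta_r)/p(y_1)$, I would directly invoke $A_4$, which states that $\sigma_r \geq \hat p_r(\hat y_r)$. Since $\hat p_r(y_1) \geq 0$, this trivially gives $\hat p_r(y_1) + \sigma_r \geq \hat p_r(\hat y_r)$, so no set containing $y_1$ can be eliminated.

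For rounds with $n_r \geq 4\ln(1/\delta_r)/p(y_1)$, the strategy is the same as in Lemma~\ref{lem:y1_never_eliminated}. Assuming $y_1$ has not been eliminated yet (which we will establish by induction on $r$), event $A_1$ gives $\hat p_r(\hat y_r)\geq \hat p_r(y_1) \geq p(y_1)/2$, whence
\[
\sigma_r \;=\; \sqrt{\tfrac{24\,\hat p_r(\hat y_r)\ln(1/\delta_r)}{n_r}} \;\geq\; \sqrt{\tfrac{12\,p(y_1)\ln(1/\delta_r)}{n_r}} \;=\; 2\tilde\sigma_r.
\]
Event $A_3$ then gives $\hat p_r(y_1) \geq p(y_1) - \tilde\sigma_r$ and $\hat p_r(\hat y_r) \leq p(\hat y_r) + \tilde\sigma_r \leq p(y_1) + \tilde\sigma_r$ (using that $y_1$ is the true mode of $p$). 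Combining these bounds,
\[
\hat p_r(y_1) + \sigma_r \;\geq\; p(y_1) - \tilde\sigma_r + 2\tilde\sigma_r \;=\; p(y_1) + \tilde\sigma_r \;\geq\; \hat p_r(\hat y_r),
\]
and hence no set containing $y_1$ meets the elimination criterion.

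I do not foresee a real obstacle: the argument is essentially the same as in the pointwise elimination setting of Section~\ref{sec:elim}, and the passage from singletons to arbitrary sets $S\ni y_1$ only helps, since enlarging $S$ can only increase $\hat p_r(S)$ and make the elimination criterion harder to satisfy. The one subtlety to keep in mind is that we must argue by induction on $r$ so that the hypothesis ``$y_1$ has not yet been eliminated'' (used to write $\hat p_r(\hat y_r)\geq \hat p_r(y_1)$ in Case~2) is propagated across rounds.
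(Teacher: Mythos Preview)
Your proposal is correct and follows essentially the same approach as the paper: reduce the set-based elimination criterion to the pointwise one via $\hat p_r(S)\geq \hat p_r(y_1)$ for any $S\ni y_1$, then verify $\hat p_r(y_1)+\sigma_r\geq \hat p_r(\hat y_r)$ by the two-case split on the threshold $n_r=4\ln(1/\delta_r)/p(y_1)$, exactly as in Lemma~\ref{lem:y1_never_eliminated}. The paper's own proof simply invokes that earlier lemma rather than spelling out the two cases again, but the argument is the same.
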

\begin{proof}
  Assume that $y_1$ was not eliminated at the start of round $r$.
  The round iteration defines a $(\hat p_r (\hat y_r)/2-\epsilon_r)$-admissible partition $\cP_r$ on $\cY\backslash S_r$ with respect to $\hat p_r$, and all samples $Y_{i,r}$ are identified along $\cP_r \cup \{S_r\}$.
  Necessarily $\{\hat y_r\}\in \cP_r$.
  At the end of this round, a class $y\in \cY\setminus S_r$ can only be added to the set $S_r$ of eliminated classes if it belongs to some set $S\subset \cY \backslash S_r$ such that $\hat p_r(S) + \sigma_r < \hat p_r(\hat y_r)$.
  In particular, this implies that $\hat p_r(y) + \sigma_r < p_r(\hat y_r)$, which is the criterion that was applied in the Elimination Algorithm \ref{algo:elimination} from Section \ref{sec:elim}.
  As in the proof of Theorem \ref{thm:elimination}, the events $(A_i)_{i\in[4]}$ ensure that $\hat p_r(y_1) + \sigma_r < p_r(\hat y_r)$ is never true, which means that $y_1$ cannot be eliminated.
\end{proof}

Let us now estimate the expected number of queries required for round $r$, still assuming that events $(A_i)_{i\in[4]}$ are realized.
We introduce the same event as in the proof of Theorem \ref{thm:truncated}.
\begin{equation}
    \label{eq:event-set-elim}
  \cB_r = \brace{\forall\, S\subset\cY, \quad \max\brace{|\hat p_{r-1}(S) -p(S)|,|\hat p_{r}(S) -p(S)|} \leq\frac{\epsilon_{r-1}-\epsilon_{r}}{4}}.
\end{equation}
We have already shown that
\begin{equation}
  \label{eq:bound_Brc}
  \bP({}^c\cB_r)\leq 2^{m+1} \exp\parend[\Big]{-\frac{n_{r-1}}{2}\parend[\Big]{\frac{\epsilon_{r-1}-\epsilon_r}{4}}^2} \leq 2^{m+1}  \exp\parend[\Big]{-\parend[\Big]{\frac{4}{3}}^r  \frac{C}{m^2} },
\end{equation}
for some constant $C>0$.
When $\cB_r$ holds, we can bound the number of queries similarly to what we have done for Theorem \ref{thm:truncated}.

\begin{lemma}
    Let $\cT_{r-1}$ be the tree as updated by Algorithm~\ref{algo:batch-search} at the end of round $r-1$ of Algorithm~\ref{algo:set-elimination}.
    A sample $Y_{i,r} \not \in S_r$ belonging to some set $S\in \cP_{r-1}$ only consumes at round $r$ a number of queries smaller than or equal to the depth $D(S)$ of $S$ in the tree $\cT_{r-1}$.
    When $\cB_r$ \eqref{eq:event-set-elim} holds,
    \[
        D(S)\leq 2\ceil{ \log_2\parend[\Big]{\frac{10}{p(y_1)}}}.
    \]
\end{lemma}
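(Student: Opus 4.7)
The proof decomposes along the two assertions of the lemma: (a) that each sample belonging to $S\in \cP_{r-1}$ consumes at most $D(S)$ queries at round $r$, and (b) the depth bound on $D(S)$ itself. Assertion (a) reduces to showing that $S$ is not further refined at round $r$, while (b) will follow from applying Huffman's depth bound (Lemma~\ref{lem:depth_in_Huffman_tree}) to the tree $\cT_{r-1}$.

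For (a), the plan is to mirror the implication~\eqref{eq:trunc-admissible} of the proof of Theorem~\ref{thm:truncated}, adapted to the present $\gamma = 1/2$ regime. The $\eta_{r-1}$-admissibility of $\cP_{r-1}$ with $\eta_{r-1} = \tfrac{1}{2}\hat q_{r-1}(\hat y_{r-1}) - \epsilon_{r-1}/\hat p_{r-1}(\cY\setminus S_{r-1})$ (where $\hat q_{r-1}$ denotes the empirical distribution conditioned on $\cY\setminus S_{r-1}$) can be rewritten in terms of the unconditional empirical distribution as $\hat p_{r-1}(S) < \tfrac{1}{2}\hat p_{r-1}(\hat y_{r-1}) - \epsilon_{r-1}$ for every non-singleton $S\in \cP_{r-1}$. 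Combining this with $|\hat p_r(S) - \hat p_{r-1}(S)|\leq (\epsilon_{r-1}-\epsilon_r)/2$ (a direct consequence of $\cB_r$) and the observation that $\hat y_{r-1}\notin S_r$ (the mode of $\hat p_{r-1}$ cannot be eliminated at round $r-1$, since its $\sigma$-enlarged mass already matches $\hat p_{r-1}(\hat y_{r-1})$), a short algebraic manipulation yields $\hat p_r(S) < \tfrac{1}{2}\hat p_r(\hat y_r) - \epsilon_r$, which is precisely the non-refinement threshold of Algorithm~\ref{algo:batch-search} called at round $r$ with parameters $\gamma = 1/2$ and $\epsilon=\epsilon_r/\hat p_r(\cY\setminus S_r)$. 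Hence $S$ is not split at round $r$ and each sample in $S$ triggers at most $D(S)$ queries.

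For (b), recall that at the end of round $r-1$ the top of $\cT_{r-1}$ is rebuilt by applying Huffman's scheme to the list $\cL$ of sets of $\cP_{r-1}$, each carrying as value its empirical count in round $r-1$. Renormalizing so that the root has value $1$, Lemma~\ref{lem:depth_in_Huffman_tree} gives $D(S) \leq 2\ceil{\log_2(1/\hat q_{r-1}(S))}$, while $\eta_{r-1}$-admissibility provides $\hat q_{r-1}(S)\geq \eta_{r-1}/2$ for all but at most one set. Under $\cB_r$ combined with the events $(A_i)_{i\in[4]}$ (which guarantee $y_1\notin S_{r-1}$ and $\hat p_{r-1}(y_1)\geq p(y_1) - \epsilon_{r-1}/4$), one has $\hat q_{r-1}(\hat y_{r-1}) \geq \hat q_{r-1}(y_1) \geq \hat p_{r-1}(y_1)$ and $\hat p_{r-1}(\cY\setminus S_{r-1}) \geq \hat p_{r-1}(y_1)$; together with the rapidly-decaying schedule $\epsilon_r = (4m)^{-1}(2/3)^{r/2}$ (so that $\epsilon_{r-1}/p(y_1)^2$ is eventually negligible given $p(y_1)\geq 1/m$), these inequalities imply $\eta_{r-1}/2 \geq p(y_1)/10$, and hence $D(S) \leq 2\ceil{\log_2(10/p(y_1))}$.

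The one remaining case is the possibly exceptional set $S^*\in\cP_{r-1}$ with $\hat q_{r-1}(S^*) < \eta_{r-1}/2$. Since $\eta$-admissibility permits at most one such set, Huffman's construction must pair $S^*$ with a sibling $S'$ of mass at least $\eta_{r-1}/2$ under a shared parent, so $D(S^*) \leq D(S') + 1$ and the final bound is preserved thanks to the slack built into the constant $10$. The main obstacle is the second assertion, and more precisely the lower bound $\eta_{r-1}/2 \geq p(y_1)/10$: one must carefully track the normalization factor $\hat p_{r-1}(\cY\setminus S_{r-1})$ that appears both in $\hat q_{r-1}$ and in the rescaled $\epsilon$-parameter, and verify that the additive slack of order $\epsilon_{r-1}/p(y_1)$ remains negligible compared to $p(y_1)$ itself. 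The few early rounds for which this slack is not yet negligible contribute only $O(n_r\cdot m)$ queries apiece and are absorbed into the $o(\ln(1/\delta))$ term of Theorem~\ref{thm:set-elimination}.
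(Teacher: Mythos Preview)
Your approach is correct and follows essentially the same structure as the paper's proof: part (a) mirrors the implication~\eqref{eq:trunc-admissible}, and part (b) combines the $\eta$-admissibility lower bound with Lemma~\ref{lem:depth_in_Huffman_tree}, with the exceptional low-mass set handled via the sibling argument. You are in fact more careful than the paper on two points: you observe that $\hat y_{r-1}\notin S_r$ holds unconditionally by the elimination criterion (the paper silently imports this from the Truncated Search proof, where no elimination occurs), and you make explicit that $y_1\notin S_{r-1}$ requires the events $(A_i)_{i\in[4]}$, which the paper assumes from the surrounding context without restating it in the lemma.

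The one place where you overcomplicate matters is the lower bound on $\eta_{r-1}/2$. Working with the conditional distribution $\hat q_{r-1}$ brings in the normalization factor $\hat p_{r-1}(\cY\setminus S_{r-1})$, but since this factor is at most $1$ it can only \emph{help} the lower bound; there is no need to track it or to argue that $\epsilon_{r-1}/p(y_1)^2$ is ``eventually negligible''. The paper works directly with the unnormalized masses $\hat p_{r-1}(S)$ (the normalization cancels in the admissibility condition) and uses the simple observation that $\epsilon_{r-1}\leq \epsilon_0 = 1/(4m)\leq p(y_1)/4$ for \emph{every} $r\geq 1$. This gives
\[
\hat p_{r-1}(S)\;\geq\;\frac{\hat p_{r-1}(\hat y_{r-1})/2-\epsilon_{r-1}}{2}\;>\;\frac{p(y_1)-\tfrac{9}{4}\epsilon_{r-1}}{4}\;>\;\frac{7p(y_1)}{64}\;>\;\frac{p(y_1)}{10}
\]
uniformly in $r$, so no early-round exceptions need to be absorbed into the $o(\ln(1/\delta))$ term.
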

\begin{proof}
    For $r\geq 1$, $\cT_r$ is built as a Huffman tree on the nodes corresponding to the sets of $\cP_r$ with respect to $\hat p_r$.
    Similarly to the case of the Truncated Search Algorithm \ref{algo:truncated} and the proof of the associated Theorem \ref{thm:truncated}, when $B_r$ is realized, all sets $S\in \cP_{r-1}$ are either singletons or such that $\hat p_r(S)\leq \hat p_r(\hat y_r)/2 - \epsilon_r$.
    Hence, when Algorithm~\ref{algo:batch-search} is applied to the $n_r$ samples of round $r$, a sample $Y_{i,r} \not \in S_r$ belonging to some set $S\in \cP_{r-1}$ only consumes a number of queries smaller than or equal to the depth $D(S)$ of $S$ in the tree $\cT_{r-1}$.
    Using as in the proof of Theorem \ref{thm:truncated} the facts that $\epsilon_{r-1}<p(y_1) / 4$ and that all $S\in \cP _{r-1}$ (except at most one) satisfy
    \begin{align*}
      \hat p_{r-1}(S) & 
      \geq \frac{\hat p_{r-1}(\hat y_{r-1})/2 - \epsilon_{r-1}}{2}
      > \frac{p( y_{1}) - \frac{\epsilon_{r-1} -\epsilon_r}{4} - 2\epsilon_{r-1}}{4} 
       >  \frac{p( y_{1}) - \frac{9\epsilon_{r-1}}{4}}{4} > \frac{7 p(y_{1})}{64}.
    \end{align*}
    We conclude that $D(S)\leq 2\ceil{\log_2(64 / 7p(y_1))}$ thanks to Lemma \ref{lem:depth_in_Huffman_tree}.
\end{proof}

If the event $\cB_r$ is not realized, we can roughly upper bound the number of queries spent per sample by $m$, hence we can upper bound the total number of queries by $n_r m$.
In the special case $r=1$, we similarly need at most $n_r m$ queries.

Let $T_r$ be the total number of queries needed for round $r$ (it is a random variable).
We spend $n_r$ queries at the start of the round to check whether $Y_{i,r} \in S_r$ for each sample $Y_{i,r}$, $i\in [n_r]$.
We have shown that if $r\geq 2$, then
\begin{equation}
    \label{eq:set-elimination-first-bound-T}
    T_r \leq  \sum_{i=1}^{n_r} 1 +  1_{\{Y_{i,r}\not \in S_r\}} \left( 2\ceil{\log_2\left(\frac{10}{p(y_1)}\right)} + 1_{{}^c B_r}m \right),
\end{equation}
while if $r=1$, $T_r \leq n_1 m$.

As previously, we will bound (the expectation of) $1_{\{Y_{i,r}\not \in S_r\}}$ by a deterministic quantity under the events $(A_j)_{j\in[4]}$.

\begin{lemma}
    \label{lem:bound-nry}
    When the events $(A_j)_{j\in[4]}$ holds, then $y_i\neq y_1$ is added to $S_r$ no later than when $r=r(y)$, where the elimination time $r(y)$ is defined as
    \[
    r(y) = \min \left\{ r  \midvert  n_{r} > 108\max\left((p(y_1)-p(y))^{-2}, (p(y_1)/2)^{-2} \right)  p(y_1) \ln(1/\delta_r) \right\} 
    \]
    In particular
    \[
        n_{r(y)}\leq 216\max\left((p(y_1)-p(y))^{-2}, (p(y_1)/2)^{-2} \right)  p(y_1) \ln(1/\delta_r).
    \]
\end{lemma}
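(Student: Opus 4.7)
The plan is to follow the template of Lemma \ref{lem:elimination_time}, adapting it to the set-level elimination criterion used in Algorithm \ref{algo:set-elimination}. Fix $y = y_i$ with $i \neq 1$, condition on the events $(A_j)_{j\in[4]}$ holding, and let $S \in \cP_r$ denote the (random) set of the admissible partition that contains $y$ at round $r$. Since Algorithm~\ref{algo:set-elimination} eliminates $y$ as soon as $\hat p_r(S) + \sigma_r < \hat p_r(\hat y_r)$, it suffices to exhibit a deterministic threshold on $n_r$ guaranteeing this inequality whatever the structure of $S$. I will split the analysis according to whether $S$ is a singleton.

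If $S = \{y\}$, the set criterion reduces exactly to the pointwise condition analyzed in Lemma \ref{lem:elimination_time}. Replaying that proof --- $\sigma_r \leq 4\tilde\sigma_r$ from $A_2$; $\hat p_r(\hat y_r) \geq \hat p_r(y_1)$ because $y_1$ is never eliminated; and $\hat p_r(y_1) - \hat p_r(y) \geq \nabla_i - 2\tilde\sigma_r$ from $A_3$ --- yields elimination as soon as $\nabla_i > 6\tilde\sigma_r$, which is equivalent to $n_r > 108\, p(y_1)\, \nabla_i^{-2} \ln(1/\delta_r)$. If instead $S$ has more than one element, the $(\hat p_r(\hat y_r)/2 - \epsilon_r)$-admissibility of $\cP_r$ forces $\hat p_r(S) < \hat p_r(\hat y_r)/2 - \epsilon_r$, so $\hat p_r(\hat y_r) - \hat p_r(S) > \hat p_r(\hat y_r)/2$ and it is enough to have $\sigma_r < \hat p_r(\hat y_r)/2$. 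Using the definition of $\sigma_r$ together with the lower bound $\hat p_r(\hat y_r) \geq p(y_1)/2$ provided by $A_1$, this reduces to $n_r > 192 \ln(1/\delta_r)/p(y_1)$, which is in turn implied by the looser condition $n_r > 108\, p(y_1)\,(p(y_1)/2)^{-2} \ln(1/\delta_r)$.

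Since $S$ must fall into one of the two cases, the combined sufficient condition
\[ n_r > 108\, p(y_1) \max\bigl(\nabla_i^{-2},(p(y_1)/2)^{-2}\bigr) \ln(1/\delta_r) \]
guarantees elimination of $y$ at round $r$, which matches the definition of $r(y)$. The ``in particular'' bound $n_{r(y)} \leq 216 \max(\cdots)\, p(y_1) \ln(1/\delta_{r(y)})$ then follows from the doubling schedule $n_r = 2^r$: by minimality of $r(y)$, the round $r(y)-1$ fails the defining inequality, hence $n_{r(y)} = 2 n_{r(y)-1} \leq 216\, p(y_1)\max(\cdots)\ln(1/\delta_{r(y)-1})$, and the monotonicity of $r \mapsto \ln(1/\delta_r)$ concludes.

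The main obstacle is the non-singleton case, where one must carefully invoke $\eta$-admissibility to bound $\hat p_r(S)$ away from $\hat p_r(\hat y_r)$ and verify that the slack $\epsilon_r$ is negligible in the relevant regime (which it is, since the schedule forces $\epsilon_r = O((2/3)^{r/2}/m)$ while $\hat p_r(\hat y_r)/2 \geq p(y_1)/4$ as soon as $n_r \gtrsim \ln(1/\delta_r)/p(y_1)$, a condition always satisfied at the thresholds above). A secondary subtlety, that the partition $\cP_r$ is itself random and may change across rounds, is immaterial: the argument only needs to hold at the round at which elimination actually occurs, and the case split exhausts all possible realizations of $S$ there.
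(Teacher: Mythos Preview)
Your proof is correct, but the case split you use differs from the paper's. You split on the random event $\{S = \{y\}\}$ versus $\{|S| \geq 2\}$: in the first case you replay Lemma~\ref{lem:elimination_time} verbatim, and in the second you use $\eta$-admissibility together with $A_1$ and the explicit formula for $\sigma_r$ to get the intermediate threshold $n_r > 192\ln(1/\delta_r)/p(y_1)$, which you then relax to $432\ln(1/\delta_r)/p(y_1)$ to match the statement. The paper instead splits deterministically on whether $p(y) \leq p(y_1)/2$: within each branch it shows, via $A_3$, that $\hat p_r(S_r(y))$ is bounded by $p(y_1)/2 + \tilde\sigma_r$ (resp.\ $p(y) + \tilde\sigma_r$) \emph{regardless} of whether $S_r(y)$ is a singleton, and then concludes exactly as in Lemma~\ref{lem:elimination_time} with the gap condition $\rho > 6\tilde\sigma_r$. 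The paper's organization is slightly more uniform---everything funnels through $\tilde\sigma_r$ and $A_2,A_3$, and no constant has to be relaxed---while yours isolates more transparently why the non-singleton case is ``easy'' (the admissibility bound alone forces a gap of order $\hat p_r(\hat y_r)/2$). Both routes land on the same threshold, and your handling of the doubling argument for the ``in particular'' bound is fine.
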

\begin{proof}
    We write $\tilde{\sigma}_{r} = \sqrt{3 p(y_1)\ln(1/\delta_{r}) / n_{r}}$; we have seen in the proof of Theorem \ref{thm:elimination} that $A_2$ implies $4\tilde{\sigma}_{r} \geq \sigma_{r} $.
    Let us now consider $y\in \cY$, and let $S_{r}(y)$ be the set of the partition $\cP_{r}$ to which $y$ belongs, assuming that it has not yet been eliminated at the start of round $r$. 
    Note that by definition of $\cP_{r}$, either $\hat p_{r}(S_{r}(y)) \leq \hat p_{r}(\hat y_{r})/2$ or $S_{r}(y) = \{y\}$.
    
    We first examine the case $p(y)\leq p(y_1)/2$. The event $A_3$ implies that $\hat p_{r}(y)\leq p(y_1)/2 + \tilde{\sigma}_{r}$ and that $\hat p_{r}(\hat y_{r})/2 \leq p(\hat y_{r})/2 + \tilde{\sigma}_{r}/2  \leq p(y_1)/2 + \tilde{\sigma}_{r}/2$.
    Thus $\hat p_{r}(S_{r}(y)) \leq p(y_1)/2 + \tilde{\sigma}_{r}$.
    Thanks to $A_3$ again, we know that $\hat p_{r}(\hat y_{r})\geq p(y_1) - \tilde{\sigma}_{r}$.
    Hence 
    \[\hat p_{r}(\hat y_{r}) - \hat p_{r}(S_{r}(y)) \geq p(y_1)/2 - 2\tilde{\sigma}_{r}. \]
    If $p(y_1)/2 \geq 6 \tilde{\sigma}_{r}$, then
    \[\hat p_{r}(\hat y_{r}) - \hat p_{r}(S_{r}(y)) \geq 4\tilde{\sigma}_{r} \geq \sigma_{r} \]
    and $S_{r}(y)$ gets eliminated at the end of round $r$.
    
    Now let us consider the case $p(y)\geq p(y_1)/2$.
    Similarly, the event $A_3$ ensures that $\hat p_{r}(y)\leq p(y) +\tilde{\sigma}_{r}$ and that 
    \[\hat p_{r}(S_{r}(y)) \leq \hat p_{r}(\hat y_{r})/2  \leq p(y_1)/2 + \tilde{\sigma}_{r}/2 \leq p(y) + \tilde{\sigma}_{r}/2 \]
    in the case where $|S_{r}(y)| \geq 2$, hence that
    \[\hat p_{r}(S_{r}(y)) \leq  p(y) + \tilde{\sigma}_{r}. \]
    As above, $\hat p_{r}(\hat y_{r})\geq p(y_1) - \tilde{\sigma}_{r}$, hence 
    \[\hat p_{r}(\hat y_{r}) - \hat p_{r}(S_{r}(y)) \geq p(y_1) -p(y) - 2\tilde{\sigma}_{r}. \]
    If $p(y_1)-p(y) \geq 6 \tilde{\sigma}_{r}$, then
    \[\hat p_{r}(\hat y_{r}) - \hat p_{r}(S_{r}(y)) \geq 4\tilde{\sigma}_{r} \geq \sigma_{r}, \]
    and $S_{r}(y)$ must be eliminated at the end of round $r$.
    
    We have shown that if $p(y)\leq p(y_1)/2$,  $y$ is eliminated at the latest at the end of the first round $r$ for which $p(y_1)/2 \geq 6 \tilde{\sigma}_{r}$, and if $p(y) \geq p(y_1)/2$, it is eliminated at the latest at the end of the first round for which $p(y_1)-p(y) \geq 6 \tilde{\sigma}_{r}$.
    Note that as in the proof of Theorem \ref{thm:elimination}, the condition 
    $\rho > 6 \tilde{\sigma}_{r}$ (for $\rho \in \{p(y_1)-p(y), p(y_1)/2 \}$) is equivalent to 
    \[n_{r} >  108\rho^{-2}  p(y_1) \ln(1/\delta_r),\]
    which defines the elimination time $r(y)$.
\end{proof}

The previous lemma allows us to bound $\ind{Y_{i,r}\notin S_r}$ by $\ind{r \leq r(y)}$.
We are now left with the computation of the upper bound in Equation \eqref{eq:set-elimination-first-bound-T}.
We provide the derivations in Appendix \ref{app:set-elim}, which yield the following lemma.

\begin{restatable}{lemma}{setelimquery}
  With the events $(A_i)_{i\in[4]}$ as defined in Lemma \ref{lem:event-set-elim},
 \begin{align*}
    &\frac{\E \bracket{ T \midvert (A_i)_{i\in[4]}}}{\ln(1/\delta)} \leq 432\frac{p(y_1)}{(p(y_1)-p(y_2))^2}  p(y_1)
    \\& + 864\sum_{y\in\cY \text{ s.t. } p(y)< p(y_1)/2}  p(y)  \frac{4}{p(y_1)^2} p(y_1) \ceil{\log_2\left(\frac{10}{p(y_1)}\right)}
    \\& + 864\sum_{y\in\cY \text{ s.t. } p(y)\geq p(y_1)/2}  p(y) \frac{1}{(p(y_1)-p(y))^2} p(y_1)\ceil{\log_2\left(\frac{10}{p(y_1)}\right)} +o(1).
  \end{align*}
\end{restatable}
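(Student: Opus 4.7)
The plan is to take conditional expectations in the per-round bound \eqref{eq:set-elimination-first-bound-T}, invoke Lemma \ref{lem:bound-nry} to replace the random indicator $\ind{Y_{i,r}\notin S_r}$ by the deterministic quantity $\ind{r\leq r(y)}$ on the event $\cap_{i\in[4]}\cA_i$, and then sum over rounds $r=1,\dots,r(y_2)$. After round $r(y_2)$ the algorithm has already terminated, since $y_2$ is the last non-mode class eliminated (under $(A_i)_{i\in[4]}$). The geometric schedule $n_r=2^r$ will then make the final round dominate each cumulative sum.

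Concretely, for $r\geq 2$, taking the conditional expectation of \eqref{eq:set-elimination-first-bound-T} yields
\[
  \E\bracket{T_r \midvert (A_i)_{i\in[4]}} \leq n_r + 2\ceil{\log_2(10/p(y_1))}\, n_r \sum_{y\in\cY} p(y)\,\ind{r\leq r(y)} + m\, n_r\,\bP({}^c\cB_r),
\]
with the convention $r(y_1):=r(y_2)$ since $y_1$ is never eliminated. The round $r=1$ contributes at most $n_1 m = 2m$. Summing over $r\leq r(y_2)$, swapping the order of summation, and using $\sum_{r\leq R} 2^r \leq 2\cdot 2^R$ to bound $\sum_{r\leq r(y)} n_r$ by $2 n_{r(y)}$ gives
\[
  \E\bracket{T \midvert (A_i)_{i\in[4]}} \leq 2 n_{r(y_2)} + 4\ceil{\log_2(10/p(y_1))} \sum_{y\in\cY} p(y)\, n_{r(y)} + m\sum_{r\geq 1} 2^r\,\bP({}^c\cB_r) + 2m.
\]

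Substituting the upper bound $n_{r(y)} \leq 216\, \max\bigl((p(y_1)-p(y))^{-2},(p(y_1)/2)^{-2}\bigr)\, p(y_1)\ln(1/\delta_{r(y)})$ from Lemma \ref{lem:bound-nry}, and splitting $\cY$ according to whether $p(y)<p(y_1)/2$ (where the max equals $4/p(y_1)^2$) or $p(y)\geq p(y_1)/2$ (where it equals $(p(y_1)-p(y))^{-2}$) reproduces the two sums in the statement with the announced constant $4\cdot 216 = 864$. The leading $2n_{r(y_2)}$ accounts for the single query per sample used to test membership in $S_e$, and produces the $432\, p(y_1)/(p(y_1)-p(y_2))^2$ contribution.

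Finally, the residual $m\sum_r 2^r\,\bP({}^c\cB_r)$ is controlled via \eqref{eq:bound_Brc}: the series $\sum_{r\geq 1} 2^r \exp\bigl(-(4/3)^r C/m^2\bigr)$ converges because the doubly-exponential factor eventually dominates the geometric growth $2^r$, so this whole error term is a finite constant in $\delta$, hence $o(\ln(1/\delta))$. The only bookkeeping subtlety is replacing $\ln(1/\delta_{r(y)})$ by $\ln(1/\delta)$: since $\delta_r = \delta/(\pi^2 m n_r^2)$ and $n_{r(y)}$ scales at most linearly in $\ln(1/\delta)$ for fixed $y$, the excess is $O(\ln\ln(1/\delta))$, which vanishes after dividing by $\ln(1/\delta)$. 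I expect this last normalization step, together with the clean translation of the $p(y)\lessgtr p(y_1)/2$ dichotomy into the two sums, to be the main (but essentially bookkeeping) obstacle; the rest follows directly from combining Lemma \ref{lem:bound-nry} with the geometric schedule.
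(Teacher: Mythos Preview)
Your proposal is correct and follows the same approach as the paper's proof: bound $T_r$ via \eqref{eq:set-elimination-first-bound-T}, replace $\ind{Y_{i,r}\notin S_r}$ by $\ind{r\leq r(y)}$ using Lemma \ref{lem:bound-nry}, sum the geometric series $\sum_{r\leq r(y)} n_r \leq 2n_{r(y)}$, substitute the bound on $n_{r(y)}$, and absorb the $\cB_r$-failure term and the $\ln(1/\delta_r)$ vs.\ $\ln(1/\delta)$ discrepancy in $o(\ln(1/\delta))$.

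One small technical point you gloss over: when you write ``taking the conditional expectation of \eqref{eq:set-elimination-first-bound-T} yields $\ldots + m\,n_r\,\bP({}^c\cB_r)$ and $n_r\sum_y p(y)\ind{r\leq r(y)}$'', you are implicitly treating $\ind{{}^c\cB_r}$ and $\ind{r\leq r(Y_{i,r})}$ as independent of the conditioning event $\cap_i A_i$, which they are not. The paper inserts the elementary step $\E[X\mid\cap_i A_i]\leq \E[X]/\bP(\cap_i A_i)\leq (1-\delta)^{-1}\E[X]$ for nonnegative $X$ before passing to unconditional expectations; the extra factor $(1-\delta)^{-1}=1+O(\delta)$ then disappears in the $o(1)$. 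With that one-line fix, your argument is complete and matches the paper.
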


Note that since \[\max\left((p(y_1)-p(y))^{-2}, (p(y_1)/2)^{-2} \right) \leq  4 (p(y_1)-p(y))^{-2}\]
for any $y\in \cY$, we can weaken and simplify this upper bound as
\[
    \frac{\E \bracket{T \midvert (A_i)_{i\in[4]}}}{\ln(1/\delta)} \leq 432\frac{p(y_1)^2}{(p(y_1)-p(y_2))^2}
    + 3456\sum_{y\in\cY } \frac{p(y_i)p(y_1)}{\nabla_i^2} \ceil{\log_2\left(\frac{10}{p(y_1)}\right)} +o(1)).
\]
This completes the proof.

\subsection*{Aside on ``Forever Running'' Algorithms}

It is easy to reuse parts of the proofs of Theorems \ref{thm:elimination} and \ref{thm:set-elimination} to turn algorithms that output the empirical mode, namely the Exhaustive, Adaptive and Truncated Search Algorithms, into $(\epsilon,\delta)$-PAC algorithms for $\epsilon \geq 0$ and $\delta > 0$.
On can also modify the Elimination and Set Elimination Algorithms to turn them into algorithms that run forever for ever increasing accuracy by adapting the doubling trick credited to \citet{Auer1995} to our partial feedback setting. The asymptotic expected number of queries as a function of the probability of error $\delta$ would scale in the same way as for the initial algorithms, though with worse constants.


\section*{Conclusion}
This article introduces Problem \ref{Problem}, a new framework in which to formalize the problem of active learning with weak supervision. 
It presents three important ideas on how to solve it, namely the use of adaptive entropy coding, coarse sufficient statistics and confidence intervals, and illustrates these ideas through increasingly complex algorithms.
Finally, it combines those ideas into Algorithm~\ref{algo:set-elimination}, which provably only needs an expected number
\begin{align*}
     \E \left[ T \right]  & \leq  \parend[\Bigg]{  C_1\Delta_2^{-2}
    + C_2\sum_{i\leq m}\Delta_i^{-2}\ceil{\log_2\parend[\Big]{\frac{10}{p(y_1)}}}+ o(1) } \log(1/\delta) ,
\end{align*}
of queries to identify the mode with probability at least $1-\delta$ (assuming that $p(y_1)$ is bounded away from $1$). 


\begin{appendix}
\section*{}

\subsection{Information Projection Computation}
\label{app:mode}


Recall the definitions of the set 
\[\cQ = \brace{q \in \prob\cY \midvert \argmax q(y) \neq \argmax p(y)},\]
\begin{equation}
    \tag{\ref{eq:Qn-}}
    \cQ_{n,-} = \brace{q \in \prob\cY \cap n^{-1}\cdot\bN^\cY \midvert y_1 \notin \argmax q(y)},
\end{equation}
and
\begin{equation}
    \tag{\ref{eq:Qn+}}
    \cQ_{n,+} = \brace{q \in \prob\cY \cap n^{-1}\cdot\bN^\cY \midvert \argmax q(y) \neq \argmax p(y)}.
\end{equation}
In this section, we prove Lemma \ref{lem:kl-proj-short} from Section \ref{sec:mode}, which we restate here for the reader's convenience, as well as some related results.

\infoprojection*

Lemma \ref{lem:kl-proj-short} is a substatement of Lemmas \ref{lem:information_projection_main_statement} and \ref{lem:information_projection_Qn} below.
We first prove the following intermediate result.
\begin{lemma}\label{lem:intermediate_result_divergence}
  Let $\cY = \{y_1,\ldots,y_m\}$.
  Let $q,p\in \bP (\cY)$ and $y',y'' \in \cY$ be such that
  \[
    \frac{q(y')}{p(y')} > \frac{q(y'')}{p(y'')}.
  \]
  Let us define for $\epsilon \in (0, \min\{q(y'), 1-q(y'')\}]$, the distribution $q_\epsilon \in \prob\cY$
  \[ 
    q_\epsilon(y) = \left\{ \begin{array}{ll}
        q(y) - \epsilon & \text{if } y = y'  \\
        q(y) + \epsilon & \text{if } y = y''  \\
        q(y) & \text{otherwise,}
    \end{array} \right.
  \]
  then
  \[
    \epsilon \in \left(0, \frac{q(y')p(y'') - q(y'')p(y')}{p(y')+q(y')}\right] \qquad \Rightarrow \qquad D(q_\epsilon\|p) < D(q\|p).
  \]
\end{lemma}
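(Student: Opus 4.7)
The function $f(\epsilon) := D(q_\epsilon \| p)$ depends on $\epsilon$ only through the two summands involving $y'$ and $y''$, so the inequality reduces to showing strict decrease of
\[
  g(\epsilon) := (q(y')-\epsilon)\log\frac{q(y')-\epsilon}{p(y')} + (q(y'')+\epsilon)\log\frac{q(y'')+\epsilon}{p(y'')}
\]
on $(0,\epsilon_0]$, where $\epsilon_0 = (q(y')p(y'')-q(y'')p(y'))/(p(y')+q(y'))$. A direct computation yields
\[
  g'(\epsilon) = \log\frac{(q(y'')+\epsilon)\,p(y')}{(q(y')-\epsilon)\,p(y'')}, \qquad g''(\epsilon) = \frac{1}{q(y')-\epsilon}+\frac{1}{q(y'')+\epsilon} > 0,
\]
so $g$ is strictly convex on the relevant interval, and the hypothesis $q(y')/p(y') > q(y'')/p(y'')$ is precisely the statement $g'(0) < 0$.

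My plan is to reduce the whole statement to the single endpoint inequality $g(\epsilon_0) < g(0)$. Once that is established, strict convexity of $g$ combined with $g(\epsilon_0) < g(0)$ gives the chord bound $g(\epsilon) \le (1-\tfrac{\epsilon}{\epsilon_0})\,g(0) + \tfrac{\epsilon}{\epsilon_0}\,g(\epsilon_0) < g(0)$ for every $\epsilon \in (0,\epsilon_0]$ (with strict inequality on the interior coming from strict convexity at any interior point, and strict inequality at $\epsilon_0$ itself coming from the endpoint bound). To handle the endpoint, I would first record the identity
\[
  q(y'') + \epsilon_0 \;=\; \frac{q(y')\bigl(q(y'')+p(y'')\bigr)}{p(y')+q(y')},
\]
which drops straight out of the definition of $\epsilon_0$, together with the companion form $q(y')-\epsilon_0 = q(y')+q(y'')-(q(y'')+\epsilon_0)$. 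Substituting these into $g(\epsilon_0)$ and factoring through the common denominator $p(y')+q(y')$ should reorganise the difference $g(0) - g(\epsilon_0)$ into a manifestly nonnegative expression, most naturally via the log-sum inequality applied to the pair $\{q(y')-\epsilon_0,\,q(y'')+\epsilon_0\}$ against $\{p(y'),\,p(y'')\}$, or via Jensen's inequality applied to the convex function $x\mapsto x\log x$.

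The main obstacle is precisely this endpoint verification. The minimiser of $g$ sits at $\epsilon^\ast = (q(y')p(y'')-q(y'')p(y'))/(p(y')+p(y''))$, so comparing denominators shows $\epsilon_0 \le \epsilon^\ast$ iff $q(y')\ge p(y'')$; in that easy regime, strict monotonicity of $g$ on $[0,\epsilon^\ast]$ makes the claim immediate. The harder case is $q(y')<p(y'')$, in which $g$ has already passed its minimum at $\epsilon^\ast$ and is rising again at $\epsilon_0$: one then has to quantify that $g$ has not yet climbed back up to $g(0)$, which is exactly where the asymmetric denominator $p(y')+q(y')$ in the definition of $\epsilon_0$ must be exploited, since any smaller denominator would push $\epsilon_0$ past the second zero of $g(\epsilon)-g(0)$.
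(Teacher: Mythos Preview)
Your derivative and convexity computations are correct, and so is your diagnosis that the minimiser of $g$ sits at $\epsilon^\ast = \dfrac{q(y')p(y'')-q(y'')p(y')}{p(y')+p(y'')}$. But compare this with the paper's own argument: it simply writes down $g'(\epsilon)=\log\!\bigl(\tfrac{p(y')(q(y'')+\epsilon)}{(q(y')-\epsilon)p(y'')}\bigr)$, observes this is negative exactly when the fraction is $<1$, and says this ``leads to the condition on $\epsilon$''. Solving that inequality gives $\epsilon < \epsilon^\ast$, with denominator $p(y')+p(y'')$---\emph{not} the $p(y')+q(y')$ printed in the statement. In other words, the paper's proof establishes the lemma with the threshold $\epsilon^\ast$, and the displayed $\epsilon_0$ is almost certainly a typographical slip; the two agree only when $q(y')=p(y'')$.

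So you are working much harder than the paper does because you are taking the printed denominator at face value. Your splitting into the easy regime $q(y')\ge p(y'')$ (where $\epsilon_0\le\epsilon^\ast$ and monotonicity suffices) and the hard regime $q(y')<p(y'')$ (where $\epsilon_0$ overshoots the minimiser) is exactly right, and your observation that the endpoint check $g(\epsilon_0)<g(0)$ is where the asymmetric denominator would have to be exploited is on target. But you leave that verification as a sketch, and it is not obvious that the log-sum or Jensen route you outline closes it cleanly; moreover, when $\epsilon_0>q(y')$ (which can happen, e.g.\ $p(y')$ small and $p(y'')$ large) your endpoint $g(\epsilon_0)$ is not even defined, so the chord argument would need to be run at $\min\{\epsilon_0,q(y')\}$ instead. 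None of this extra machinery is needed: the only use of the lemma downstream is to produce \emph{some} $\epsilon>0$ with $D(q_\epsilon\|p)<D(q\|p)$, and for that the one-line observation $g'(0)<0$ already suffices. The paper's route is just the first-derivative test; take it.
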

\begin{proof}
    Indeed, 
    \[
        D(q_\epsilon\|p) = (q(y')-\epsilon)\log\paren{\frac{q(y')-\epsilon}{p(y')}}+ (q(y'')+ \epsilon)\log\paren{\frac{q(y'')+\epsilon}{p(y'')}} + C(q,p),
    \]
    where $C(q,p)$ are some terms that do not depend on $\epsilon$, and 
    \[
        \frac{\diff}{\diff \epsilon}D(q_\epsilon\|p) = \log\left( \frac{p(y')(q(y'') +\epsilon)}{(q(y')-\epsilon)p(y'')} \right)
    \]
    is strictly negative for $\epsilon >0$ such that $\frac{p(y')(q(y'') +\epsilon)}{(q(y')-\epsilon)p(y'')}<1$, which leads to the condition on $\epsilon$.
\end{proof}

We can now characterize the information projection $\min_{q\in Q}D(q\|p)$.

\begin{lemma}\label{lem:information_projection_main_statement}
  Let $\cY = \{y_1,\ldots,y_m\}$ and $p\in \bP (\cY)$ be such that $p(y_1)>p(y_2) > 0$ and $p(y_{i-1})\geq p(y_i)$ for all $i=2,\ldots,m$.
  Define 
  \[
     \cQ := \brace{q\in \prob\cY \midvert \exists\, y\in \cY \setminus \{y_1\} \text{ s.t. } q(y) \geq q(y_1) }.
  \]
  There exists $q^*\in \cQ$ such that  $D(q^*\|p) = \min_{q\in Q}D(q\|p)$, and\footnote{This characterization of $q^*$ is exact up to some renaming of classes $y$ with equal probability masses.}
  \[
    q^*(y_1) = q^*(y_2) = \frac{1- \lambda(1-p(y_1)-p(y_2))}{2}
  \]
  and 
  \[
    q^*(y_i) = \lambda p(y_i) \quad \forall i \in \{3,\ldots,m\}
  \]
  for 
  \[\lambda = \frac{1}{1 - (\sqrt{p(y_1)}-\sqrt{p(y_2)})^2}.\]
    It satisfies \[
    D(q^*\| p) = - \log( 1 - (\sqrt{p(y_1)}-\sqrt{p(y_2)})^2).
  \]
\end{lemma}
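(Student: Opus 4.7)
The plan is to proceed in three steps: decompose $\cQ$ into half-space pieces, minimize $D(\cdot\|p)$ over each piece, and compare across pieces. I would first write $\cQ = \bigcup_{j=2}^m \cQ_j$ with $\cQ_j := \brace{q \in \prob\cY \midvert q(y_j) \geq q(y_1)}$; since $D(\cdot\|p)$ is lower semicontinuous and strictly convex on the compact simplex $\prob\cY$, the infimum is attained on each $\cQ_j$, and the global optimum reduces to $\min_{2\leq j\leq m}\min_{q\in \cQ_j}D(q\|p)$.

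For a fixed $j$, the distribution $p$ lies outside $\cQ_j$ because $p(y_1)>p(y_j)$, so strict convexity forces any minimizer $q^{(j)}$ onto the boundary: $q^{(j)}(y_1)=q^{(j)}(y_j)$. Indeed, an interior minimizer could be pulled towards $p$ along a short segment, which would remain in $\cQ_j$ by continuity yet strictly decrease $D$. I would then invoke Lemma~\ref{lem:intermediate_result_divergence} twice. First, for any pair of indices $i,i'\in\brace{2,\ldots,m}\setminus\brace{j}$ distinct from $1$, a mass transfer between $y_i$ and $y_{i'}$ does not affect the active constraint, so optimality forces $q^{(j)}(y_i)/p(y_i) = \lambda$ for some common $\lambda>0$. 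Second, a \emph{coupled} perturbation that simultaneously removes $\epsilon/2$ from each of $q^{(j)}(y_1)$ and $q^{(j)}(y_j)$ while adding $\epsilon$ to some $q^{(j)}(y_i)$ preserves the equality constraint; setting its first-order derivative in $\epsilon$ to zero yields $a^2 = \lambda^2 p(y_1)p(y_j)$, where $a := q^{(j)}(y_1) = q^{(j)}(y_j)$.

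Substituting $a = \lambda\sqrt{p(y_1)p(y_j)}$ into the normalization $2a + \lambda(1-p(y_1)-p(y_j)) = 1$ yields the closed form $\lambda = 1/(1-(\sqrt{p(y_1)}-\sqrt{p(y_j)})^2)$. The divergence then telescopes: the $y_1$ and $y_j$ contributions combine to $a\log(a^2/(p(y_1)p(y_j))) = 2a\log\lambda$, the remaining contributions amount to $\lambda(1-p(y_1)-p(y_j))\log\lambda$, and their sum equals $\log\lambda$ thanks to the same normalization identity. Consequently,
\[
    \min_{q\in \cQ_j}D(q\|p) = -\log\paren{1-(\sqrt{p(y_1)}-\sqrt{p(y_j)})^2},
\]
which is strictly decreasing in $p(y_j)\in(0,p(y_1))$. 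Minimizing over $j$ therefore selects any index with $p(y_j) = p(y_2)$, and specializing to $j=2$ in the explicit formulas for $a$ and $\lambda$ recovers exactly the claimed expressions for $q^*$ and for $D(q^*\|p)$.

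The main difficulty will be the rigorous justification of the coupled perturbation step: one has to check that the family of competing distributions stays in $\cQ_j$ in a neighbourhood of $q^{(j)}$ (which is straightforward since the boundary equation $q(y_1)=q(y_j)$ is preserved to first order), check admissibility at the simplex boundary (which relies on the fact that a minimizer necessarily assigns strictly positive mass to every $y_i$ with $p(y_i)>0$, since the derivative of $q\mapsto q\log(q/p(y_i))$ at $q=0$ is $-\infty$), and then invoke strict convexity of $D(\cdot\|p)$ to upgrade the first-order condition to a full characterization of the unique minimizer. The ratio-equalization consequence of Lemma~\ref{lem:intermediate_result_divergence} and the telescoping simplification of $D(q^{(j)}\|p)$ are otherwise routine computations.
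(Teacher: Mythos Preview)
Your proposal is correct and reaches the same conclusion, but the organization differs from the paper's in two notable ways.

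First, you decompose $\cQ=\bigcup_{j\geq 2}\cQ_j$ up front, optimize over each half-space separately, and only afterwards compare across $j$ via the monotonicity of $p_j\mapsto -\log\bigl(1-(\sqrt{p(y_1)}-\sqrt{p_j})^2\bigr)$. The paper instead works with a global minimizer $q^*$ over $\cQ$, identifies one of its modes $y_*\neq y_1$, and uses a swap argument (exchanging the masses at $y_*$ and $y_2$) to reduce to $y_*=y_2$. Your decomposition is slightly cleaner to state; the paper's swap is closer in spirit to a symmetry reduction.

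Second, and more substantively, your \emph{coupled perturbation} (taking $\epsilon/2$ from each of $q(y_1),q(y_j)$ and depositing $\epsilon$ at some $y_i$) pins down $\lambda$ directly from a first-order condition on the boundary face $\{q(y_1)=q(y_j)\}$. The paper does not use this move: it only derives the sandwich $q^*(y_1)/p(y_1)\leq\lambda\leq q^*(y_2)/p(y_2)$ from Lemma~\ref{lem:intermediate_result_divergence}, and then delegates the actual computation of the optimal $\lambda$ to a separate one-variable calculus minimization (Lemma~\ref{lem:information_projection_lambda_optoptimization}). Your route therefore bypasses that auxiliary lemma entirely, which is a genuine simplification.

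One small caveat: your coupled perturbation requires the existence of an index $i\notin\{1,j\}$ with $p(y_i)>0$, so the degenerate cases $m=2$ and $\sum_{i\neq 1,j}p(y_i)=0$, as well as the case $p(y_j)=0$ when comparing across $j$, must be handled separately. The paper does this explicitly at the outset; you should add a line to the same effect.
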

\begin{proof}
    The statement is trivial when $m=2$ or $\sum_{i=3}^m p(y_i) = 0$  (with $\lambda = 1$), which we assume henceforth not to be the case.

    Observe first that if $y\in \cY$ is such that $p(y) = 0$, then necessarily $q^*(y)=0$.
    Indeed, if $p(y) = 0$ and $q^*(y)>0$, then $D(q^*\|p) = \infty$, while $ \min_{q\in Q}D(q\|p) < 0$ (consider for example $q\in \bP(\cY)$ defined as $q(y_1) = q(y_2) = (p(y_1)+p(y_2)) / 2$ and $q(y) = p(y)$ for all $y\in \cY\backslash \{y_1,y_*\}$). 

    Let $y_* \in \cY\setminus\brace{y_1}$ be a mode of $q^*$, which exists by hypothesis.
     We must have
    \begin{equation*}
        \label{eq:tmp-proj}
        \forall\, y', y'' \in \cY\setminus\{y_*, y_1\} \quad \text{s.t. } p(y')\neq 0, p(y'')\neq 0, \qquad \frac{q(y')}{p(y')} \leq \frac{q(y'')}{p(y'')},
    \end{equation*}
    as otherwise one could apply Lemma \ref{lem:intermediate_result_divergence} to $y'$ and $y''$ to find $q_\epsilon \in \cQ$ such that $q_\epsilon(y_1) = q(y_1) \leq q(y_*) = q_\epsilon$, hence $q_\epsilon\in\cQ$, and $D(q_\epsilon\| p) < D(q\| p)$, which would contradict the definition of $q^*$.
    
    In particular, this implies by symmetry that
    \[
        \exists\,\lambda \geq 0\quad \text{s.t.} \quad \forall\,y\in\cY\setminus \brace{y_1, y_*} \quad q(y) = \lambda p(y).
    \]

        We observe that
    \[
        q^*(y_1) = q^*(y_*).
    \]
    Indeed, we always have $q^*(y_*) / p(y_*) > q^*(y_1) / p(y_1)$, since $q^*(y_*) \geq  q^*(y_1)$ and $p(y_1)  >p(y_*)$. If $q^*(y_1) < q^*(y_*)$, we could apply Lemma \ref{lem:intermediate_result_divergence} to $y_1, y_*$ with a small enough $\epsilon$ to find $q_\epsilon \in Q$ with $D(q_\epsilon\| p) < D(q\| p)$.

    Now let us show that
    \[
        \exists\, y \in \{y\in \cY\setminus \{y_1\} | q^*(y) = q^*(y_*)\} \quad\text{ s.t.} \quad p(y) = p(y_2).
    \]
    Indeed, assume that it is not the case: then $y_*$ does not satisfy those conditions, hence it must be such that  $p(y_*)<p(y_2)$, and $y_2$ does not satisfy those conditions, hence it must be such that $q^*(y_*) > q^*(y_2) $.
    Consider the distribution $q$ defined as $q(y_2) = q^*(y_*)$, $q(y_*) = q^*(y_2)$ and $q(y) = q^*(y)$ for all $y\in \cY \setminus\{y_2,y_*\}$. Then $q\in Q$, and
    \begin{align*}
     D(q^*\|p) - D(q\|p) & = q^*(y_*)\log(q^*(y_*)/p(y_*)) + q^*(y_2) \log(q^*(y_2)/p(y_2) )
     \\&\qquad -q^*(y_2)\log(q^*(y_2)/p(y_*)) - q^*(y_*) \log(q^*(y_*)/p(y_2))
     \\& = q^*(y_*)\log(p(y_2)/p(y_*)) + q^*(y_2) \log(p(y_*)/p(y_2) )
     \\&= (q^*(y_*) - q^*(y_2) )\log(p(y_2)/p(y_*)) >0, 
    \end{align*}
    leading to a contradiction.
    Hence there exists $\tilde{y} \in \{y\in \cY\setminus \{y_1\} | q^*(y) = q^*(y_*)\}$ such that $p(\tilde{y}) = p(y_2)$.
    Consider now the distribution $q$ defined as $q(\tilde{y}) = q^*(y_2)$, $q(y_2) = q^*(\tilde{y})$ and $q(y) = q^*(y)$ for all $y\in\cY\backslash\{y^*, \tilde{y}\}$; then $q\in \cQ$ and $D(q\|p) = D(q^*\|p)$, and without loss of generality we can rename $q$ into $q^*$ for the remainder of the proof.
   
    Let us write $x =  q^*(y_1) = q^*(y_2)$.
    As
    \[1 = \sum_{i=1 }^m q^*(y_i) = 2 x +  \sum_{i=3 }^m \lambda p(y_i) = 2x + \lambda(1-p(y_1)-p(y_2)),\]
    we find that $x = (1- \lambda(1-p(y_1)-p(y_2))) / 2$.

    We should have 
    \[
        \frac{q^*(y_1)}{p(y_1)} \leq \lambda = \frac{q^*(y)}{p(y)} \leq \frac{q^*(y_2)}{p(y_2)}
    \]
    for $y\in\cY\setminus\{y_1,y_2\}$, as we could otherwise apply Lemma \ref{lem:intermediate_result_divergence} to reach a contradiction with  the definition of $q^*$.
      This leads to the constraint
    \[
        \frac{x}{p(y_1)} = \frac{1- \lambda(1-p(y_1)-p(y_2))}{2p(y_1)}  \leq \lambda,\text{ i.e. }
        \lambda \geq \frac{1}{(1+p(y_1)-p(y_2))},
    \]
    as well as
    \[
        \lambda \leq \frac{x}{p(y_2)} = \frac{1- \lambda(1-p(y_1)-p(y_2))}{2p(y_2)}, \text{ i.e. }
        \lambda \leq \frac{1}{(1-p(y_1)+p(y_2))}.
    \]
    
    Hence we have shown that $q^*$ must be some distribution $q_\lambda$ defined as $q_\lambda(y_i) = \lambda p(y_i)$  for all $i\geq 3$ and
    \[
        q_\lambda(y_1) = q_\lambda(y_2) = \frac{1- \lambda(1-p(y_1)-p(y_2))}{2}.
    \]
    for some
    \[
    \lambda \in \bracket{\frac{1}{1+p(y_1)-p(y_2)},\frac{1}{1-p(y_1)+p(y_2)}}.
  \]
    If we show that $D(q_\lambda\|p)$ is minimized for $\lambda = \frac{1}{1 - (\sqrt{p(y_1)}-\sqrt{p(y_2)})^2}$ and that $D(q^*\|p) = - \log( 1 - (\sqrt{p(y_1)}-\sqrt{p(y_2)})^2)$, the proof is complete; this is the statement of Lemma \ref{lem:information_projection_lambda_optoptimization} below.
\end{proof}

\begin{lemma}\label{lem:information_projection_lambda_optoptimization}
    Consider the family of distributions $q_\lambda$ defined for $\lambda \in [0,(1-p(y_1)-p(y_2))^{-1}]$ as $q_\lambda(y_i) = \lambda p(y_i)$  for all $i\geq 3$ and
    \[
        q_\lambda(y_1) = q_\lambda(y_2) = \frac{1- \lambda(1-p(y_1)-p(y_2))}{2}.
    \]
    Then 
    \[\argmin_\lambda D(q_\lambda\|p) = \frac{1}{1 - (\sqrt{p(y_1)}-\sqrt{p(y_2)})^2}\]
    and 
    \[\min_\lambda D(q_\lambda\|p) = \- \log( 1 - (\sqrt{p(y_1)}-\sqrt{p(y_2)})^2).\]

\end{lemma}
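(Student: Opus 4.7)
The plan is to view $D(q_\lambda \| p)$ as a smooth one-dimensional convex function of $\lambda$, locate its unique critical point, and evaluate it there. First, I would write $s := p(y_1) + p(y_2)$ and $x(\lambda) := (1 - \lambda(1-s))/2$, so that $q_\lambda(y_1) = q_\lambda(y_2) = x(\lambda)$ and $q_\lambda(y_i) = \lambda p(y_i)$ for $i \geq 3$. Expanding the definition of KL divergence term by term gives
\[
D(q_\lambda \| p) = 2\, x(\lambda) \log x(\lambda) - x(\lambda) \log\bigl(p(y_1) p(y_2)\bigr) + \lambda(1-s) \log \lambda.
\]

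Next, I would differentiate with respect to $\lambda$, using $x'(\lambda) = -(1-s)/2$. After a short telescoping (the $2x \cdot x'/x$ contribution cancels against the constant $(1-s)$ produced by $(\lambda \log \lambda)'$), the derivative reduces to
\[
\frac{d}{d\lambda} D(q_\lambda \| p) = (1-s) \log\!\left(\frac{\lambda \sqrt{p(y_1) p(y_2)}}{x(\lambda)}\right).
\]
Setting this to zero yields the linear equation $2\lambda \sqrt{p(y_1) p(y_2)} + \lambda(1-s) = 1$, whose unique positive solution is $\lambda^* = 1/(1 - (\sqrt{p(y_1)} - \sqrt{p(y_2)})^2)$, matching the announced minimizer. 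A one-line check that $(\sqrt{p(y_1)} - \sqrt{p(y_2)})^2 \leq s$ confirms that $\lambda^*$ lies in the admissible interval $[0, 1/(1-s)]$.

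To compute the minimum value, I would exploit the first-order identity $x(\lambda^*) = \lambda^* \sqrt{p(y_1) p(y_2)}$, which simplifies $\log\bigl(x(\lambda^*)^2 / (p(y_1)p(y_2))\bigr) = 2 \log \lambda^*$. Substituting into the explicit formula for $D(q_\lambda \| p)$, the expression collapses to $\bigl(2 x(\lambda^*) + \lambda^*(1-s)\bigr) \log \lambda^*$, and since $2 x(\lambda^*) + \lambda^*(1-s) = 1$ by the very definition of $x$, we obtain $D(q_{\lambda^*} \| p) = \log \lambda^* = -\log\bigl(1 - (\sqrt{p(y_1)} - \sqrt{p(y_2)})^2\bigr)$, exactly the claimed value.

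Finally, global minimality follows from convexity: $q \mapsto D(q \| p)$ is convex on the simplex and $\lambda \mapsto q_\lambda$ is affine, so $\lambda \mapsto D(q_\lambda \| p)$ is convex on its domain, and the unique interior critical point $\lambda^*$ is the global minimizer. I do not anticipate any substantial obstacle: the only delicate point is the telescoping in the derivative computation, which hinges on the precise cancellation between $x'(\lambda)$ and the coefficient of $\log \lambda$, but once the right change of variable $s, x(\lambda)$ is in place, all remaining steps are routine.
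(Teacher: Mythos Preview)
Your proof is correct and reaches the same derivative and the same critical point $\lambda^*$ as the paper, but the route to global minimality differs. The paper rewrites the sign of the derivative as the sign of the quadratic $(b-a^2)\lambda^2 + 2a\lambda - 1$ (with $a = 1-s$, $b = 4p(y_1)p(y_2)$), solves for its two roots $\lambda_\pm$, and then splits into cases according to the sign of $b-a^2$, i.e., whether $\sqrt{p(y_1)} + \sqrt{p(y_2)} \gtrless 1$; in each case a direct sign analysis shows the derivative changes from negative to positive at $\lambda_+ = \lambda^*$ inside the admissible interval. You sidestep this entirely with the observation that $q \mapsto D(q\|p)$ is convex and $\lambda \mapsto q_\lambda$ is affine, so the composition is convex and any interior critical point is automatically a global minimizer. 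Your argument is shorter and avoids the case split; the paper's version is more hands-on but also confirms, as a by-product, the precise sign pattern of the derivative on the whole interval. Either way, the computation of $D(q_{\lambda^*}\|p)$ via the identity $x(\lambda^*) = \lambda^*\sqrt{p(y_1)p(y_2)}$ and $2x(\lambda^*) + \lambda^*(1-s) = 1$ is the same in both proofs.
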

\begin{proof}
    We compute
    \begin{align*}
    D(q_\lambda\|p)  &= \frac{1- \lambda(1-p(y_1)-p(y_2))}{2}
    \log \left(\frac{(1- \lambda(1-p(y_1)-p(y_2)))^2}{4p(y_1)p(y_2)}\right) 
    \\&\qquad\qquad+ \sum_{i=3}^m \lambda p(y_i)\log(\lambda) 
    \\&=  \frac{1- a\lambda}{2}\log \left(\frac{(1- a\lambda)^2}{b}\right)  + a\lambda \log(\lambda),
    \end{align*}
    where $a = 1 - p(y_1) - p(y_2)$ and $b = 4p(y_1)p(y_2)$, which implies that the derivative reads
    \begin{align*}
    \frac{\diff}{\diff\lambda}D(q_\lambda\|p) 
    &= -\frac{a}{2}\log\paren{\frac{(1-a\lambda)^2}{b}} + (1-a\lambda)\cdot \frac{-a}{1 - a\lambda} + a\log(\lambda) + a 
    \\&= \frac{a}{2} \log\paren{\frac{b\lambda^2}{(1-a\lambda)^2}} 
    \\& = \frac{1-p(y_1)-p(y_2)}{2}\log\left(\frac{4p(y_1)p(y_2)\lambda^2}{(1-\lambda(1-p(y_1)-p(y_2)))^2} \right).
    \end{align*}
    We can study the sign of this derivative,
    \begin{align*}
        \sign\paren{\frac{\diff}{\diff\lambda}D(q_\lambda\|p)} 
       &= \sign\paren{\frac{b\lambda^2}{(1-a\lambda)^2} -1}
        = \sign\paren{b\lambda^2 - (1-a\lambda)^2}
     \\&= \sign\paren{(b - a^2)\lambda^2 + 2a\lambda - 1}.
    \end{align*}
    The roots of this polynomial are
    \begin{align*}
     \lambda_\pm
     &=  \frac{-a \pm \sqrt{a^2 + b - a^2}}{b - a^2}  
     = \frac{-a \pm \sqrt{b}}{b - a^2} 
     = \frac{1}{a \pm \sqrt{b}}
     \\&= \frac{1}{1 - p(y_1) - p(y_2) \pm 2\sqrt{p(y_1)p(y_2)}}
     = \frac{1}{1 - (\sqrt{p(y_1)} \mp \sqrt{p(y_2)})^2}.
    \end{align*}
    The sign of the leading coefficient of the polynomial is
    \begin{align*}
        \sign(b - a^2) &= \sign(\sqrt{b} - a) = \sign(2\sqrt{p(y_1)p(y_2)} + p(y_1) + p(y_2) - 1)
        \\&= \sign((\sqrt{p(y_1)} + \sqrt{p(y_2)})^2 - 1)
        = \sign(\sqrt{p(y_1)} + \sqrt{p(y_2)} - 1).
    \end{align*}
    As a consequence, there are two possibilities.
    
    If $\sqrt{p(y_1)} + \sqrt{p(y_2)} \geq 1$, then $b - a^2$, as well as $\sqrt{b} - a$, is positive, and $\diff D / \diff \lambda$ is negative from $\lambda_- = (a - \sqrt{b})^{-1} \leq 0$ to $\lambda_+ = (a + \sqrt{b})^{-1}$ and positive afterwards, in which case $\lambda_+$ is the minimizer of $D(q_\lambda \| p)$.
    
    If $\sqrt{p(y_1)} + \sqrt{p(y_2)} \leq 1$, then $b - a^2$, as well as $\sqrt{b} - a$ is negative, and $\diff D / \diff \lambda$ is negative from $\lambda=0$ to $\lambda_+ = (a + \sqrt{b})^{-1} $, then positive until $\lambda_- = (a - \sqrt{b})^{-1}$ and negative afterwards, in which case either $\lambda_+$ or the right extremity of the domain of $\lambda$ is the minimizer of $D(q_\lambda \| p)$ on the domain of $\lambda$.
    Yet this right extremity is $(1 - p(y_1) - p(y_2))^{-1}$ is smaller than $\lambda_-$, since
    \[
        p(y_1) + p(y_2) \leq p(y_1) + 2\sqrt{p(y_1)p(y_2)} + p(y_2) = (\sqrt{p(y_1)} + \sqrt{p(y_2)})^2,
    \]
    implies
    \[
        (1 - p(y_1) - p(y_2))^{-1} \leq (1 - (\sqrt{p(y_1)} + \sqrt{p(y_2)})^2)^{-1} = \lambda_-,
    \]
    hence the minimizer has to be $\lambda_+$.

    We conclude that $\lambda_+$ is always the minimizer of $D(q_{\lambda_+}\| p)$.
    Hence, using that $\lambda_+$ solves $b\lambda^2 = (1-a\lambda)^2$,
    \begin{align*}
    D(q_{\lambda_+}\|p) 
    &=  \frac{1- a\lambda_+}{2}\log\paren{\frac{(1- a\lambda_+)^2}{b}}  + a\lambda_+ \log(\lambda_+)
    \\&=  \frac{1- a\lambda_+}{2}\log \paren{\lambda_+^2}  + a\lambda_+ \log(\lambda_+)
    =  \log(\lambda_+).
    \end{align*}
   This ends the computation of the information projection.
\end{proof}
To close this subsection, let us finally prove the upper bound on $\min_{q\in \cQ_{n, -}} D(q \| p)$ from Lemma \ref{lem:kl-proj-short} (in fact, we prove a little more).
\begin{lemma}\label{lem:information_projection_Qn}
Let $\cY = \{y_1,\ldots,y_m\}$, $n\geq 1$,
\[
    \cQ_{n,-} = \brace{q \in \prob\cY \cap n^{-1}\cdot\bN^\cY \midvert y_1 \notin \argmax q(y)}
\]
and 
\[\cQ = \brace{q \in \prob\cY \midvert \argmax q(y) \neq \argmax p(y)}.\]

Then 
\begin{align*}
 &\min_{q\in \cQ_{n, -}} D(q \| p) \leq \min_{q\in \cQ} D(q \| p)
 \\&\quad +  \frac{m}{n} \Biggl(\max\left\{ -\log(\lambda), \log\left(\frac{1- \lambda(1-p(y_1)-p(y_2))}{2p(y_2)}\right)\right\} +1 \Biggl)
 \\&\quad + \frac{1}{n} \sum_{y\in \cY_{>0} }  \log\left(1+\frac{1}{nq^*(y)} 
\right)
\end{align*}
where $\lambda = (1 - (\sqrt{p(y_1)} - \sqrt{p(y_2)})^2)^{-1}$ and $\cY_{>0} = \{y\in \cY \;| p(y) \neq 0 \}$.
\end{lemma}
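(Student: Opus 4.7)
The plan is to exhibit an explicit distribution $\tilde q \in \cQ_{n,-}$ that approximates the continuous minimizer $q^* \in \cQ$ characterized in Lemma \ref{lem:information_projection_main_statement}, and then bound $D(\tilde q\|p) - D(q^*\|p)$ via a second-order Taylor expansion of the KL divergence around $q^*$.

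For the construction, I would set $\tilde q(y) = \lfloor n q^*(y) \rfloor / n$ for every $y \in \cY_{>0}\setminus\brace{y_2}$ (and $\tilde q(y) = 0$ whenever $p(y) = 0$), then absorb the leftover mass into $y_2$ by setting $\tilde q(y_2) = 1 - \sum_{y \neq y_2}\tilde q(y)$; if this accidentally equalizes $\tilde q(y_1)$ and $\tilde q(y_2)$, I would transfer one extra unit of mass $1/n$ from $y_1$ to $y_2$ to enforce $\tilde q(y_1)<\tilde q(y_2)$. The resulting $\tilde q$ takes values in $n^{-1}\bN$, sums to $1$, has the same support as $q^*$ (so $D(\tilde q\|p)$ is finite), and satisfies $y_1 \notin \argmax \tilde q$, hence $\tilde q \in \cQ_{n,-}$. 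The per-coordinate errors $\Delta_y := \tilde q(y) - q^*(y)$ satisfy $|\Delta_y| \leq 2/n$ for $y\neq y_2$ and $|\Delta_{y_2}| \leq (m+1)/n$.

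The KL difference is then controlled by Taylor expanding $f_y(x) := x\log(x/p(y))$ (with $f_y'(x) = \log(x/p(y)) + 1$ and $f_y''(x) = 1/x$) around $q^*(y)$, yielding
\[
D(\tilde q \| p) - D(q^* \| p) = \sum_{y\in\cY_{>0}} \log\!\left(\frac{q^*(y)}{p(y)}\right) \Delta_y + \sum_{y\in\cY_{>0}} \frac{\Delta_y^2}{2\xi_y},
\]
where each $\xi_y$ lies between $q^*(y)$ and $\tilde q(y)$, and the ``$+1$'' contribution from $f_y'$ vanishes because $\sum_y\Delta_y = 0$. For the linear term, Lemma \ref{lem:information_projection_main_statement} gives $\log(q^*(y)/p(y)) = \log\lambda$ for $y\geq 3$, so using $\sum_{y\geq 3}\Delta_y = -\Delta_{y_1}-\Delta_{y_2}$ the sum collapses to $(\mu_1-\mu)\Delta_{y_1}+(\mu_2-\mu)\Delta_{y_2}$ with $\mu_i := \log(q^*(y_i)/p(y_i))$ and $\mu := \log\lambda$; the ordering $\mu_1\leq\mu\leq\mu_2$ inherited from the proof of that lemma, together with the identity $q^*(y_1)=q^*(y_2)=\lambda\sqrt{p(y_1)p(y_2)}$ (which yields $|\mu_1-\mu|=\mu_2-\mu$), bounds this contribution by $(m/n)\paren{\max\brace{-\log\lambda,\,\log(q^*(y_2)/p(y_2))}+1}$. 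For the quadratic term, using $\xi_y \geq q^*(y) - 1/n$ combined with the elementary inequality $\log(1+1/x) \geq 1/(x+1)$ applied at $x = nq^*(y)$ gives $\Delta_y^2/(2\xi_y) \leq (1/n)\log(1+1/(nq^*(y)))$ for $y\neq y_2$ (up to an absolute constant), which sums to the second error term of the lemma.

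The main obstacle is the quadratic contribution at $y_2$, where $|\Delta_{y_2}|$ is of order $m/n$ rather than $1/n$ and therefore does not fit the clean $\log(1+1/(nq^*(y)))$ estimate; this term has to be folded into the linear $(m/n)\paren{\max\brace{\cdots}+1}$ piece through a coarser bound using only $\xi_{y_2}\geq q^*(y_2)$, which is precisely what motivates the extra ``$+1$'' slack in the statement. A secondary technicality is the strict-inequality adjustment at $y_1$ required to land inside $\cQ_{n,-}$: it doubles the admissible error at $y_1$ from $1/n$ to $2/n$ and must be tracked carefully to keep all constants clean.
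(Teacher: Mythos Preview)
Your Taylor split is actually the same object as the paper's decomposition: writing $D(\tilde q\|p)=\sum_y\tilde q(y)\log(q^*(y)/p(y))+\sum_y\tilde q(y)\log(\tilde q(y)/q^*(y))$ and subtracting $D(q^*\|p)$ gives exactly your linear term plus $D(\tilde q\|q^*)$, and the latter is what your ``quadratic remainder'' $\sum_y\Delta_y^2/(2\xi_y)$ computes. So the analytic machinery is the same; the difference lies in the \emph{construction} of the discrete approximant, and that is where your argument breaks.

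By absorbing all rounding slack into $y_2$ you get $|\Delta_{y_2}|$ of order $m/n$, and the corresponding contribution to $D(\tilde q\|q^*)$ is then of order $\Delta_{y_2}^2/q^*(y_2)\asymp m^2/(n^2q^*(y_2))$. You propose to fold this into the ``$+1$'' slack, but that slack is worth only $m/n$; the fold succeeds only when $n\gtrsim m/q^*(y_2)$, not for all $n$ as the lemma asserts. The target term $(1/n)\log(1+1/(nq^*(y_2)))$ does not rescue you either, being smaller by a factor of roughly $m$ in the bad regime.

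The paper avoids this by spreading the rounding error across all coordinates: it sets $q_n(y_1)=\lfloor nq^*(y_1)\rfloor/n$, $q_n(y_2)=(\lfloor nq^*(y_1)\rfloor+1)/n$, and $q_n(y_i)=(\lfloor nq^*(y_i)\rfloor+\epsilon_i)/n$ for $i\ge 3$ with $\epsilon_i\in\{0,1\}$ chosen to normalize. This forces $q_n(y)\le q^*(y)+1/n$ for \emph{every} $y$, after which
\[
\sum_y q_n(y)\log\frac{q_n(y)}{q^*(y)}\le\sum_{y\in\cY_{>0}}\Bigl(q^*(y)+\tfrac1n\Bigr)\log\Bigl(1+\tfrac{1}{nq^*(y)}\Bigr)\le\frac{m}{n}+\frac1n\sum_{y\in\cY_{>0}}\log\Bigl(1+\tfrac{1}{nq^*(y)}\Bigr)
\]
follows directly from $\log(1+u)\le u$. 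If you swap your construction for this one, the rest of your outline (including the linear-term analysis, which is correct) goes through.
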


\begin{proof}
    Let $q^*\in \argmin_{q\in \cQ} D(q \| p)$ be such that $q^*(y_1) = q^*(y_2) = \frac{1- \lambda(1-p(y_1)-p(y_2))}{2}$ and $q^*(y) =p(y)$ for $\lambda = (1 - (\sqrt{p(y_1)} - \sqrt{p(y_2)})^2)^{-1}$ (as in Lemma \ref{lem:information_projection_main_statement}).
Then for $n> m$, one can construct  $q_n\in \cQ_{n, +}$ such that $q_n(y)  \leq q^*(y) + 1/n$ for all $y\in \cY$ and such that $q_n(y) = 0$ if $y\not \in \cY_{>0} $.
The general idea is as follows: let  $q_n(y) = 0$ for all $y\not \in \cY_{>0} $. 
Then let $q_n(y_1) = \left \lfloor q^*(y_1)n \right \rfloor /n$ and $q_n(y_2) = \left(\left \lfloor q^*(y_1)n\right \rfloor +1\right) /n$, and $q_n(y) = \left (\left \lfloor q^*(y)n \right \rfloor + \epsilon_{y,n}\right) /n$ for $y\in \cY\backslash\{y_1,y_2\}$ such that $q^*(y)\neq 0$,  with $\epsilon_{y,n} \in \{0,1\}$ chosen so that $\sum_{y\in\cY} q_n(y) = 1$ (small adjustments can be necessary depending on whether $q^*(y_1)n \in \bN$ etc.).

Then 
 \[D(q_n \| p) = \sum_{y\in \cY} q_n(y)\log\left(\frac{q_n(y)}{p(y)}\right) = \sum_{y\in \cY}  q_n(y) \log\left(\frac{q^*(y) }{p(y)} \right) + \sum_{y\in \cY}  q_n(y) \log\left( \frac{q_n(y)}{q^*(y)}\right).\]
The first sum can be bounded as 
\begin{align*} 
\sum_{y\in \cY}  q_n(y) \log\left(\frac{q^*(y) }{p(y)} \right)
&= \sum_{y\in \cY}  q^*(y) \log\left(\frac{q^*(y) }{p(y)} \right)  + \sum_{y\in \cY} ( q_n(y)- q^*(y)) \log\left(\frac{q^*(y) }{p(y)} \right) 
\\&\leq D(q^*\|p) + \frac{m}{n} \max\left\{ \log(\lambda), \log\left(\frac{q^*(y_2)}{p(y_2)}\right)\right\},
\end{align*}
and the second sum can be bounded as follows \begin{align*} 
\sum_{y\in \cY}  q_n(y) \log\left( \frac{q_n(y)}{q^*(y)} \right)
&  \leq \sum_{y\in \cY_{>0} }   (q^*(y) + 1/n) \log\left(1+\frac{1}{nq^*(y)}  \right)
\\& \leq \sum_{y\in \cY_{>0} }   q^*(y) \log\left(1+\frac{1}{nq^*(y)}  \right) + \sum_{y\in \cY_{>0} }  \frac{1}{n} \log\left(1+\frac{1}{nq^*(y)}  \right) 
\\& \leq \sum_{y\in \cY_{>0} }   q^*(y) \frac{1}{nq^*(y)}  + \sum_{y\in \cY_{>0} }  \frac{1}{n} \log\left(1+\frac{1}{nq^*(y)} 
\right) 
\\& \leq\frac{m}{n} + \frac{1}{n}  \sum_{y\in \cY_{>0} }  \log\left(1+\frac{1}{nq^*(y)} 
\right) .
\end{align*}
Thus 
\begin{align*}
 &\min_{q\in \cQ_{n, -}} D(q \| p)  \leq D(q_n \| p) 
  \\ \leq& D(q^* \| p)  +  \frac{m}{n} \left( \max\left\{ \log(\lambda), \log\left(\frac{q^*(y_2)}{p(y_2)}\right)\right\}+1 \right)+ \frac{1}{n}\sum_{y\in \cY_{>0} }  \log\left(1+\frac{1}{nq^*(y)} 
\right).
\end{align*}
This concludes the proof of the Lemma.
\end{proof}
An alternative construction yields 
\[
    \min_{q\in \cQ_{n, -}} D(q \| p)\leq D(q^* \| p) + C(m,p(y_1),p(y_2)) / \sqrt{n}
\]
for some constant $C(m,p(y_1),p(y_2))$ that only depends on $m,p(y_1),p(y_2)$ (rather than on all $p(y)$), though at the cost of a less favorable asymptotic behaviour in $n$.

\subsubsection*{Sanity Check with Cram\'er-Chernoff method}
We remark that the upper bound of Theorem \ref{thm:empirical-mode} can be proved more directly.
Consider first the simple union bound
\begin{align*}
    \bP(\hat y_n \neq y_1) 
    &= \bP\parend[\big]{\min_{i\neq 1}\sum_{j\in[n]} \ind{Y_j = y_1} - \ind{Y_j = y_i} \leq 0}
    = \bP\parend[\big]{\cup_{i\neq 1}\braced[\big]{\sum_{j\in[n]} \ind{Y_j = y_1} - \ind{Y_j = y_i} \leq 0}}
    \\&\leq \sum_{i\neq 1}\bP\parend[\big]{\sum_{j\in[n]} \ind{Y_j = y_1} - \ind{Y_j = y_i} \leq 0}
    \leq (m - 1) \bP\parend[\big]{\sum_{j\in[n]} \ind{Y_j = y_1} - \ind{Y_j = y_2} \leq 0}.
\end{align*}

The term $\bP\parend[\big]{\sum_{j\in[n]} \ind{Y_j = y_1} - \ind{Y_j = y_2} \leq 0}$ can be bounded with Chernoff's method:
\begin{lemma}[Chernoff's bound]
    \label{lem:Chernoff}
    Let $p \in \prob{\cY}$ be a distribution over $\cY$, and $(Y_j)$ be $n$ independent samples distributed according to $p$ with $p(y_1) > p(y_i)$. Then
    \[
        \bP(\sum_{j\in[n]} \ind{Y_j=y_1} - \ind{Y_j=y_i} \leq 0) \leq \exp\paren{- n\Delta_i^2},
    \]
    where $\Delta_i$ is defined in Equation \eqref{eq:delta-*}.
\end{lemma}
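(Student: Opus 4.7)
The plan is to use the standard Cramér--Chernoff exponential moment method on the i.i.d.\ sum $S_n := \sum_{j \in [n]} Z_j$, where $Z_j := \ind{Y_j = y_1} - \ind{Y_j = y_i}$. Each $Z_j$ takes values in $\{-1, 0, +1\}$ with $\bP(Z_j = 1) = p(y_1)$, $\bP(Z_j = -1) = p(y_i)$, and $\bP(Z_j = 0) = 1 - p(y_1) - p(y_i)$. Since $\bE[Z_j] = p(y_1) - p(y_i) > 0$, the event $\{S_n \leq 0\}$ is a large deviation to the left, and a Chernoff bound will give an exponentially small upper bound whose rate function should match $\Delta_i^2$.

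First I would apply Markov's inequality to $\exp(-\lambda S_n)$ for $\lambda > 0$, which yields
\[
    \bP(S_n \leq 0) = \bP(e^{-\lambda S_n} \geq 1) \leq \bE[e^{-\lambda S_n}] = \bE[e^{-\lambda Z_1}]^n,
\]
by independence. The moment generating function evaluates to
\[
    \phi(\lambda) := \bE[e^{-\lambda Z_1}] = p(y_1) e^{-\lambda} + p(y_i) e^{\lambda} + (1 - p(y_1) - p(y_i)).
\]
Next I would optimize over $\lambda > 0$. Differentiating and setting $\phi'(\lambda) = 0$ gives $e^{2\lambda} = p(y_1)/p(y_i)$, so the minimizer is $\lambda^\star = \tfrac{1}{2}\ln(p(y_1)/p(y_i)) > 0$ (the positivity uses $p(y_1) > p(y_i)$, so the application of Markov is valid).

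Plugging $\lambda^\star$ back in, the cross terms simplify as $p(y_1) e^{-\lambda^\star} = p(y_i) e^{\lambda^\star} = \sqrt{p(y_1) p(y_i)}$, giving
\[
    \phi(\lambda^\star) = 2\sqrt{p(y_1) p(y_i)} + 1 - p(y_1) - p(y_i) = 1 - \bigl(\sqrt{p(y_1)} - \sqrt{p(y_i)}\bigr)^2.
\]
Combining with the bound above and the definition $\Delta_i^2 = -\ln(1 - (\sqrt{p(y_1)} - \sqrt{p(y_i)})^2)$ yields
\[
    \bP(S_n \leq 0) \leq \phi(\lambda^\star)^n = \exp(-n \Delta_i^2),
\]
as desired. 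There is no real obstacle here: the only substantive step is recognizing that the optimal tilt parameter makes the MGF collapse to exactly the Hellinger-affinity-type quantity $1 - (\sqrt{p(y_1)} - \sqrt{p(y_i)})^2$, which is precisely what $\Delta_i^2$ was designed to encode.
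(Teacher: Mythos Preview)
Your proof is correct and follows essentially the same approach as the paper's: both apply the Cram\'er--Chernoff method to the i.i.d.\ sum, compute the moment generating function of the three-valued increment, and optimize the tilt parameter to obtain exactly $1 - (\sqrt{p(y_1)} - \sqrt{p(y_i)})^2$. The only cosmetic difference is a sign convention (the paper sets $X_j = \ind{Y_j = y_i} - \ind{Y_j = y_1}$ and bounds $\bP(X \geq 0)$, you flip the sign and bound $\bP(S_n \leq 0)$), which is immaterial.
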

\begin{proof}
    This can be proven using the fact that, for a random variable $X$,
    \[
        \bP(X \geq 0) = \inf_{t > 0}\bP(\exp(tX) \geq 1) \leq \inf_{t>0}\E[e^{tX}].
    \]
    As a consequence, with $X = \sum X_j$ and $X_j = \ind{Y_j = y_i} - \ind{Y_j = y_1}$,
    \[
        \bP(X \geq 0) \leq \inf_{t>0} \E[e^{tX}] = \inf_{t>0} \E[e^{t X_1}]^n = \inf_{t>0}\exp(n \ln(\E[e^{tX_1}])).
    \]
    We are left with a simple computation,
    \begin{align*}
        \inf_{t>0}\E[e^{tX_1}] 
        &= \inf_{t>0}\paren{e^t p(y_i) + (1-p(y_i)-p(y_1)) + e^{-t} p(y_1)}
        \\&= \paren{2\sqrt{p(y_1)p(y_i)} + (1 - p(y_1) - p(y_i))}
        = \paren{1 - (\sqrt{p(y_1)} - \sqrt{p(y_i)})^2}
   \end{align*}
   where we used that the infimum is found for $e^t = \sqrt{p(y_1) / p(y_i)} > 1$.
\end{proof}
In fact, we know from \citet{Cramer1938} that Chernoff's bound is asymptotically exponentially tight.
As we also have
\begin{align*}
    \bP(\hat y_n \neq y_1) 
    \geq \bP\parend[\big]{\sum_{j\in[n]} \ind{Y_j = y_1} - \ind{Y_j = y_2} \leq 0},
\end{align*}
we get that
\[
    \lim \frac{\ln(\bP(\hat y_n \neq y_1))}{n} = \lim \frac{\ln(\bP\parend[\big]{\sum_{j\in[n]} \ind{Y_j = y_1} - \ind{Y_j = y_2} \leq 0)}}{n} = \Delta_2^2.
\]
As mentioned in Section \ref{sec:mode}, \citet{Dinwoodie1992} states that $\ln(\bP(\hat y_n \neq y_1)) / n$ is always below $- \min_{q\in\cQ} D(q\| p)$, while \citet{Sanov1957} states that $\ln(\bP(\hat y_n \neq y_1)) / n$ converges to $- \min_{q\in\cQ} D(q\| p)$ as $n$ grows large when $\cY$ is discrete.
This shows without any further computations that $\min_{q\in\cQ} D(q\| p) = \Delta_2^2$, and implies the upper bound of Theorem \ref{thm:empirical-mode}. 

\subsection{Additional Proofs for Section \ref{sec:exhaustive}}
\label{app:exhaustive}

\huffmandepth*
\begin{proof}
Let us proceed by induction on $\depth_\cT(V)$.
 If $\depth_\cT(V) =0$, i.e. $V=R$, or if $\depth_\cT(V) =1$, the statement is trivial.  
 Let us assume that $\depth_\cT(V) \geq 2$.
 At some point during the construction of $\cT$, the element $V$ was merged with another element $V'$ to create a new parent node $P$. At that point, $V$ and $V'$ were the two elements with smallest value $v$ in the heap $\cS$. If $v(V') \geq v(V)$, then $v(P)\geq 2v(V)$. By induction, 
 \begin{align*}
    \depth_\cT(V) & = \depth_\cT(P) +1 \leq 2 \ceil{ \log_2(1/v(P))} +1  \leq  2 \ceil{ \log_2(1/2v(V))} +1 
    \\& \leq  2 \ceil{ \log_2(1/v(V)) - 1} +1  \leq  2 \ceil{ \log_2(1/v(V))}.
 \end{align*}
If $v(V') \leq v(V)$, then any $v(P)\geq v(V)$, and as any other element $V''$ must be such that $v(V'')\geq v(V)$, the parent $\tilde{P}$ (which results from the merging of $P$ and some $V'')$ of $P$ satisfies $v(\tilde{P}) \geq 2v(V)$.
By induction, 
\begin{align*}
    \depth_\cT(V) & = \depth_\cT(\tilde{P}) +2 \leq 2 \ceil{ \log_2(1/v(\tilde{P}))} +2  \leq  2 \ceil{ \log_2(1/2v(V))} +2 
    \\& \leq  2 \ceil{ \log_2(1/v(V)) - 1} +2  \leq  2 \ceil{ \log_2(1/v(V))}.
 \end{align*}
 This concludes the proof.
\end{proof}

\technicalqueryone*
\begin{proof}
Looking at it, we see that 
\begin{align*}
      & -  \sum_{i=1}^{n-1}  \sum_{y\in\cY}p(y)\E_{(Y_j)}\left[1_{\{|\frac{N_{y,i}-ip(y)}{ip(y)}|\geq \frac{1}{2}\}}\ind{N_{y,i} \geq 1}\log_2\left(\frac{N_{y,i}}{i}\right)\right] \\&
      \leq \sum_{i=1}^{n-1}  \sum_{y\in\cY}p(y) \bP\left(\left|\frac{N_{y,i}-ip(y)}{ip(y)}\right|\geq \frac{1}{2}\right)\log_2\left(i\right)
\end{align*}
Using a simplification of Chernoff's bound for Bernoulli variables \citep{Hoeffding1963}, we get the following bound  
$$\bP\left(\left|\frac{N_{y,i}-ip(y)}{ip(y)}\right|\geq \frac{1}{2}\right)\leq 2\exp(-\frac{ip(y)}{10}).$$
Hence
\begin{align*}
      \sum_{i=1}^{n-1}  \sum_{y\in\cY}p(y) \bP\left(\left|\frac{N_{y,i}-ip(y)}{ip(y)}\right|\geq \frac{1}{2}\right)\log_2\left(i\right) 
	  &\leq \sum_{y\in\cY}p(y) \log_2(n)\sum_{i=1}^{n-1} 2\sqrt{2}\exp(-\frac{ip(y)}{10})
      \\&
      \leq 2 \log_2(n)\sum_{y\in\cY} \frac{p(y)}{1-e^{-\frac{p(y)}{10}}}
      \leq 22 m \log_2(n),
\end{align*}
where the last inequality comes from the fact that $p\mapsto p / (1-e^{-\frac{p}{10}})$ is upper bounded by $1$ for $p\in[0,1]$.
Note that the constant could be improved, but not dramatically.
\end{proof}

\technicalquerytwo*
\begin{proof}
We will develop the logarithm up to the second order --higher orders do not easily yield better bounds.
\begin{align*}
      & - \ln(2)\E_{(Y_j)}\left[1_{\{|\frac{N_{y,i}-ip(y)}{ip(y)}|<\frac{1}{2}\}}\log_2\left(\frac{N_{y,i}}{ip(y)}\right)  \right] 
      \\&= - \ln(2)\E_{(Y_j)}\left[1_{\{|\frac{N_{y,i}-ip(y)}{ip(y)}|<\frac{1}{2}\}}\log_2\left(1+\frac{N_{y,i}-ip(y)}{ip(y)}\right)  \right] \\&
      = - \E_{(Y_j)}\left[1_{\{|\frac{N_{y,i}-ip(y)}{ip(y)}|<\frac{1}{2}\}}\left( 
      \frac{N_{y,i}-ip(y)}{ip(y)}-\frac{1}{2(1+\xi)^2}\left(\frac{N_{y,i}-ip(y)}{ip(y)}\right)^2
      \right)  \right]
\end{align*}
where $\xi$ is a random variable (inheriting its randomness from the $(Y_j)$) that belongs to the interval $\left[0, \frac{N_{y,i}-ip(y)}{ip(y)}\right]$ if $\frac{N_{y,i}-ip(y)}{ip(y)}\geq 0$ (respectively $\left[ \frac{N_{y,i}-ip(y)}{ip(y)}, 0\right]$ if $\frac{N_{y,i}-ip(y)}{ip(y)}\leq 0$) and depends on the value of $N_{y,i}$. Note that the Taylor expansion of $x\mapsto \ln(1+x)$ is valid because $\left|\frac{N_{y,i}-ip(y)}{ip(y)}\right|<\frac{1}{2}$.
Now
\begin{align*}
    &1_{\{|\frac{N_{y,i}-ip(y)}{ip(y)}|<\frac{1}{2}\}}\left(- 
      \frac{N_{y,i}-ip(y)}{ip(y)}+\frac{1}{2(1+\xi)^2}\left(\frac{N_{y,i}-ip(y)}{ip(y)}\right)^2
      \right)  \\&
      < 1_{\{|\frac{N_{y,i}-ip(y)}{ip(y)}|<\frac{1}{2}\}}\left(- 
      \frac{N_{y,i}-ip(y)}{ip(y)}+2\left(\frac{N_{y,i}-ip(y)}{ip(y)}\right)^2
      \right)\\&
      \leq
      2\left(\frac{N_{y,i}-ip(y)}{ip(y)+1}\right)^2 - \frac{N_{y,i}-ip(y)}{ip(y)} + 1_{\{|\frac{N_{y,i}-ip(y)}{ip(y)}|\geq\frac{1}{2}\}}\left( 
      \frac{N_{y,i}-ip(y)}{ip(y)}\right) \\&
      \leq 2\left(\frac{N_{y,i}-ip(y)}{ip(y)}\right)^2 
      - \frac{N_{y,i}-ip(y)}{ip(y)} 
      +  1_{\{|\frac{N_{y,i}-ip(y)}{ip(y)}|\geq\frac{1}{2}\}}\, 2\left( 
      \frac{N_{y,i}-ip(y)}{ip(y)}\right)^2\\&
      \leq 4\left(\frac{N_{y,i}-ip(y)}{ip(y)}\right)^2 
      - \frac{N_{y,i}-ip(y)}{ip(y)} 
\end{align*}
where the first inequality is valid because $1+\xi> \frac{1}{2}$.
The last two inequalities are used to avoid computing means of truncated binomials, together with the observations that directly bounding $\sum_{y,i}p(y)\E[ 1_{\{|\frac{N_{y,i}-ip(y)}{ip(y)}|\geq\frac{1}{2}\}}\left(\frac{N_{y,i}-ip(y)}{ip(y)}\right)]$ with a bound on $\bP(|\frac{N_{y,i}-ip(y)}{ip(y)}|\geq\frac{1}{2})$ does not yield a good dependence in $p(y)$, hence the rather crude upper bound with the square.
Using the standard formulas for the moments of binomial variables, we get that
\begin{align*}
    &\sum_{i=1}^{n-1}  \sum_{y\in\cY}p(y) \E_{(Y_j)}\left[
     4\left(\frac{N_{y,i}-ip(y)}{ip(y)}\right)^2 
      - \frac{N_{y,i}-ip(y)}{ip(y)} 
     \right]  \\&
      =
      \sum_{i=1}^{n-1}  \sum_{y\in\cY}
      \frac{4ip(y)^2(1-p(y))}{(ip(y))^2} - 0 \leq \sum_{i=1}^{n-1}  \sum_{y\in\cY}
      \frac{4}{i} \leq 4 m (\ln(n)+1).
\end{align*}
This proves the lemma.
\end{proof}

\subsection{Additional Proofs for Section \ref{sec:truncated}}
\label{app:truncated}

\trunctechnicalone*
\begin{proof}
  Consider the crude union bound
  \begin{align*}
   \bP({}^c\cA) 
	   &\leq \bP\left( \exists S \in 2^\cY \text{ s.t. }  |\hat p_{r-1}(S) -p(S)| >\frac{\epsilon_{r-1}-\epsilon_{r}}{4}\right)
	   \\&\qquad+ \bP\left( \exists S \in 2^\cY \text{ s.t. }  |\hat p_{r}(S) -p(S)|> \frac{\epsilon_{r-1}-\epsilon_{r}}{4}\right).
  \end{align*}
  Summing over all possible sets $S$ and using Hoeffding's inequality, we find the following (rather rough) bounds:
  \[\bP\left( \exists S \in 2^\cY \text{ s.t. }  |\hat p_{r-1}(S) -p(S)|> \frac{\epsilon_{r-1}-\epsilon_{r}}{4}\right)\leq 2^m \exp\left(-\frac{n_{r}}{2}\left(\frac{\epsilon_{r-1}-\epsilon_r}{4}\right)^2\right)\]
  and
  \[ \bP\left( \exists S \in 2^\cY \text{ s.t. }  |\hat p_{r}(S) -p(S)| >\frac{\epsilon_{r-1}-\epsilon_{r}}{4}\right) \leq 2^m \exp\left(-\frac{n_{r-1}}{2}\left(\frac{\epsilon_{r-1}-\epsilon_r}{4}\right)^2\right),\]
  Hence the lemma.
\end{proof}

\truncatedtechnicaltwo*
\begin{proof}
Dissociating the number of queries when $\cA$ holds and when it does not, together with Equation \eqref{eq:bound_on_event_Ac}, the expected number of queries needed for round $r\geq 2$ satisfies
\begin{align*}
    \E\left[T_r\right] & \leq 
    n_r \left( 2\ceil{\log_2\left(\frac{4}{p(y_1)}\right)} + 
    m \bP({}^c\cA)  \right) \\
    &\leq  n_r \left(  2\ceil{\log_2\left(\frac{4}{p(y_1)}\right)} + 
    m 2^{m+1}  \exp\left(-\frac{n_{r-1}}{2}\left(\frac{\epsilon_{r-1}-\epsilon_r}{4}\right)^2\right) \right)
    \\
    &\leq  2^r \left(2\ceil{\log_2\left(\frac{4}{p(y_1)}\right)}+ 
    m2^{m+1} \exp\left(-2^{r-6} \left(\frac{2}{3}\right)^{r-1} \left(\epsilon_{0}-\epsilon_1\right)^2\right) \right)
    \\
    &\leq  2^r \left( 2\ceil{\log_2\left(\frac{4}{p(y_1)}\right)} + 
    m 2^{m+1}  \exp\left(-\left(\frac{4}{3}\right)^r  \frac{C}{m^2} 
    \right) \right)
\end{align*}
for some constant $C>0$.
In the special case $r=1$, we need at most 
\[
    \E\left[T_1\right] \leq n_1 m = 2m
\]
queries to complete the round.
\end{proof}

\subsection{Additional Proofs for Section \ref{sec:elim}}
\label{app:elim}

Let us first prove Proposition~\ref{prop:comparison_Delta}, which we restate here.

\comparisondelta*

\begin{proof}
    $(\sqrt{p(y_1)} + \sqrt{p(y_i)})^2 \leq 2(p(y_i) + p(y_1))$, together with $\ln(1+x) \leq x$, imply
    \begin{align*}
        \Delta_i^2 
        = -\ln\parend[\Big]{1 - \frac{(p(y_1)-p(y_i))^2}{(\sqrt{p(y_1)}+\sqrt{p(y_i)})^2}}
        \geq \frac{(p(y_1)-p(y_i))^2}{2(p(y_1) + p(y_i))}
        \geq \frac{(p(y_1)-p(y_i))^2}{4p(y_1)}.
    \end{align*}
    For the other side, note that $\ln$ is always above its cords, hence for $x \in [a, b]$, 
    \[
        \ln(1-x) \geq \frac{(x-a)}{b - a} \paren{\ln(1-b) - \ln(1-a)} + \ln(1-a).
    \]
    Applied to $a = 0$ and $b = p(y_1)$, we get that
    \[
        \forall\,x\in[0, p(y_1)], \qquad \ln(1-x) \geq \frac{x\ln(1-p(y_1))}{p(y_1)}.
    \]
    As a consequence,
    \[
        \Delta_i^2 
       \leq \frac{-(p(y_1) - p(y_i))^2}{(\sqrt{p(y_1)}+\sqrt{p(y_i)})^2}\frac{\ln(1-p(y_1))}{p(y_1)}
       \leq \frac{-(p(y_1) - p(y_i))^2}{p(y_1)}\frac{\ln(1-p(y_1))}{p(y_1)}.
    \]
    This completes the proof.
\end{proof}

We now recall the following classical concentration result.

\begin{lemma}[Chernoff's bound]
    \label{lem:chern-mul}
    Let $(X_i)_{i=1}^n$ be independent Bernoulli variables with mean $p$. For any $\lambda \geq 0$,
   \[
       \bP\left(\frac{1}{n}\sum_{i\in[n]} X_i > (1+\lambda) p\right) \leq \exp\paren{-\frac{n\lambda^2 p}{2+\lambda}}.
   \] 
   Similarly, for $\lambda \in (0, 1)$
   \[
       \bP\left(\frac{1}{n}\sum_{i\in[n]} X_i < (1-\lambda) p\right) \leq \exp\paren{-\frac{n\lambda^2 p}{2}}.
   \] 
\end{lemma}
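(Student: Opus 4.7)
The plan is to apply the standard Cram\'er--Chernoff method combined with elementary convex analysis. Let $S_n = \sum_{i\in[n]} X_i$, which is Binomial$(n,p)$. For the upper tail, I would start from the exponential Markov inequality: for any $t > 0$,
\[
    \bP(S_n > (1+\lambda)np) \leq e^{-t(1+\lambda)np}\,\E[e^{tS_n}] = e^{-t(1+\lambda)np}\,(1-p+pe^t)^n,
\]
and then use the elementary bound $1-p+pe^t \leq \exp(p(e^t-1))$ (which follows from $1+x \leq e^x$). This yields
\[
    \bP(S_n > (1+\lambda)np) \leq \exp\paren[\big]{np(e^t - 1) - t(1+\lambda)np}.
\]
Optimizing in $t$ gives $t = \ln(1+\lambda) > 0$, and plugging back produces
\[
    \bP(S_n > (1+\lambda)np) \leq \exp\paren[\big]{-np\,\psi_+(\lambda)}, \qquad \psi_+(\lambda) := (1+\lambda)\ln(1+\lambda) - \lambda.
\]

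For the lower tail, I would mirror this by applying Markov's inequality to $e^{-tS_n}$ for $t>0$, obtaining by the same MGF computation
\[
    \bP(S_n < (1-\lambda)np) \leq \exp\paren[\big]{np(e^{-t}-1) + t(1-\lambda)np}.
\]
Optimizing yields $t = -\ln(1-\lambda) > 0$ (valid since $\lambda \in (0,1)$), leading to
\[
    \bP(S_n < (1-\lambda)np) \leq \exp\paren[\big]{-np\,\psi_-(\lambda)}, \qquad \psi_-(\lambda) := (1-\lambda)\ln(1-\lambda) + \lambda.
\]

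The remaining step, and the only mildly nontrivial piece, is to verify the two scalar inequalities
\[
    \psi_+(\lambda) \geq \frac{\lambda^2}{2+\lambda} \quad \text{for } \lambda \geq 0, \qquad \psi_-(\lambda) \geq \frac{\lambda^2}{2} \quad \text{for } \lambda \in [0,1).
\]
Both reduce to one-variable calculus: for the first, one can set $f(\lambda) = \psi_+(\lambda)(2+\lambda) - \lambda^2$, check $f(0)=0$, and show $f'(\lambda) \geq 0$ by a short computation (the derivative simplifies to $(2+\lambda)\ln(1+\lambda) - 2\lambda$, whose non-negativity follows again from $f'(0)=0$ and monotonicity of its derivative). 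For the second, setting $g(\lambda) = \psi_-(\lambda) - \lambda^2/2$ gives $g(0)=0$ and $g'(\lambda) = -\ln(1-\lambda) - \lambda \geq 0$ by the Taylor series of $-\ln(1-\lambda)$. Combining these inequalities with the exponential bounds above concludes the proof; this convex-analytic verification is the only part that requires care, but it is routine.
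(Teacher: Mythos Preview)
Your proposal is correct and is precisely the standard Cram\'er--Chernoff argument that the paper defers to by citing \citet{Hoeffding1963}; the paper itself gives no details beyond that reference, so you are filling in exactly what is meant. One minor slip: when you differentiate $f(\lambda)=\psi_+(\lambda)(2+\lambda)-\lambda^2$ the result is $(3+2\lambda)\ln(1+\lambda)-3\lambda$, not $(2+\lambda)\ln(1+\lambda)-2\lambda$, but the same two-step convexity check (both the function and its derivative vanish at $0$, and the second derivative is nonnegative) goes through unchanged.
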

\begin{proof}
    Those are classical relaxations of some results that can be found in \citet{Hoeffding1963}.
\end{proof}

\modeestone*
\begin{proof}
    This follows from Lemma \ref{lem:chern-mul} since
    \begin{align*}
        \bP(\hat p_r(y) > cp(y)) &= \bP(\hat p_r(y) > (1 + (c-1)) p(y)) 
        \leq \exp\paren{\frac{-r (c-1)^2 p(y)}{2 + (c-1)}} 
      \\&= \exp\paren{\frac{-r (c-1)^2 p(y)}{c+1}} 
        \leq \exp(\ln(1/\delta) = \delta,
    \end{align*}
    and
    \begin{align*}
        \bP(\hat p_r(y) > c^{-1} p(y)) &= \bP(\hat p_r(y) > (1 - (1 - c^{-1})) p(y)) 
        \\&\leq \exp\paren{\frac{-r (1-c^{-1})^2 p(y)}{2}} 
        \leq \exp(\ln(1/\delta) )= \delta.
    \end{align*}
    This explains the result of the Lemma.
\end{proof}

\modeesttwo*
\begin{proof}
    We can upper bound the probability of $\cA_r$ not happening with
    \begin{align*}
        \bP({}^c\cA_r) 
        &= \bP(r\hat p_r(\hat y_r) > c\ln(1/\delta)) 
        = \bP(r \max_{y\in\cY} \hat p_r(y) > c\ln(1/\delta))
        \\&\leq \sum_{y\in\cY} \bP(r \hat p_r(y) > c\ln(1/\delta))
        \leq m\bP(r \hat p_r(y_1) > c\ln(1/\delta)).
    \end{align*}
    Let us set
    \[
        \alpha_r = \frac{c \ln(1/\delta)}{r p(y_1)} - 1 , \qquad\text{i.e.,}\qquad r = \frac{c \ln(1/\delta)}{p(y_1) (1+\alpha_r)}.
    \]
    Note that $\alpha_r\geq 0$ when $r \leq \frac{2c^2 - 2c}{2c + 1 + \sqrt{1+8c}} \frac{1}{p(y_1)} \ln(1/\delta)$.
    Using Lemma \ref{lem:chern-mul} leads to
    \begin{align*}
        \bP({}^c\cA_r)
        &\leq m\bP\parend[\Big]{\hat p_r(y_1) - p(y_1) > c\ln(1/\delta) / r - p(y_1)}
        = m\bP\parend[\Big]{\hat p_r(y_1) - p(y_1) > \alpha_r p(y_1)}
        \\&\leq m\exp\paren{\frac{- r \alpha_r^2 p(y_1)}{2 + \alpha_r}}
        = m\exp\paren{\frac{- c \ln(1/\delta) \alpha_r^2}{(1+\alpha_r)(2 + \alpha_r)}}.
    \end{align*}
    We check that
    \begin{align*}
        &r \leq \frac{2c^2 - 2c}{2c + 1 + \sqrt{1+8c}}\frac{1}{p(y_1)} \ln(1/\delta)
        \quad\Leftrightarrow\quad
        \frac{c}{1+\alpha_r} \leq \frac{2c^2 - 2c}{2c + 1 + \sqrt{1+8c}}
        \\&\quad\Leftrightarrow\quad
        \alpha_r \geq \frac{3 + \sqrt{1 + 8c}}{2(c-1)}
        \quad\Rightarrow\quad
        (c-1)\alpha_r^2 - 3\alpha_r - 2 \geq 0
        \quad\Leftrightarrow\quad
        \frac{c \alpha_r^2}{(1+\alpha_r)(2+\alpha_r)} \geq 1,
    \end{align*}
    which allows us to conclude that $\bP({}^c\cA_r)\leq m\delta$.
    This explains the condition on $r$ stated in the Lemma.

    We can similarly lower bound the probability of $\cA_r$ happening with 
    \begin{align*}
        \bP(\cA_r) 
        = \bP(r\hat p_r(\hat y_r) < c\ln(1/\delta)) 
        = \bP(r \max_{y\in\cY} \hat p_r(y) < c\ln(1/\delta))
        \leq \bP(r \hat p_r(y_1) < c\ln(1/\delta)).
    \end{align*}
    This time we set
    \[
        \alpha_r = 1 - \frac{c \ln(1/\delta)}{r p(y_1)}, \qquad\text{i.e.,}\qquad r = \frac{c \ln(1/\delta)}{p(y_1) (1-\alpha_r)}.
    \]
    One can check that  if $r \geq \frac{c^2}{c + 1 - \sqrt{1+2c}} \frac{1}{p(y_1)} \ln(1/\delta)$, then $\alpha_r \in (0,1)$. 
    Using Lemma \ref{lem:chern-mul},
    \begin{align*}
        \bP(\cA_r) 
        &\leq \bP\parend[\Big]{\hat p_r(y_1) - p(y_1) < c\ln(1/\delta) / r - p(y_1)}
        = \bP\parend[\Big]{\hat p_r(y_1) - p(y_1) < -\alpha_r p(y_1)}
        \\&\leq \exp\paren{\frac{- r \alpha_r^2 p(y_1)}{2}}
        \leq \exp\paren{\frac{- c \ln(1/\delta) \alpha_r^2}{2(1-\alpha_r)}}.
    \end{align*}
    We check that
    \begin{align*}
        &r \geq \frac{c^2}{c + 1 - \sqrt{1+2c}} \frac{1}{p(y_1)} \ln(1/\delta)
        \qquad\Leftrightarrow\qquad
        \frac{c}{1-\alpha_r} \geq \frac{c^2}{c + 1 - \sqrt{1+2c}}
        \\&\quad\Leftrightarrow\qquad
        \alpha_r \geq \frac{\sqrt{1 + 2c} - 1}{c}
        \qquad\Rightarrow\qquad
        c\alpha_r^2 + 2\alpha_r - 2 \geq 0
        \qquad\Leftrightarrow\qquad
        \frac{c \alpha_r^2}{2(1-\alpha_r)} \geq 1.
    \end{align*}
    We conclude that $\bP(\cA_r)\leq \delta$ for the $r$ defined in the Lemma.
\end{proof}

\concentrationelim*
\begin{proof}
 We can apply the second statement of Lemma \ref{lem:mode-est}  with $c=2$ to $y_1$ to see that for any $r\geq \frac{4}{p(y_1)} \ln(1/\delta_r) =  \frac{2^2}{(2-1)^2} \frac{1}{p(y_1)} \ln(1/\delta_r) $, 
we have 
\[\bP\paren{\hat p_r(y_1) <  p(y_1)/2} \leq \delta_r.\]
Hence
\[\bP({}^c A_1)\leq \sum_{r\geq  \frac{4}{p(y_1)} \ln(1/\delta_r)} \delta_r \leq \sum_{r\geq 1} \delta_r \leq \delta/6. \]

Similarly, we can apply the first statement of Lemma \ref{lem:mode-est}  with $c=2$ to $y_1$ to see that for any $r\geq\frac{4}{p(y_1)} \ln(1/\delta_r) \geq  \frac{2 +1}{(2-1)^2} \frac{1}{p(y_1)} \ln(1/\delta_r) $ and any $y\in\cY$, 
we have 
\[\bP\paren{\hat p_r(y) > 2 p(y_1)}\leq\bP\paren{\hat p_r(y_1) > 2 p(y_1)} \leq \delta_r.\]
Hence, using a union bound over all $y\in \cY$, we find that 
\begin{equation} 
     \bP\paren{\hat p_r(\hat y) <2 p(y_1)} \leq  m\delta_r \qquad \forall r\geq  \frac{4}{p(y_1)} \ln(1/\delta_r),
\end{equation}
thus 
\[\bP({}^c A_2)\leq \sum_{r\geq  \frac{4}{p(y_1)} \ln(1/\delta_r)} m\delta_r \leq \sum_{r\geq 1} m\delta_r \leq \delta/6. \]

We can now consider $A_3$. Let  $i\in \{1,\ldots,m\}$.
Applying\footnote{
    Note that we can also use Hoeffding inequality, which leads to
     \[
        \bP\paren{\hat p_r(y_i) \geq p(y_i) + \tilde{\sigma}_r} \leq \exp(-r\tilde{\sigma}_r^2/2)
    \]
    which is tighter when $p(y_i) > 1 - \tilde{\sigma}_r / 2$.
}  Lemma \ref{lem:chern-mul}, we get that if $r\geq \frac{4}{p(y_1)} \ln(1/\delta_r)$ (hence $\sqrt{\frac{4\ln(1/\delta_r)}{p(y_1)r}}\in (0,1)$), then 
\begin{align*}
\bP\left(\hat p_r(y_i) < p(y_i) - \sqrt{\frac{3  p(y_1)\ln(1/\delta_r)}{r}}\right) & = \bP\left(\hat p_r(y_i) < p(y_i) \left(1 - \sqrt{\frac{3p(y_1)\ln(1/\delta_r)}{p(y_i)^2r}}\right)\right) 
\\&\leq \exp\paren{-\frac{r p(y_i) 3p(y_1) \ln(1/\delta_r)}{2p(y_i)^2r}} 
\\& = \exp\paren{- \ln(1/\delta_r)\frac{3}{2}} 
\leq 
\delta_r.    
\end{align*}

Using this time the second inequality of Lemma \ref{lem:chern-mul}, and writing $\tilde{\sigma}_r = \sqrt{\frac{3 p(y_1)\ln(1/\delta_r)}{r}}$, we see that for any $r\geq 1$ we have 
\begin{align*}
&\bP\left(\hat p_r(y_i) > p(y_i) + \sqrt{\frac{3p(y_1)\ln(1/\delta_r)}{r}}\right) = \bP\left(\hat p_r(y_i) > p(y_i)\left(1 + \frac{\tilde{\sigma}_r}{p(y_i)}\right)\right)
\\&\leq \exp\paren{-\frac{r p(y_i)\frac{\tilde{\sigma}_r^2}{p(y_i)^2} }{2+\frac{\tilde{\sigma}_r}{p(y_i)}}} = \exp\paren{-\frac{r \tilde{\sigma}_r^2 }{2p(y_i)+\tilde{\sigma}_r}} 
\\&\leq \exp\paren{-\frac{r \tilde{\sigma}_r^2 }{2p(y_1)+\tilde{\sigma}_r}}.
\end{align*}
Furthermore, if $r\geq \frac{4}{p(y_1)} \ln(1/\delta_r)$,  then $\tilde{\sigma}_r = \sqrt{\frac{3 p(y_1)\ln(1/\delta_r)}{r}}\leq  \sqrt{\frac{3}{4}} p(y_1)$  
and consequently 
\[ \exp\paren{-\frac{r \tilde{\sigma}_r^2 }{2p(y_1)+\tilde{\sigma}_r}} 
\leq 
\exp\paren{-\frac{r \tilde{\sigma}_r^2 }{p(y_1)(2+ \sqrt{\frac{3}{4}} )}} 
\leq
\exp\paren{- \ln(1/\delta_r) \frac{3 }{2+ \sqrt{\frac{3}{4}} }} 
\leq \delta_r.\]

Combining those two bounds, we find that
\[\bP({}^c A_3)\leq \sum_{r\geq  \frac{4}{p(y_1)}\ln(1/\delta_r)} \sum_{i=1}^m \bP\left( |\hat p_r(y_i) - p(y_i) |> \sqrt{\frac{3p(y_1)\ln(1/\delta_r)}{r}}\right)  \leq   \sum_{r\geq 1}2 m\delta_r \leq \delta/3. \]

Let us now consider $A_4$.
We can apply Lemma \ref{lem:mode-est-crit} (with $c=24$) to see that if 
\[
    r\leq \frac{4}{p(y_1)} \ln(1/\delta_r) \leq \frac{2\cdot 24^2 - 24}{2\cdot 24 + 1 + \sqrt{1+8\cdot 24}} \frac{1}{p(y_1)}\ln(1/\delta_r),
\]
then 
\begin{align*}
 \bP(\sigma_r < \hat p_r(\hat y)) & =
 \bP(r\hat p_r(\hat y)> 24\ln(1/\delta_r)) 
 \leq \bP(r\hat p_r(\hat y_{all})> 24\ln(1/\delta_r))
 \leq
 m\delta_r.     
\end{align*}
Hence, using an union bound over all $r\leq \frac{4}{p(y_1)} \ln(1/\delta_r)$, we get that
\[\bP({}^c A_4)\leq \sum_{r\leq \frac{4}{p(y_1)} \ln(1/\delta_r)} m\delta_r  \leq \sum_{r\geq 1} m\delta_r \leq \delta/6. \]  
This ends the proof
\end{proof}

\eliminationtech*
\begin{proof}
For $r\geq 1$  such that Algorithm \ref{algo:elimination} has not yet terminated at the start of round $r$, let $S_r$ be the set of all classes that have been eliminated in the previous rounds (it is a random set), and let $
Q_r$ be the number of queries necessary to identify $Y_r$ according to the partition $  \{S_r\}\cup \brace{\{y\}\midvert y\in \cY \backslash S_r}$ using first the query $\ind{Y_r\in S_r}$, then a Huffman tree adapted to the (renormalized) empirical distribution $ \frac{\hat p_r}{1-\hat p_r (S_r)}$ on $\cY \backslash S_r$ if $Y_r \not \in S_r$. 
Hence $Q_r = 1$ if $Y_r\in S_r$, and as in Subsection \ref{proof:huffman},
\[Q_r \leq 1+  2 + 2\cdot\ind{\{\hat p_r(y) \neq 0\}} \log_2\left(\frac{1-\hat p_r (S_r)}{\hat p_r(y)}\right) + \ind{\{\hat p_r(y) = 0\}}  m\]
if $Y_r = y \in \cY\backslash S_r$.

For $y\in \cY\backslash\{y_1\}$, let  $r(y)$ be as above the smallest $r_0\in \bN$ such that
\[ r>  108 \frac{1}{(p(y_1)-p(y))^2}  p(y_1)\ln(1/\delta_r)\]
for all $r\geq r_0$, with the additional convention that $r(y_1) := r(y_2)$.
As seen in Lemma \ref{lem:elimination_time}, if $A_1,A_2,A_3$ and $A_4$ hold, then the class $y$ necessarily belongs to $S_r$ at the start of round $r$ as soon as $r> r(y)$, hence $1_{\{y\not \in S_r\}} \leq 1_{\{ r\leq r(y) \}}$.
Thus the conditional expectation of $Q_r$ with respect to the events $\{A_i\}_{i=1}^4$ satisfies
\begin{align*}
  \E \left[ Q_r | \{A_i\}_{i=1}^4 \right] & \leq 
  3 + \E \left[ \ind{\{\hat p_r(y) = 0\}}  m\midvert \{A_i\}_{i=1}^4 \right] 
  \\& + 2\E \left[  \ind{\{\hat p_r(y) \neq  0\}} 1_{\{y\not \in S_r\}} \log_2\left( \frac{1-\hat p_r(S_r)}{\hat p_r(y)}\right) \midvert \{A_i\}_{i=1}^4 \right] 
  \\& \leq 3 +  \E \left[ \ind{\{\hat p_r(y) = 0\}}  m\midvert \{A_i\}_{i=1}^4 \right]
  \\& +  2\E \left[ \ind{\{\hat p_r(y) \neq  0\}} 1_{\{ r\leq r(y) \}} \log_2\left( \frac{1}{\hat p_r(y)}\right) \midvert \{A_i\}_{i=1}^4 \right].
\end{align*}
Now let $\Omega$ be the probability space of all possible outcomes, and let $S(\omega)$ denote the (positive) random variable $\ind{\{\hat p_r(y) \neq  0\}} 1_{\{ r\leq r(y) \}} \log_2\left( \frac{1}{\hat p_r(y)}\right) $.
Then 
\begin{align*}
    \E \left[S(\omega) \midvert \{A_i\}_{i=1}^4 \right] 
    &= \frac{1}{\bP(\cap_{i=1}^4 A_i)} \int_{\cap_{i=1}^4 A_i} S(\omega)d_\omega 
    \\&\leq  \frac{1}{\bP(\cap_{i=1}^4 A_i)} \int_{\Omega} S(\omega)d_\omega
     = \frac{1}{\bP(\cap_{i=1}^4 A_i)} \E \left[S(\omega) \right].
\end{align*}
As $\bP(\cap_{i=1}^4 A_i)\geq 1-\delta$, this means that 
\begin{align*}
&\E \left[  \ind{\{\hat p_r(y)\neq  0\}} 1_{\{ r\leq r(y) \}} \log_2\left( \frac{1}{\hat p_r(y)}\right) \midvert \{A_i\}_{i=1}^4 \right] 
\\& \leq 
\frac{1}{(1-\delta)} \E \left[ \ind{\{\hat p_r(y) \neq  0\}}  1_{\{ r\leq r(y) \}}\log_2\left( \frac{1}{\hat p_r(y)}\right)  \right]
\\&  = \frac{1}{(1-\delta)} \sum_{y\in \cY} p(y) 1_{\{ r\leq r(y) \}} \E \left[ \ind{\{\hat p_r(y) \neq  0\}}  \log_2\left( \frac{1}{\hat p_r(y)}\right)  \right]
\end{align*}
We have already shown in Subsection 
\ref{proof:huffman} that for any $y$ such that $p(y)\neq 0$,
\[\E \left[ \ind{\{\hat p_r(y) \neq  0\}}  \log_2\left( \frac{1}{\hat p_r(y)}\right)  \right] \leq  \log_2(p(y)) + \frac{4}{p(y)r\ln(2)} + 2\exp\left(-\frac{rp(y)}{10}\right)\log_2(r).\]
As we have assumed that $A_1,A_2,A_3$ and $A_4$ hold, we know that  Algorithm \ref{algo:elimination} terminates at the end of some round $R \leq r(y_2)$. 
Let $T = \sum_{r=1}^{R} Q_r$ be the number of queries used to identify those $R$ samples.
Combining the results above, we find that
\begin{align*}
  &\E \left[ T_R | \{A_i\}_{i=1}^4 \right] 
  \leq 3R
  + \sum_{r=1}^R  \E \left[ \ind{\{\hat p_r(y) = 0\}}  m\midvert \{A_i\}_{i=1}^4 \right] 
 \\& +   2 \sum_{y\in \cY}\sum_{r=1}^{r(y)}\frac{p(y)}{(1-\delta)} \left( \abs{\log_2(p(y))} + \frac{4}{p(y)r\ln(2)}  + 2\exp\left(-\frac{rp(y)}{10}\right)\log_2(r)\right),
\end{align*}
where the terms in the sum are understood to be $0$ for those classes $y$ such that $p(y)=0$.
As $\ind{\{\hat p_r(y) = 0\}}$ can only be non-zero for a single index $r$ for each class $y$, 
\[\sum_{r=1}^R  \E \left[ \ind{\{\hat p_r(y) = 0\}}  m\midvert \{A_i\}_{i=1}^4 \right] \leq m^2.\]
Furthermore,
\[ \sum_{y\in \cY} \sum_{r=1}^{r(y)}\frac{4}{r\ln(2)} \leq  \frac{4m(\ln(R)+1)}{\ln(2)} \]
and 
\begin{align*}
\sum_{y\in \cY} \sum_{r=1}^{r(y)}p(y)  2\exp\left(-\frac{rp(y)}{10}\right)\log_2(r) 
& \leq 
2 \sum_{y\in \cY} \frac{p(y)}{1-e^{\frac{p(y)}{10}}} \log_2(R) \leq  22m \log_2(R) 
\end{align*}
due to $p\mapsto p / (1-e^{-\frac{p}{10}})$ being upper bounded by $11$ for $p\in[0,1]$.
Finally, using the fact that $ r(y_i) = 108\tilde{\Delta}_i^{-2}  p(y_1) \ln(1/\delta) + o(\ln(1/\delta))$ (as seen in Equation 
\ref{eq:elimination_time_in_proof}), we see that
\begin{align*}
\sum_{y\in \cY}\sum_{r=1}^{r(y)}p(y) \abs{\log_2(p(y))} 
&=\sum_{i=1}^m p(y_i) \abs{\log_2(p(y_i))}r(y_i) 
\\&= 108\sum_{i=1}^m p(y_i) \abs{\log_2(p(y_i))}\tilde{\Delta}_i^{-2}  p(y_1) \ln(1/\delta) + o(\ln(1/\delta)).     
\end{align*}
Thus 
\begin{align*}
  \E \left[ T_R | \{A_i\}_{i=1}^4 \right] & 
  \leq 3r(y_2) + \frac{216}{1-\delta}\sum_{i=1}^m p(y_i) \abs{\log_2(p(y_i))}\tilde{\Delta}_i^{-2}  p(y_1) \ln(1/\delta) + o(\ln(1/\delta))
 \\&\qquad + m^2 + 
  \frac{2}{1-\delta} \left( \frac{4m(\ln(r_2)+1)}{\ln(2)} +  22m \log_2(r_2) \right)
\\& = 324\tilde{\Delta}_2^{-2}  p(y_1) \ln(1/\delta) + 216\sum_{i=1}^m p(y_i) \abs{\log_2(p(y_i))}\tilde{\Delta}_i^{-2}  p(y_1) \ln(1/\delta) 
 \\&\qquad +o(\ln(1/\delta))
\end{align*}
This completes the proof.
\end{proof}

\subsection{Additional Proofs for Section \ref{sec:set-elim}}
\label{app:set-elim}

\setelimquery*
\begin{proof}
Under $(A_j)_{j\in[4]}$, we have the bound $ 1_{\{Y_{i,r}\not \in S_r\}}\leq  1_{\{r \leq r(Y_{i,r}) \}}$.
Let us add the convention that $r(y_1):= r(y_2)$.
We see that the algorithm necessarily terminates at the end of some round $R\leq r(y_1)$.
Going back to Equation \eqref{eq:set-elimination-first-bound-T}, we deduce that the conditional expectation of $T_r$ (for $2\leq r\leq R$) with respect to the events $\{A_i\}_{i=1}^4$ satisfies
\begin{align*}
  \E \bracket{ T_r \midvert (A_i)_{i\in[4]} } & \leq 
   \E \bracketd[\Big]{  \sum_{i=1}^{n_r} 1 + 1_{\{ r \leq r(Y_{i,r})\}} \left( 2\ceil{\log_2\left(\frac{10}{p(y_1)}\right)} + 1_{{}^c B_r} m \right)\Big\vert (A_i)_{i\in[4]} }
\end{align*}
Using the same simple integral argument as in the proof of Theorem \ref{thm:elimination}, we further see that as $\bP(\cap_{i=1}^4 A_i)\geq 1-\delta$, then
\begin{align*}
& \E \left[  \sum_{i=1}^{n_r} 1 + 1_{\{ r \leq r(Y_{i,r})\}} \left( 2\ceil{\log_2\left(\frac{10}{p(y_1)}\right)} + 1_{{}^c B_r} m \right)\midvert \{A_i\}_{i=1}^4 \right] 
\\ & \leq \frac{1}{1-\delta } \E \left[  \sum_{i=1}^{n_r} 1 + 1_{\{ r \leq r(Y_{i,r})\}} \left(2 \ceil{\log_2\left(\frac{10}{p(y_1)}\right)} + 1_{{}^c B_r} m\right) \right] .
\end{align*}
We can further write
\begin{align*}
   & \E \left[  \sum_{i=1}^{n_r} 1 + 1_{\{ r \leq r(Y_{i,r})\}} \left( 2\ceil{\log_2\left(\frac{10}{p(y_1)}\right)} + 1_{{}^c B_r} m \right) \right] 
   \\& \leq n_r +  n_r \bP({}^c B_r)m + 2n_r \sum_{y\in\cY}p(y) 1_{\{ r \leq r(y)\}}  \ceil{\log_2\left(\frac{10}{p(y_1)}\right)}
\end{align*}

We can now bound the expectation of the total number $T$ of queries required before the algorithm terminates at the end of round $R$:
\begin{align*}
(1-\delta)\E \left[ T \right] & \leq n_1 m+  \sum_{r=2}^R n_r + \sum_{r=2}^R n_r \bP({}^c B_r)m
\\& + \sum_{r=2}^R 2n_r \sum_{y\in\cY}p(y) 1_{\{ r \leq r(y)\}}  \ceil{\log_2\left(\frac{10}{p(y_1)}\right)}.
\end{align*}
Observe that $\sum_{r=2}^R n_r \leq 2n_R \leq 2n_{r(y_1)}$ and that 
\[n_1 m+   \sum_{r=2}^R n_r \bP({}^c B_r)m  \leq m(n_1 +   \sum_{r\in \bN} n_r \bP({}^c B_r)) \leq \tilde{C}\] 
for some constant $\tilde{C}>0$ independent from $\delta$ (using Equation \eqref{eq:bound_Brc}).
Furthermore,
\begin{align*}
    \sum_{r=2}^R 2n_r \sum_{y\in\cY}p(y) 1_{\{ r \leq r(y)\}}  \ceil{\log_2\left(\frac{10}{p(y_1)}\right)}  &\leq   \ceil{\log_2\left(\frac{10}{p(y_1)}\right)} \sum_{y\in\cY}  p(y) \sum_{r=2}^{r(y)} 2n_r   
    \\& \leq \ceil{\log_2\left(\frac{10}{p(y_1)}\right)} \sum_{y\in\cY}  p(y) 4n_{r(y)}.
\end{align*}
Combining those results and using Lemma \eqref{lem:bound-nry}, we finally see that
\begin{align*}
(1-\delta)\E \left[ T \right] & \leq \tilde{C} + 2 n_{r(y_1)}
 + \ceil{\log_2\left(\frac{10}{p(y_1)}\right)} \sum_{y\in\cY}  p(y) 4 n_{r(y_1)}
\\& \leq \tilde{C} + 432\frac{1}{(p(y_1)-p(y))^2} p(y_1) \ln(1/\delta_r) 
\\& + 864\sum_{y\in\cY \text{ s.t. } p(y)< p(y_1)/2}  p(y)  \frac{4}{p(y_1)^2} p(y_1)\ceil{\log_2\left(\frac{10}{p(y_1)}\right)} \ln(1/\delta_r)
\\& + 864\sum_{y\in\cY \text{ s.t. } p(y)\geq p(y_1)/2}  p(y) \frac{1}{(p(y_1)-p(y))^2} p(y_1)\ceil{\log_2\left(\frac{10}{p(y_1)}\right)}\ln(1/\delta_r),
\end{align*}
hence that 
\begin{align*}
&\E \left[ T \right] \leq 432\frac{1}{(p(y_1)-p(y_2))^2}  p(y_1) \ln(1/\delta) 
\\& + 864\sum_{y\in\cY \text{ s.t. } p(y)< p(y_1)/2}  p(y)  \frac{4}{p(y_1)^2} p(y_1) \ceil{\log_2\left(\frac{10}{p(y_1)}\right)}\ln(1/\delta)
\\& + 864\sum_{y\in\cY \text{ s.t. } p(y)\geq p(y_1)/2}  p(y) \frac{1}{(p(y_1)-p(y))^2} p(y_1)\ceil{\log_2\left(\frac{10}{p(y_1)}\right)}\ln(1/\delta) +o(\ln(1/\delta)).
\end{align*}
This ends the proof of the lemma.
\end{proof}

\end{appendix}

\begin{acks}[Acknowledgments]
The authors would like to thank Gilles Blanchard, Remi Jezequiel, Marc Jourdan, Nadav Merlis, and Karen Ullrich for fruitful discussions.
\end{acks}
\bibliographystyle{imsart-nameyear}
\bibliography{bibliography}

\end{document}